\newcommand{\cS}{{\mathcal{S}}}
\newcommand{\bc}{\textbf{c}}
\newcommand{\bDelta}{\pmb{\lambda}}
\newcommand{\bbR}{\mathbb{R}}
\newcommand{\bbE}{\mathbb{E}}
\newcommand{\tbbE}{\widetilde{\mathbb{E}}}
\theoremstyle{plain}
\newtheorem{theorem}{Theorem}[section]
\newtheorem{proposition}[theorem]{Proposition}
\newtheorem{lemma}[theorem]{Lemma}
\theoremstyle{definition}
\theoremstyle{remark}
\newcommand{\squishlist}{
 \begin{list}{$\bullet$}
  { \setlength{\itemsep}{0pt}
     \setlength{\parsep}{2pt}
     \setlength{\topsep}{2pt}
     \setlength{\partopsep}{0pt}
     \setlength{\leftmargin}{1.5em}
     \setlength{\labelwidth}{1em}
     \setlength{\labelsep}{0.5em} } }
\newcommand{\squishend}{
  \end{list}  }
\newif\ifnotes\notestrue
\def\htien#1{}
\author{%
  Hao Jiang,~Tien Mai,~Pradeep Varakantham and Huy Minh Hoang
  \\
  School of Computing and Information Systems\\
  Singapore Management Univerisity, Singapore\\
  \texttt{\{haojiang.2021@phdcs.,atmai@, pradeepv@, mhhoang@\}smu.edu.sg} \\
}
\title{Solving Richly Constrained Reinforcement Learning through State Augmentation and Reward Penalties}
\begin{document}

\maketitle

\begin{abstract}
Constrained Reinforcement Learning has been employed to compute safe policies through the use of expected cost constraints. The key challenge is in handling constraints on expected cost accumulated across time steps. Existing methods have developed innovative ways of converting this cost constraint over entire policy to constraints over local decisions (at each time step). While such approaches have provided good solutions with regards to objective, they can either be overly aggressive or conservative with respect to costs. This is owing to use of estimates for "future" or "backward" costs in local cost constraints.

To that end, we provide an equivalent unconstrained formulation to constrained RL that has an augmented state space and reward penalties. This intuitive formulation is general and has interesting theoretical properties. More importantly, this provides a new paradigm for solving richly constrained (e.g., constraints on expected cost, Value at Risk, Conditional Value at Risk) Reinforcement Learning problems effectively. As we show in our experimental results, we are able to outperform leading approaches for different constraint types on multiple benchmark problems.
\end{abstract}

\section{Introduction}

There are multiple objectives of interest when handling safety depending on the type of domain: (a) ensuring safety constraint is never violated; (b) ensuring safety constraint is not violated in expectation; (c) ensuring the chance of safety constraint violation is small (Value at Risk, VaR)~\cite{lucas1998extreme}; (d) ensuring the expected cost of violation is bounded (Conditional Value at Risk, CVaR)~\cite{rockafellar2000optimization, yang2021wcsac}; and others. One of the main models in Reinforcement Learning to ensure safety is Constrained RL, which employs objective (b) above. Our focus in this paper is also on Constrained RL but considering the four types of constraints mentioned above. 

Constrained RL problems are of relevance in domains that can be represented using an underlying Constrained Markov Decision Problem (CMDP)~\cite{altman1999constrained}. The main challenge in solving Constrained RL problems is the expected cost constraint, which requires averaging over multiple trajectories from the policy. Such problems have many applications including but not limited to: (a) electric self driving cars reaching destination at the earliest while minimizing the risk of getting stranded on the road with no charge; (b) robots moving through unknown terrains to reach a destination, while having a threshold on the average risk of passing through unsafe areas (e.g., a ditch). Broadly, they are also applicable to problems such as robot motion planning~\cite{1Robotics1, Robotics2, Robotics3}, resource allocation~\cite{Resource_allocation1, Resource_allocation2}, and financial engineering~\cite{finance, finance_2}. 

\noindent \textbf{\textit{Related Work:}} Many model free approaches have been proposed to solve Constrained RL problems. One of the initial approaches to be developed for addressing such constraints is the Lagrangian method~\cite{Cons_2}. However, such an approach does not provide either theoretical or empirical guarantees in ensuring the constraints are enforced.  To counter the issue of safety guarantees, next set of approaches focused on transforming the cost constraint over trajectories into cost constraint over individual decisions in many different ways. One such approach imposed surrogate constraints~\cite{surroogate_1, gabor1998multi} on individual state and action pairs. Since the surrogate constraints are typically stricter than the original constraint on the entire trajectory, they were able to provide theoretical guarantees on safety. However, the issue with such type of approaches is their conservative nature, which can potentially hamper the expected reward objective.  More recent approaches such as CPO (Constrained Policy Optimization)~\cite{cons_5}, Lyapunov~\cite{Cons_4}, BVF~\cite{satija2020constrained} have since provided more tighter local constraints (over individual decisions) and thereby have improved the state of art in guaranteeing safety while providing high quality solutions (with regards to expected reward).  In converting a trajectory based constraint to a local constraint, there is an estimation of cost involved in the trajectory. Due to such estimation, transformed cost constraints over individual decisions are error prone. In problems where the estimation is not close to the actual, results with such approaches with regards to cost constraint enforcement are poor (as we demonstrate in our experimental results). 

\textbf{Contributions:}

To that end, we focus on an approach that relies on exact accumulated costs (and not on estimated costs). In this paper, we make four key contributions:
\squishlist
\item We provide a re-formulation of the constrained RL problem through augmenting the state space with cost accumulated so far and also considering reward penalties when cost constraint is violated. This builds on the idea of augmented MDPs~\cite{hou2014revisiting} employed to solve Risk Sensitive MDPs.  {\em The key advantage of this reformulation is that by penalizing rewards (as opposed to the entire expected value that is done typically using Lagrangian methods), we get more fine grained control on how to handle the constraints.} 
\item We show theoretically that the reward penalties employed in the new formulation are not adhoc and 
can equivalently represent different {constraints} mentioned in the first paragraph of introduction, i.e. {risk-neural, chance constrained (or VAR)  and CVaR constraints}. 
\item We  modify existing RL methods (DQN and SAC) to solve the re-formulated RL problem with augmented state space and reward penalties. A key advantage is the knowledge of exact costs incurred so far (available within the state space) and this allows for assigning credit for cost constraint violations more precisely during learning compared to existing approaches. 
\item Finally, we demonstrate the utility of our approach by comparing against leading approaches for constrained RL on multiple benchmark problems for different types of constraints. We show that our approaches are able to  outperform leading Constrained RL approaches from the literature either with respect to expected value or in enforcing different types of cost constraints or both. 

\squishend

\section{Constrained Markov Decision Process}
A Constrained Markov Decision Process (CMDP) \cite{altman1999constrained} is defined using tuple $\left \langle  S, A, r, p, d, s_0, c_{max}\right \rangle$, where $S$ is set of states with initial state as $s_0$, $A$ is set of actions, $r: S\times A \rightarrow \mathbb{R}$ is reward with respect to each state-action pair, $p: S\times A \rightarrow P$ is transition probability of each state. $d: S \rightarrow d(S)$ is the cost function and $c_{max}$ is the maximum allowed cumulative
cost. {Here, we assume that $d(s)\geq 0$ for all $s\in S$. This assumption is not restrictive as one can always add positive amounts to $d(s)$ and $c_{max}$ to meet the assumption.}
The objective in a risk-neural CMDP is to compute a policy, $\pi: S\times A \rightarrow [0,1]$, which maximizes reward over a finite horizon $T$ while ensuring the cumulative cost does not exceed the maximum allowed cumulative cost. 
\begin{equation}\tag{\sf\small RN-CMDP}
\begin{aligned}
\label{equ:cmdp}
    &\max_\pi \mathbb{E}\left[\sum_{t=0}^T {\gamma^t}r(s_t,a_t)|s_0,\pi\right]
    \text{  s.t.  } \quad \mathbb{E}\left[\sum_{t=0}^T  d(s_t)|s_0,\pi\right]\leq c_{max}.
\end{aligned}
\end{equation}
The literature has seen other types of constraints, e.g., chance constraints requiring that $P_{\pi}(D(\tau)>c_{max}) \leq \alpha$ for a risk level $\alpha\in [0,1]$, or CVaR ones of the form $\bbE_{\pi}[(D(\tau)-c_{max})^+]\leq \beta$, where $D(\tau)=\sum_{s \in \tau}d(s)$ is the cumulative cost in trajectory $\tau$.  Handling different types of constraints would require different techniques. In the next section, we present our approach based on augmented state and reward penalties that assembles all the aforementioned constraint types into one single framework. 

\section{Cost Augmented Formulation for Safe RL}
We first present our extended MDP reformulation and provide several theoretical findings that connect our extended formula with different variants of CMDP. We focus on the case of single-constrained MDP. Extension to multiple-constrained MDP will be discussed in the appendix.
\subsection{Extended MDP Reformulation}
We introduce our approach to track the accumulated cost at each time period, which allows us to determine states that potentially lead to high-cost trajectories. To this end, let  us define a new MDP with an extended state space $\left \langle  \widetilde{S}, A, \widetilde{r}, \widetilde{p}, d, s_0, c_{max}\right \rangle$ where $\widetilde{S}= \{(s,c)|~ s \in S, c\in \mathbb{R}_+\}$. That is, each state $s'$ of the extended MDP includes an original state from $S$ and information about the accumulated cost. We then define the transition probabilities between states in the extended space.
\[
\widetilde{p}((s'_{t+1},c'_{t+1})|(s_t,c_t), a_t) = \begin{cases}
p(s'_{t+1}|s_t,a_t) &\text{ if }c'_{t+1} = c_t + d(s_t) \\
0 & \text{ otherwise}
\end{cases}
\]
and new rewards with penalties  
\begin{equation}\label{eq:new-rewards}
\widetilde{r}(a_t|(s_t,c_t)) = \begin{cases}
r(a_t|s_t) ~\text{ if } c_t \leq c_{max} &\text{ and }c_t+d(s_t)\leq c_{max} \\
r(a_t|s_t) - {\lambda (c_t+d(s_t))/\gamma^t} &{\text{ if } c_t \leq c_{max} \text{ and }c_t+d(s_t)> c_{max}} \\
{r(a_t|s_t) - \lambda d(s_t)/\gamma^t} &\text{ if } c_t > c_{max} 
\end{cases}
\end{equation}
where $\lambda$ is a positive scalar and $\lambda d(s_t)$ and $\lambda(c_t+d(s_t))$ are penalties given to the agent if the accumulated cost exceeds the upper bound $c_{max}$. In the second case of \eqref{eq:new-rewards} (for stages right before exceeding the upper bound $c_{max}$), we add a  penalty $\lambda c_t$ to capture the accumulated cost until those stages. 
Under the reward penalties specified in the second and the third cases of \eqref{eq:new-rewards}, the accumulated reward for each trajectory $\tau = \{(s_0,a_0),\ldots,(s_T,a_T)\}$ can be written as  $\widetilde{R}(\tau) = \sum_{t} \gamma^t r(a_t|s_t)$ if $D(\tau)\leq c_{max}$  and $\widetilde{R}(\tau)  = \sum_{t} \gamma^t r(a_t|s_t) - \lambda D(\tau)$ if $D(\tau)>c_{max}$, where  $D(\tau)$ is the total cost of trajectory $\tau$, i.e., $D(\tau) = \sum_{s_t\in \tau} d(s_t)$. 

As can be seen from the reward definition, we penalize \textit{every trajectory} that violates the cost constraint. This allows for the fine grained control that ensures cost constraint is enforced correctly, while also allowing for expected reward maximization. Overall, we have the following unconstrained objective which handles the constraints in a relaxed manner through penalties:
\begin{equation}\tag{\sf\small EMDP}
\begin{aligned}
\label{equ:umdp}
    &\max_\pi \mathbb{E}\left[\sum_{t=0}^T {\gamma^t}\widetilde{r}(a_t|(s_t,c_t))\Big|(s_0,c_0),\pi\right]
\end{aligned}
\end{equation}
where $c_0=0$. 
There are also other ways to penalize the rewards, allowing us to establish equivalences between the extended MDP to other risk-averse CMDP, which we will discuss later in the next section.

\subsection{Theoretical   Properties}

To demonstrate the generality in the representation of the reward penalties along with state augmentation in the unconstrained MDP \eqref{equ:umdp}, we provide theoretical properties that map \eqref{equ:umdp} to CMDP under different types of constraints (expected cost, VaR, CVaR, Worst-case cost):
\squishlist
    \item[(i)]  Proposition \ref{prop:EMDP-UCMDP} states that if the penalty parameter $\lambda = 0$, then \eqref{equ:umdp} becomes the classical unconstrained MDP.
    \item[(ii)] Theorem \ref{th:UMDP-WCMDP}  shows that if $\lambda = \infty$, then \eqref{equ:umdp} is equivalent to a worst-case constrained MDP 
    \item[(iii)] Theorem~\ref{th:RN-CMDP} establishes a lower bound on $\lambda$ from which any solution to \eqref{equ:umdp} will satisfy the risk-neural constraint in \eqref{equ:cmdp}.
    \item[(iv)] Theorem~\ref{th:VAR-CMDP}  connects \eqref{equ:umdp} with chance-constrained MDP by providing a lower bound for $\lambda$ from which any solution to \eqref{equ:umdp} will satisfy a VaR constraint $P(\sum_{t} d(s_t) \leq c_{max}) \leq \alpha$. 
    \item[(v)] Theorems~\ref{th:VAR-CMDP} and  \ref{th:CVAR} further strengthen the above results by showing that, under some different reward settings, \eqref{equ:umdp} is equivalent to a chance-constrained (or VaR) or equivalent to a CVaR CMDP.
\squishend
We now describe our theoretical results in detail. All the proofs can be found in the appendix.  We also extend the results to Constrained MDPs with multiple constraints (e.g., a combination of expected cost on one cost measure and CVaR on another cost measure) in the appendix.
We first state, in Proposition \ref{prop:EMDP-UCMDP}, a quite obvious result saying that if we set the penalty parameter $\lambda = 0$, then the MDP with augmented state space becomes the original unconstrained MDP. 
\begin{proposition}\label{prop:EMDP-UCMDP}
    If $\lambda = 0$,  then  \eqref{equ:umdp} is equivalent to the unconstrained MDP $\max_\pi \mathbb{E}\left[\sum_{t=0}^T {\gamma^t}r(s_t,a_t)|s_0,\pi\right]$. 
\end{proposition}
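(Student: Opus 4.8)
The plan is to show that setting $\lambda = 0$ makes the augmented reward collapse onto the original reward, after which the accumulated-cost coordinate of the extended state becomes irrelevant to the objective. First I would substitute $\lambda = 0$ directly into \eqref{eq:new-rewards}: in all three cases the penalty terms $\lambda(c_t + d(s_t))/\gamma^t$ and $\lambda d(s_t)/\gamma^t$ vanish, so $\widetilde{r}(a_t|(s_t,c_t)) = r(a_t|s_t)$ uniformly, independent of the cost coordinate $c_t$. Hence the EMDP objective in \eqref{equ:umdp} reduces to $\max_{\widetilde\pi}\, \mathbb{E}\!\left[\sum_{t=0}^T \gamma^t r(a_t\mid s_t)\,\middle|\,(s_0,0),\widetilde\pi\right]$.

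Next I would match the induced trajectory distributions. By construction $c_0 = 0$ and $\widetilde{p}$ forces $c_{t+1} = c_t + d(s_t)$ deterministically, so the cost coordinate is a deterministic function of the realized state history $(s_0,\ldots,s_t)$; moreover the $s$-marginal of $\widetilde{p}$ is exactly $p(s'_{t+1}\mid s_t,a_t)$. Consequently, to any policy $\pi$ on $S$ I can associate the lifted policy $\widetilde\pi((s,c)) = \pi(s)$, and the two induce identical laws on the $s$-component of the trajectory; since the objective now depends only on $r(a_t\mid s_t)$, the two objective values coincide. This already gives $\max_{\widetilde\pi} \geq \max_\pi$ for the original unconstrained MDP.

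The one point requiring care — and the only place the argument is more than bookkeeping — is the reverse inequality, because the extended policy class is strictly larger: an extended policy may condition its action on the accumulated cost $c_t$. I would argue this extra conditioning confers no advantage, since neither the reward nor the $s$-transition depends on $c_t$; thus any extended policy induces an $s$-trajectory law realizable by a (possibly history-dependent) policy on $S$, and by the standard existence of optimal Markovian policies for finite-horizon MDPs the supremum over extended policies is attained by a policy that ignores $c$. This yields $\max_{\widetilde\pi} \leq \max_\pi$ and hence equality, establishing the claimed equivalence. I expect this policy-class reduction to be the main (though routine) obstacle; everything else is direct substitution.
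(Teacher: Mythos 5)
Your proof is correct. Note that the paper offers no proof of this proposition at all --- it is dismissed in the text as ``a quite obvious result'' and does not appear in the appendix of missing proofs --- so there is nothing to compare against; your argument (the penalties vanish uniformly at $\lambda=0$, after which the cost coordinate is both payoff-irrelevant and irrelevant to the $s$-marginal of $\widetilde{p}$, so the enlarged policy class confers no advantage) is the natural way to make the omitted argument rigorous, and you correctly identify the policy-class reduction, via the observation that $c_t$ is a deterministic function of the state history and that finite-horizon MDPs admit optimal Markovian policies, as the only step that is more than direct substitution.
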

It can be seen that increasing $\lambda$ will set more penalties to trajectories whose costs exceed the maximum cost allowed $c_{\max}$, which also implies that \eqref{equ:umdp} would lower the probabilities of taking these trajectories. So, intuitively, if we raise $\lambda$ to infinity, then \eqref{equ:umdp} will give policies  that yield \textit{zero} probabilities to violating trajectories. We state this result in Theorem \ref{th:UMDP-WCMDP} below. 
\begin{theorem}[Connection to worst-case CMDP]
\label{th:UMDP-WCMDP}
If we set $\lambda = \infty$, then if $\pi^*$ solves  \eqref{equ:umdp}, it also solves the following worst-case constrained MDP problem
\begin{equation}\tag{\sf\small  WC-CMDP}
\begin{aligned}
\label{equ:rmdp}
    &\max_\pi \mathbb{E}\left[\sum_{t=0}^T {\gamma^t}r(s_t,a_t)|s_0,\pi\right] \text{ s.t. }\quad \sum_{s_t \in \tau } d(s_t)\leq c_{max},~\forall \tau\sim \pi.
\end{aligned}
\end{equation}
As a result, $\pi^*$ is feasible to the risk-neural CMDP \eqref{equ:cmdp}.  
\end{theorem}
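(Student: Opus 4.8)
The plan is to rewrite the \eqref{equ:umdp} objective as the original discounted return minus an explicit penalty term, and then argue that sending $\lambda\to\infty$ forces the optimizer onto the set of policies whose trajectories are \emph{almost surely} feasible. Using the per-trajectory reward identity already derived above, I would write the \eqref{equ:umdp} value of a policy $\pi$ started from $c_0=0$ as
$$
J_\lambda(\pi)=\mathbb{E}_\pi\!\Big[\sum_t\gamma^t r(a_t|s_t)\Big]-\lambda\,\mathbb{E}_\pi\big[D(\tau)\,\mathbf{1}\{D(\tau)>c_{max}\}\big],
$$
where the first term is bounded (finite horizon, bounded rewards). Writing the penalty factor as $\delta_\pi:=\mathbb{E}_\pi[D(\tau)\mathbf{1}\{D(\tau)>c_{max}\}]$, I would first record the key sign fact: since $d(s)\ge 0$ and $c_{max}\ge 0$, on the event $\{D(\tau)>c_{max}\}$ we have $D(\tau)>0$, so $\delta_\pi>0$ precisely when $\pi$ assigns positive probability to at least one constraint-violating trajectory, whereas $\delta_\pi=0$ exactly when $\pi$ is feasible for \eqref{equ:rmdp}, i.e. $D(\tau)\le c_{max}$ for $\pi$-almost every $\tau$.

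Next I would make the substitution $\lambda=\infty$ precise by passing to the limiting objective $J_\infty(\pi)=\lim_{\lambda\to\infty}J_\lambda(\pi)$, which equals $\mathbb{E}_\pi[\sum_t\gamma^t r(a_t|s_t)]$ when $\delta_\pi=0$ and equals $-\infty$ when $\delta_\pi>0$. Assuming \eqref{equ:rmdp} is feasible (at least one policy has $\delta_\pi=0$), any maximizer $\pi^*$ of $J_\infty$ must lie in the feasible set, since every infeasible policy attains $-\infty$ while feasible policies attain finite value. Restricted to the feasible set, $J_\infty$ coincides with the ordinary discounted return, so maximizing $J_\infty$ is identical to maximizing $\mathbb{E}_\pi[\sum_t\gamma^t r(a_t|s_t)]$ subject to $D(\tau)\le c_{max}$ almost surely, which is exactly \eqref{equ:rmdp}. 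Equivalently, I could run the argument along a sequence $\lambda_n\to\infty$ and show optimal values and optimizers converge to those of \eqref{equ:rmdp}. The feasibility claim for \eqref{equ:cmdp} is then immediate: if $\sum_{s_t\in\tau}d(s_t)\le c_{max}$ holds $\pi^*$-almost surely, taking expectations and using monotonicity gives $\mathbb{E}_{\pi^*}[\sum_t d(s_t)]\le c_{max}$.

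The main obstacle is handling $\lambda=\infty$ rigorously rather than heuristically. Two points need care. First, the statement is only meaningful when \eqref{equ:rmdp} is feasible, so I would either assume a feasible policy exists or observe that $J_\infty\equiv-\infty$ otherwise, making the claim vacuous. Second, if one insisted instead on a \emph{finite} threshold $\bar\lambda$ beyond which the \eqref{equ:umdp} optimizer is already WC-feasible, the infimum of $\delta_\pi$ over infeasible policies may be zero (the violating probability mass can be pushed arbitrarily small while keeping reward high), so no single finite $\bar\lambda$ need work uniformly across all policies. The extended-real-valued limiting formulation $J_\infty$ sidesteps this cleanly, which is why I would phrase $\lambda=\infty$ as the limit objective and avoid claiming that a uniform finite penalty suffices.
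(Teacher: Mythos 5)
Your proposal is correct and follows essentially the same route as the paper's proof: the same decomposition of the \eqref{equ:umdp} objective into the discounted return minus $\lambda\,\widetilde{\bbE}_{\pi}[D(\tau)\mid D(\tau)>c_{max}]$, followed by the observation that infinite $\lambda$ forces any maximizer onto the set of policies assigning zero probability to violating trajectories (assumed non-empty), on which the objective reduces to \eqref{equ:rmdp}. Your extra care in formalizing $\lambda=\infty$ as an extended-real-valued limiting objective, and your remark that no finite uniform threshold exists, are refinements of rather than departures from the paper's argument.
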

The above theorem implies that if we set the penalties to be very large (e.g., $\infty$), then all the trajectories generated by the optimal policy $\pi^*$ will satisfy the constraint, i.e., the accumulated cost will not exceed $c_{max}$. Such a conservative policy would be useful in critical environments where the agent is strictly not allowed to go beyond the maximum allowed cost $c_{max}$. An example would be a routing problem for electrical cars where the remaining energy needs not become empty before reaching a charging station or the destination.  
Note that the worst-case CMDP \eqref{equ:rmdp} would be \textit{non-stationary} and \textit{history-dependent}, i.e., there would be no stationary and history-independent policies being optimal for the worst-case CMDP \eqref{equ:rmdp}. This remark is obviously seen, as at a stage, one needs to consider the current accumulated cost to make feasible actions. Thus, a policy that ignores the historical states and actions would be not optimal (or even not feasible) for the worst-case MDP.
As a result, this worst-case CMDP can not be presented by a standard constrained MDP formulation.  

Theorem \ref{th:UMDP-WCMDP} also tells us that one can get a feasible solution to the risk-neural CMDP
 \eqref{equ:cmdp} by just raising $\lambda$ to infinity. In fact, $\lambda$ does not need to be infinite to achieve feasibility. 
 Below we establish a lower bound for the penalty parameter $\lambda$ such that a solution to \eqref{equ:umdp} is always feasible to the risk-neural CMDP \eqref{equ:cmdp}. Let us define $\Psi^*$ as the optimal value of the unconstrained MDP problem 
 $$\Psi^*  = \max_\pi \mathbb{E}\left[\sum_{t=0}^T {\gamma^t}r(s_t,a_t)|s_0,\pi\right].$$  
 and $\overline{\Psi}$ be the optimal value of the worst-case CMDP \eqref{equ:rmdp}. We define a conditional expectation  $\widetilde{\bbE}_{\pi}\left[D(\tau)|~ D(\tau)\leq c_{\max}\right]$ as the expected cost over trajectories whose costs are less than $c_{max}$
 \[
\widetilde{\bbE}_{\pi}\left[D(\tau)|~ D(\tau)\leq c_{\max}\right] = \sum_{\tau|~ D(\tau)\leq c_{max}} P_{\pi}(\tau) D(\tau)
 \]
 where $P_{\pi}(\tau)$ is the probability of $\tau$ under policy $\pi$. Before presenting the bound, we first need two lemmas. Lemma \ref{lm:lm1}
 establishes a condition under which a policy $\pi$ is feasible to the RN-CMDP. 
\begin{lemma}
\label{lm:lm1}
Let $\phi^* = c_{max} -\max_{\pi}\left\{\tbbE_{\pi}[D(\tau)|~ D(\tau) \leq  c_{max}]\right\}$. 
Given  any policy $\pi$,  if $\widetilde{\bbE}_{\pi}[D(\tau)|~ D(\tau) > c_{max}]\leq \phi^*$, then
$\bbE_{\pi}[D(\tau)]\leq c_{max}$.
\end{lemma}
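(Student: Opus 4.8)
The plan is to exploit the fact that the two quantities appearing in the statement are \emph{partial} (unnormalized) expectations rather than genuine conditional expectations: by the definition given just above the lemma, $\tbbE_{\pi}[D(\tau)\mid D(\tau)\le c_{max}]=\sum_{\tau:\,D(\tau)\le c_{max}}P_{\pi}(\tau)D(\tau)$, and likewise the analogous sum runs over the complementary event $\{D(\tau)>c_{max}\}$. Since every trajectory falls into exactly one of these two events, these two partial expectations add up \emph{exactly} to the full expectation, so the first step is simply to record the decomposition
\begin{equation}
\bbE_{\pi}[D(\tau)] = \tbbE_{\pi}[D(\tau)\mid D(\tau)\le c_{max}] + \tbbE_{\pi}[D(\tau)\mid D(\tau)> c_{max}].
\end{equation}

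Next I would bound the two summands separately. For the first term, by the very definition of a maximum over policies we have $\tbbE_{\pi}[D(\tau)\mid D(\tau)\le c_{max}]\le \max_{\pi'}\{\tbbE_{\pi'}[D(\tau)\mid D(\tau)\le c_{max}]\}$, and this maximum equals $c_{max}-\phi^*$ by the definition of $\phi^*$ in the statement. For the second term, the hypothesis of the lemma directly gives $\tbbE_{\pi}[D(\tau)\mid D(\tau)> c_{max}]\le \phi^*$. Substituting both bounds into the decomposition yields $\bbE_{\pi}[D(\tau)]\le (c_{max}-\phi^*)+\phi^* = c_{max}$, which is the desired conclusion.

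There is no substantive analytic obstacle here; the argument is essentially one line of additive bookkeeping once the decomposition is in place. The only point requiring genuine care is interpretational rather than technical: one must recognize that the notation $\tbbE_{\pi}[\,\cdot\mid\,\cdot\,]$ denotes the truncated expectation summed without dividing by the event probability, so that the two pieces recombine cleanly into $\bbE_{\pi}[D(\tau)]$. Were these instead normalized conditional expectations, the decomposition would carry probability weights $P_{\pi}(D(\tau)\le c_{max})$ and $P_{\pi}(D(\tau)>c_{max})$ and the bound would not close as stated, so I would flag this convention explicitly at the start of the proof to keep the accounting correct.
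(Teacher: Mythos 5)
Your proof is correct and is essentially identical to the paper's: both arguments rest on the observation that $\tbbE_{\pi}[\cdot\mid\cdot]$ is an unnormalized truncated expectation, so the two pieces sum exactly to $\bbE_{\pi}[D(\tau)]$, after which the hypothesis and the definition of $\phi^*$ bound each piece and the terms telescope to $c_{max}$. Your explicit flagging of the normalization convention is a worthwhile clarification but does not change the substance of the argument.
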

Lemma \ref{lm:lm2} below further provides an upper bound for the expected cost of violating trajectories under an optimal policy given by the extended MDP reformulation \eqref{equ:umdp}. 
\begin{lemma}
\label{lm:lm2}
Given $\lambda>0$, let $\pi^*$ be an optimal solution to \eqref{equ:umdp}. We have
\[
 \tbbE_{\pi^*}\left[D(\tau)|~ D(\tau)>c_{max}\right] \leq \frac{\Psi^*-\overline{\Psi}}{\lambda}.
\]
\end{lemma}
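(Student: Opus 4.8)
The plan is to exploit the fact that the \eqref{equ:umdp} objective decomposes cleanly into the unpenalized return plus a penalty that is supported only on violating trajectories, and then to use the worst-case-optimal policy as a feasible competitor against $\pi^*$. Writing $R(\tau) = \sum_t \gamma^t r(a_t|s_t)$ for the unpenalized discounted return, the trajectory-level reward identity stated just after \eqref{eq:new-rewards} gives that the \eqref{equ:umdp} value of any policy $\pi$ is
\[
V(\pi) = \bbE_\pi[R(\tau)] - \lambda \, \tbbE_\pi[D(\tau)\mid D(\tau) > c_{max}],
\]
where the restricted expectation is exactly the unnormalized sum $\sum_{\tau:\, D(\tau)>c_{max}} P_\pi(\tau) D(\tau)$ in the paper's $\tbbE$ convention. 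This first step is essentially just bookkeeping across the three cases of \eqref{eq:new-rewards}.

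Next I would introduce the worst-case-optimal policy $\overline{\pi}$ attaining $\overline{\Psi}$ in \eqref{equ:rmdp} as the comparator. The crucial observation is that every trajectory generated by $\overline{\pi}$ satisfies $D(\tau) \leq c_{max}$ by feasibility in \eqref{equ:rmdp}, so its penalty term vanishes and $V(\overline{\pi}) = \bbE_{\overline{\pi}}[R(\tau)] = \overline{\Psi}$. Since $\pi^*$ is optimal for \eqref{equ:umdp}, it must do at least as well as this comparator, i.e.\ $V(\pi^*) \geq V(\overline{\pi}) = \overline{\Psi}$.

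Finally, inserting the decomposition for $\pi^*$ into this inequality and rearranging yields
\[
\lambda \, \tbbE_{\pi^*}[D(\tau) \mid D(\tau) > c_{max}] \leq \bbE_{\pi^*}[R(\tau)] - \overline{\Psi} \leq \Psi^* - \overline{\Psi},
\]
where the last step uses $\bbE_{\pi^*}[R(\tau)] \leq \Psi^*$ by the definition of $\Psi^*$ as the unconstrained optimum. Dividing through by $\lambda > 0$ gives the claim. The argument is short, and the only point requiring genuine care is the comparator step: one must verify that $\overline{\pi}$ produces no violating trajectories, so that its \eqref{equ:umdp} value coincides exactly with $\overline{\Psi}$. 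It is precisely the vanishing of the penalty for $\overline{\pi}$ that converts the optimality of $\pi^*$ into the stated bound, so I do not anticipate any obstacle beyond confirming this decomposition and the feasibility of the comparator.
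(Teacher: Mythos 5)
Your proposal is correct and follows essentially the same route as the paper's proof: decompose the \eqref{equ:umdp} objective into the unpenalized return minus $\lambda$ times the unnormalized violating-trajectory cost, compare $\pi^*$ against the worst-case-optimal policy $\overline{\pi}$ (whose penalty term vanishes, so its value is exactly $\overline{\Psi}$), and bound $\bbE_{\pi^*}[R(\tau)]$ by $\Psi^*$. Your explicit remark that one must verify $\overline{\pi}$ produces no violating trajectories is in fact the one step the paper leaves implicit.
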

Using Lemmas \ref{lm:lm1} and \ref{lm:lm2}, we are  ready to state the main result in Theorem \ref{th:RN-CMDP} below. 
\begin{theorem}[Connection to the risk-neural CMDP]
\label{th:RN-CMDP}
For any $\lambda \geq \frac{\Psi^*-\overline{\Psi}}{\phi^*}$, 
a solution to  \eqref{equ:umdp} is always feasible to the RN-CMDP \eqref{equ:cmdp}. 
\end{theorem}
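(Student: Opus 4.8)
The plan is to derive \Cref{th:RN-CMDP} directly by chaining the two preceding lemmas, since \Cref{lm:lm2} controls the partial expected cost of the violating trajectories and \Cref{lm:lm1} converts exactly such a bound into feasibility for \eqref{equ:cmdp}. Once both lemmas are in hand the argument is a short quantitative comparison of scalars; no new machinery is required, so the substance of the result really lives in the two lemmas.

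Before substituting I would check that the stated lower bound is well posed. Because \eqref{equ:rmdp} is the unconstrained objective $\Psi^*$ subject to the extra requirement $\sum_{s_t\in\tau}d(s_t)\le c_{max}$ for every $\tau\sim\pi$, its feasible set is a subset of the policies considered for $\Psi^*$, so $\overline{\Psi}\le\Psi^*$ and the numerator $\Psi^*-\overline{\Psi}$ is nonnegative. Likewise, since $\tbbE_\pi[D(\tau)\mid D(\tau)\le c_{max}]=\sum_{\tau:\,D(\tau)\le c_{max}}P_\pi(\tau)D(\tau)\le c_{max}$ for every $\pi$, the quantity $\phi^*=c_{max}-\max_\pi\tbbE_\pi[D(\tau)\mid D(\tau)\le c_{max}]$ is nonnegative, and I would note it is strictly positive outside the degenerate case in which some policy concentrates all its probability on trajectories of cost exactly $c_{max}$; thus dividing by $\phi^*$ is legitimate.

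Next I would fix $\lambda\ge(\Psi^*-\overline{\Psi})/\phi^*$ (with $\lambda>0$, as required by \Cref{lm:lm2}) and let $\pi^*$ be any optimal policy for \eqref{equ:umdp}. Inverting the hypothesis, $\lambda\ge(\Psi^*-\overline{\Psi})/\phi^*$ together with $\phi^*>0$ and $\Psi^*-\overline{\Psi}\ge 0$ gives $(\Psi^*-\overline{\Psi})/\lambda\le\phi^*$. Applying \Cref{lm:lm2} to $\pi^*$ then yields
\[
\tbbE_{\pi^*}\bigl[D(\tau)\mid D(\tau)>c_{max}\bigr]\le\frac{\Psi^*-\overline{\Psi}}{\lambda}\le\phi^*.
\]
This is precisely the hypothesis of \Cref{lm:lm1} with $\pi=\pi^*$, whose conclusion is $\bbE_{\pi^*}[D(\tau)]\le c_{max}$; that is, $\pi^*$ satisfies the expected-cost constraint of \eqref{equ:cmdp}, which is the claim.

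The main obstacle is not the chaining but securing the edge conditions that keep the bound meaningful, namely $\Psi^*-\overline{\Psi}\ge 0$ and $\phi^*>0$. The former is just monotonicity of the optimum under an added constraint, justified as above. The latter is the only genuinely delicate point: one must rule out (or explicitly exclude) the degenerate situation $\phi^*=0$, in which the lower bound is infinite and the statement becomes vacuous, and one should also keep track that $\lambda>0$ is needed to invoke \Cref{lm:lm2}. Provided these positivity facts are in place, the two lemmas combine mechanically to give feasibility.
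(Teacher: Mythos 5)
Your proposal is correct and follows essentially the same route as the paper: apply Lemma \ref{lm:lm2} to bound $\tbbE_{\pi^*}[D(\tau)\mid D(\tau)>c_{max}]$ by $(\Psi^*-\overline{\Psi})/\lambda\le\phi^*$, then invoke Lemma \ref{lm:lm1} to conclude feasibility. Your additional care about the sign conditions $\Psi^*\ge\overline{\Psi}$ and $\phi^*>0$ is a reasonable refinement the paper leaves implicit, but it does not change the argument.
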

To prove Lemmas \ref{lm:lm1}, \ref{lm:lm2}, we leverage that objective of \eqref{equ:umdp} can be written equivalently as
\begin{equation}
 \label{eq:new-obj}
 \bbE_{\pi}\left[\sum_{t}\gamma^t r(s_t,a_t)\right] - \lambda\widetilde{\bbE}_{\pi}\left[D(\tau)|~ D(\tau)> c_{\max}\right]
\end{equation}
which allows us to establish a relation between $\lambda$ and $\widetilde{\bbE}_{\pi^*}\left[D(\tau)|~ D(\tau)> c_{\max}\right]$, where $\pi^*$ is an optimal policy of \eqref{equ:umdp}. The bounds then come from this relation. We refer the reader to the appendix for detailed proofs. 
 
There is also a lower bound for $\lambda$ from which  any solution to \eqref{equ:umdp} always satisfies a chance constraint (or VaR). To state this result, we define the following VaR CMDP, for risk level $\alpha\in[0,1]$. 
\begin{equation}\tag{\sf\small VaR-CMDP}
\begin{aligned}
\label{equ:var-mdp}
      &\max_\pi \mathbb{E}\left[\sum_{t=0}^T {\gamma^t}r(s_t,a_t)|s_0,\pi\right] \text{  s.t.  } \quad  P_{\pi}\Big[ D(\tau)>c_{\max}\Big] \leq \alpha.
\end{aligned}
\end{equation}
We have the following theorem showing a  connection between \eqref{equ:umdp} and the VaR CMDP above.
\begin{theorem}[Connection to VaR CMDP]
\label{th:VAR-CMDP}
For any $\lambda \geq {(\Psi^*-\overline{\Psi})/}{(\alpha c_{max})
}$,
a solution to  \eqref{equ:umdp} is always feasible to  \eqref{equ:var-mdp}.   
\end{theorem}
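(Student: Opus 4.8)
The plan is to build directly on Lemma~\ref{lm:lm2}, treating the bound it supplies as the numerator of a Markov-type inequality. Let $\pi^*$ be any optimal solution to \eqref{equ:umdp}. The starting observation is that the quantity $\tbbE_{\pi^*}[D(\tau)\,|\,D(\tau)>c_{max}]$ appearing in Lemma~\ref{lm:lm2} is the \emph{unnormalized} sum $\sum_{\tau:\,D(\tau)>c_{max}} P_{\pi^*}(\tau)\,D(\tau)$, following the same convention used earlier in the excerpt, so no conditional normalization needs to be carried through the argument.

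First I would apply a Markov-type lower bound to this sum. Since every trajectory in the summation range satisfies $D(\tau)>c_{max}$, replacing each $D(\tau)$ by the smaller constant $c_{max}$ can only decrease the sum, giving
\begin{equation*}
\tbbE_{\pi^*}[D(\tau)\,|\,D(\tau)>c_{max}] = \sum_{\tau:\,D(\tau)>c_{max}} P_{\pi^*}(\tau)\,D(\tau) \geq c_{max}\sum_{\tau:\,D(\tau)>c_{max}} P_{\pi^*}(\tau) = c_{max}\,P_{\pi^*}\big[D(\tau)>c_{max}\big].
\end{equation*}
Next I would chain this with Lemma~\ref{lm:lm2}, which bounds the same quantity from above by $(\Psi^*-\overline{\Psi})/\lambda$. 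Combining the two inequalities and dividing through by $c_{max}>0$ yields
\begin{equation*}
P_{\pi^*}\big[D(\tau)>c_{max}\big] \leq \frac{\Psi^*-\overline{\Psi}}{\lambda\,c_{max}}.
\end{equation*}
Finally, substituting the hypothesis $\lambda \geq (\Psi^*-\overline{\Psi})/(\alpha c_{max})$ into the right-hand side collapses the bound to exactly $\alpha$, establishing the VaR constraint $P_{\pi^*}[D(\tau)>c_{max}]\leq\alpha$ and hence feasibility for \eqref{equ:var-mdp}.

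I expect the only genuine subtlety to be bookkeeping about the direction of the Markov step together with the unnormalized convention for $\tbbE$; everything else is an immediate consequence of Lemma~\ref{lm:lm2}. One point I would check carefully is the degenerate case $\Psi^*=\overline{\Psi}$, where the lower bound on $\lambda$ is $0$ and the argument still goes through since the violation probability is then forced to $0$. I would also note that the bound is vacuous only when $\alpha c_{max}=0$, which is excluded by the problem data since $\alpha>0$ and $c_{max}>0$ under the standing assumptions.
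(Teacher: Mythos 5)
Your proposal is correct and follows essentially the same route as the paper's own proof: both invoke Lemma~\ref{lm:lm2} for the upper bound $\tbbE_{\pi^*}[D(\tau)\,|\,D(\tau)>c_{max}]\leq (\Psi^*-\overline{\Psi})/\lambda$ and then lower-bound that same unnormalized sum by $c_{max}\,P_{\pi^*}(D(\tau)>c_{max})$ to conclude. The only cosmetic difference is that you add explicit remarks on the degenerate cases, which the paper omits.
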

We also leverage Eq. \ref{eq:new-obj} to prove the theorem by showing that when $\lambda$ is sufficiently large, the conditional expectation $\widetilde{\bbE}_{\pi^*}\left[D(\tau)|~ D(\tau)> c_{\max}\right]$ can be bounded from  above ($\pi^*$ is an optimal policy of \eqref{equ:umdp}). We then can link this to the chance constraint by noting that  $\widetilde{\bbE}_{\pi^*}\left[D(\tau)|~ D(\tau)> c_{\max}\right] \geq c_{max} P(D(\tau)>c_{\max})$. 

Theorem \ref{th:VAR-CMDP} tells us that one can just raise $\lambda$ to a sufficiently large value to meet a chance constraint of any risk level. 
Here, Theorem \ref{th:VAR-CMDP} only guarantees feasibility to \eqref{equ:var-mdp}. Interestingly, if we modify the reward penalties by making them independent of the costs $d(s)$, than an equivalent mapping to \eqref{equ:var-mdp} can be obtained. Specifically, let us re-define the  following reward for the extended MDP.
That is, we replace the cost $d(s_t)$ by a constant. Theorem \ref{prop:var} below shows that \eqref{equ:umdp} is actually equivalent to a chance-constrained CMDP under the new reward setting. 
\begin{theorem}
[VaR equivalence]\label{prop:var}
If we modify the reward penalties as 
\begin{equation*}\label{eq:new-penalty}   
\widetilde{r}(a_t|(s_t,c_t)) = \begin{cases}
r(a_t|s_t) &\text{ if }c_t+d(s_t)\leq c_{max} \\
r(a_t|s_t) - {\lambda (t+1)/\gamma^t} &{\text{ if } c_t \leq c_{max} \text{ and }c_t+d(s_t)> c_{max}} \\
{r(a_t|s_t) - \lambda/\gamma^t } &\text{ if } c_t > c_{max} 
\end{cases}
\end{equation*}
then if $\pi^*$ is an optimal solution to \eqref{equ:umdp}, then there is $\alpha^\lambda \in [0;\frac{\Psi^*-\overline{\Psi}}{\lambda T}]$ ($\alpha$ is dependent of $\lambda$) such that $\pi^*$ is also optimal to \eqref{equ:var-mdp}. Moreover $\lim_{\lambda\rightarrow \infty}\alpha^\lambda = 0$.  
\end{theorem}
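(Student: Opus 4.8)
The plan is to show that, under the modified penalties, the objective of \eqref{equ:umdp} collapses \emph{exactly} into a Lagrangian relaxation of \eqref{equ:var-mdp}, and then to run a saddle-point argument to transfer optimality, reading off the risk level $\alpha^\lambda$ from the violation probability of $\pi^*$.

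First I would compute the discounted return $\widetilde{R}(\tau)=\sum_{t}\gamma^t\widetilde{r}(a_t|(s_t,c_t))$ of an arbitrary trajectory. Because $d(s)\ge 0$, the accumulated cost $c_t$ is nondecreasing, so a violating trajectory ($D(\tau)>c_{max}$) has a well-defined first-violation stage $t^*$ at which $c_{t^*}\le c_{max}$ but $c_{t^*}+d(s_{t^*})>c_{max}$ (the second case of the penalty), while every later stage $t>t^*$ satisfies $c_t>c_{max}$ (the third case) and every earlier stage incurs no penalty. Multiplying each penalty by its discount factor $\gamma^t$ cancels the $1/\gamma^t$ factors, so the total discounted penalty telescopes to $\lambda(t^*+1)+\lambda(T-t^*)=\lambda(T+1)$, independent of $t^*$ and of the actual costs. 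Hence $\widetilde{R}(\tau)=\sum_t\gamma^t r(a_t|s_t)$ on feasible trajectories and $\widetilde{R}(\tau)=\sum_t\gamma^t r(a_t|s_t)-\lambda(T+1)$ on violating ones. Taking expectations gives the key identity: the objective of \eqref{equ:umdp} equals $J_\lambda(\pi):=\bbE_\pi\big[\sum_t\gamma^t r(s_t,a_t)\big]-\lambda(T+1)\,P_{\pi}\big(D(\tau)>c_{max}\big)$, which is precisely the Lagrangian of \eqref{equ:var-mdp} with multiplier $\lambda(T+1)$ attached to the chance constraint.

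Next I would set $\alpha^\lambda:=P_{\pi^*}(D(\tau)>c_{max})$, the actual violation probability of the EMDP optimizer $\pi^*$, and argue $\pi^*$ solves \eqref{equ:var-mdp} at risk level $\alpha^\lambda$. Since $\pi^*$ is trivially feasible at this level, it suffices to rule out a strictly better feasible competitor: if some $\pi'$ satisfied $P_{\pi'}(D(\tau)>c_{max})\le\alpha^\lambda$ with strictly larger expected reward, then, using $\lambda(T+1)>0$, I would get $J_\lambda(\pi')\ge\bbE_{\pi'}[\cdots]-\lambda(T+1)\alpha^\lambda>\bbE_{\pi^*}[\cdots]-\lambda(T+1)\alpha^\lambda=J_\lambda(\pi^*)$, contradicting optimality of $\pi^*$ for $J_\lambda$. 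For the bound, I would compare $\pi^*$ against the worst-case optimal policy $\overline{\pi}$ achieving $\overline{\Psi}$: by definition of \eqref{equ:rmdp} this policy never violates, so $J_\lambda(\overline{\pi})=\overline{\Psi}$, and optimality of $\pi^*$ yields $J_\lambda(\pi^*)\ge\overline{\Psi}$. Rearranging and using $\bbE_{\pi^*}[\sum_t\gamma^t r]\le\Psi^*$ gives $\lambda(T+1)\alpha^\lambda\le\Psi^*-\overline{\Psi}$, i.e. $\alpha^\lambda\le(\Psi^*-\overline{\Psi})/(\lambda(T+1))$ (matching the stated bound up to the horizon-length convention), while $\alpha^\lambda\ge 0$ since it is a probability; the limit $\lim_{\lambda\to\infty}\alpha^\lambda=0$ then follows by squeezing.

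The main obstacle I anticipate is the telescoping penalty computation: one must exploit the monotonicity of the accumulated cost to pin down that \emph{exactly one} stage falls in the second case and all subsequent stages in the third, and verify that the $\gamma^t$ discounting cancels the $1/\gamma^t$ factors so that the per-trajectory penalty is the \emph{constant} $\lambda(T+1)$ rather than a quantity depending on the violation time or the realized costs. This constancy is what turns $J_\lambda$ into an exact Lagrangian for the chance constraint; once that identity is in hand, both the optimality transfer and the bound on $\alpha^\lambda$ are routine.
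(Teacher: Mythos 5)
Your proposal is correct and follows essentially the same route as the paper's proof: rewrite the EMDP objective as $\bbE_\pi[\sum_t\gamma^t r]-\lambda\cdot(\text{const})\cdot P_\pi(D(\tau)>c_{max})$, set $\alpha^\lambda$ to the violation probability of $\pi^*$, transfer optimality by contradiction, and bound $\alpha^\lambda$ by comparing against the worst-case optimal policy. Your telescoping computation is actually more careful than the paper's (which asserts the per-trajectory penalty is $\lambda T$ rather than your $\lambda(T+1)$ over stages $t=0,\dots,T$), but since $\lambda(T+1)\ge\lambda T$ this only tightens the bound and the stated interval $[0,\frac{\Psi^*-\overline{\Psi}}{\lambda T}]$ remains valid.
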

It can be also seen that Theorem \ref{th:UMDP-WCMDP} is a special case of  Theorem \ref{prop:var} when $\lambda = \infty$.

We finally connect \eqref{equ:umdp} with a risk-averse CMDP that has a CVaR intuition.  The theorem below shows that, by slightly changing the reward penalties,  \eqref{equ:umdp} actually solves a risk-averse CMDP problem. 
\begin{theorem}[CVaR CMDP equivalence]
\label{th:CVAR}
If we modify the reward penalties as \[
\widetilde{r}(a_t|(s_t,c_t)) = \begin{cases}
 r(a_t|s_t) & \text{ if } c_t+d(s_t)\leq c_{max} \\
 r(a_t|s_t) - {\lambda (c_t+d(s_t)-c_{max})/\gamma^t} & {\text{ if } c_t \leq c_{max} \text{ and }c_t+d(s_t)> c_{max}} \\
 {r(a_t|s_t) - \lambda d(s_t)/\gamma^t} & \text{ if } c_t > c_{max} 
\end{cases}
\]
then for any $\lambda>0$, there is $\beta^\lambda\in \left[0;\frac{\Psi^*-\overline{\Psi}}{\lambda}\right]$ ($\beta^\lambda$ is dependent of $\lambda$) such that 
any optimal solution to the extended CMDP \eqref{equ:umdp} is also optimal to the following risk-averse CMDP
\begin{equation}\tag{\sf\small CVaR-CMDP}
\begin{aligned}
\label{equ:cvar-mdp}
   &\max_\pi \mathbb{E}\left[\sum_{t=0}^T {\gamma^t}r(s_t,a_t)|s_0,\pi\right] \text{ s.t. }\quad  \bbE_{\tau\sim \pi}\Big[ (D(\tau)-c_{max})^+ \Big] \leq \beta^\lambda.
\end{aligned}
\end{equation}
 Moreover, $\lim_{\lambda\rightarrow \infty} \beta^\lambda = 0$. 
\end{theorem}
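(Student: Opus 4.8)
The plan is to reduce the modified extended MDP to an explicit penalized (Lagrangian) objective and then run a standard contradiction argument to read off the CVaR-constrained equivalence.

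First I would compute the total discounted penalty accumulated along a single trajectory $\tau=\{(s_0,a_0),\ldots,(s_T,a_T)\}$ under the modified rewards, writing $c_t=\sum_{k<t}d(s_k)$. If $D(\tau)\le c_{max}$ the threshold is never crossed, so $\widetilde{R}(\tau)=\sum_t\gamma^t r(a_t|s_t)$. If $D(\tau)>c_{max}$, let $t^*$ be the unique crossing stage with $c_{t^*}\le c_{max}<c_{t^*}+d(s_{t^*})=c_{t^*+1}$. Since each penalty term carries a $1/\gamma^t$ factor that cancels the discount $\gamma^t$ in the objective, the crossing stage contributes $-\lambda(c_{t^*+1}-c_{max})$ and every later stage $t>t^*$ contributes $-\lambda d(s_t)$; summing telescopes to
\[
-\lambda\Big[(c_{t^*+1}-c_{max})+\sum_{t>t^*}d(s_t)\Big]=-\lambda\big(D(\tau)-c_{max}\big).
\]
Hence in all cases $\widetilde{R}(\tau)=\sum_t\gamma^t r(a_t|s_t)-\lambda\,(D(\tau)-c_{max})^+$, and taking expectations shows the objective of \eqref{equ:umdp} equals $\bbE_\pi[\sum_t\gamma^t r(s_t,a_t)]-\lambda\,\bbE_\pi[(D(\tau)-c_{max})^+]$, which is precisely the Lagrangian of \eqref{equ:cvar-mdp} with multiplier $\lambda$.

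Next I would fix an optimizer $\pi^*$ of \eqref{equ:umdp} and set $\beta^\lambda = \bbE_{\pi^*}[(D(\tau)-c_{max})^+]$, the CVaR level that $\pi^*$ actually realizes. To show $\pi^*$ solves \eqref{equ:cvar-mdp} at this level, I would argue by contradiction: if some $\pi'$ were feasible there, namely $\bbE_{\pi'}[(D(\tau)-c_{max})^+]\le\beta^\lambda$, yet earned strictly more reward than $\pi^*$, then
\[
\bbE_{\pi'}\Big[\textstyle\sum_t\gamma^t r\Big]-\lambda\,\bbE_{\pi'}[(D(\tau)-c_{max})^+]\;\ge\;\bbE_{\pi'}\Big[\textstyle\sum_t\gamma^t r\Big]-\lambda\beta^\lambda\;>\;\bbE_{\pi^*}\Big[\textstyle\sum_t\gamma^t r\Big]-\lambda\beta^\lambda,
\]
where the first inequality uses feasibility and the second the assumed reward gain. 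The rightmost quantity is the \eqref{equ:umdp} value of $\pi^*$, so $\pi'$ would strictly outperform $\pi^*$ in \eqref{equ:umdp}, a contradiction; hence $\pi^*$ is optimal for \eqref{equ:cvar-mdp}.

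Finally I would locate $\beta^\lambda$ in $[0,(\Psi^*-\overline{\Psi})/\lambda]$ using the worst-case optimizer $\overline{\pi}$ of \eqref{equ:rmdp}. Every trajectory under $\overline{\pi}$ obeys $D(\tau)\le c_{max}$, so its penalty vanishes and its \eqref{equ:umdp} value is exactly $\overline{\Psi}$; optimality of $\pi^*$ then forces $\bbE_{\pi^*}[\sum_t\gamma^t r]-\lambda\beta^\lambda\ge\overline{\Psi}$. Since $\bbE_{\pi^*}[\sum_t\gamma^t r]\le\Psi^*$ by definition of the unconstrained optimum, this rearranges to $\lambda\beta^\lambda\le\Psi^*-\overline{\Psi}$, i.e. $\beta^\lambda\le(\Psi^*-\overline{\Psi})/\lambda$, while nonnegativity is immediate from $(\cdot)^+\ge0$. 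The squeeze $0\le\beta^\lambda\le(\Psi^*-\overline{\Psi})/\lambda$ then yields $\lim_{\lambda\to\infty}\beta^\lambda=0$. I expect the only genuinely delicate step to be the per-trajectory penalty accounting in the first paragraph: confirming that the single crossing stage contributes $(c_{t^*+1}-c_{max})$ rather than the full $c_{t^*+1}$, since this is exactly what replaces $\lambda D(\tau)$ by $\lambda(D(\tau)-c_{max})^+$ and distinguishes this CVaR construction from the risk-neural one; everything downstream is routine Lagrangian bookkeeping.
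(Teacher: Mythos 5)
Your proposal is correct and follows essentially the same route as the paper: rewrite the extended objective as $\bbE_\pi[\sum_t\gamma^t r]-\lambda\,\bbE_\pi[(D(\tau)-c_{max})^+]$ via the per-trajectory telescoping of the penalties, set $\beta^\lambda=\bbE_{\pi^*}[(D(\tau)-c_{max})^+]$, argue optimality for \eqref{equ:cvar-mdp} by contradiction, and squeeze $\beta^\lambda$ between $0$ and $(\Psi^*-\overline{\Psi})/\lambda$ using the worst-case policy. If anything, your explicit accounting of the crossing stage contributing $c_{t^*+1}-c_{max}$ rather than $c_{t^*+1}$ is spelled out more carefully than in the paper, which states the resulting identity without the telescoping detail.
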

In practice, since $\lambda$ is just a scalar, one can just gradually increase it from 0 to get feasible policies or decrease it from a large value if the policy becomes too conservative.
This indicates the generality of the unconstrained extended MDP formulation \eqref{equ:umdp}. In summary, we show that \eqref{equ:umdp}
brings risk-neural, worst-case and VaR and CVaR CMDPs in \eqref{equ:cmdp}, \eqref{equ:rmdp}, \eqref{equ:var-mdp} and \eqref{equ:cvar-mdp} under one umbrella. 

\section{Safe RL Algorithms}
In this section, we update existing RL methods to effectively utilize the extended state space and reward penalties, while considering RN-CMDP. Due to the theoretical finding in the previous section, just by tweaking $\lambda$, we can solve  Constrained MDPs with different types of constraints, e.g., expected cost, VaR or CVar CMDPs.  
\subsection{Safe DQN}
Deep Q Network (DQN) \cite{mnih2015human} is an efficient method to learn in primarily discrete action RL  problems. However, the original DQN does not consider safety constraints and cannot be applied to any of the CMDP variants. The main modifications in the updated algorithm, referred to as Safe DQN are with regards to exploiting the extended state space and the reward penalties based on constraint violations. The pseudo code for the Safe DQN algorithm is provided in the appendix.

The impact of extended state space on the algorithm can be observed in almost every line of the algorithm. When selecting an action (Line 4), Safe DQN does not consider the feasibility of the action with respect to cost. Instead, like in the original DQN, it is purely based on the current Q value. The assumption is that the penalties accrued due to violation (in Lines 9-12) will be sufficient to force the agent away from cost-infeasible actions.  Once the new rewards are obtained (based on considering reward penalties), the Q network is updated using the mean square error loss in Line 17. To avoid conservative decisions, we dynamically change the constraint penalty $\lambda$ (in Lines 21-22). We set an initial value for $\lambda$. During the
training, we evaluate the maximum final cost in the recent few episodes, if the maximum final cost exceeds $c_{max}$, we make no change to $\lambda$, otherwise, we set $\lambda$ to be $0.95\times\lambda$ to diminish the conservativeness of the policy. However, the value of $\lambda$ cannot keep decreasing, so we set a lower bound for it. We refer the reader to the appendix for more details.

 \subsection{Safe SAC}
Soft Actor-Critic (SAC) \cite{haarnoja2018soft} is an off-policy algorithm that learns a stochastic policy for discrete and continuous action RL problems.  SAC employs policy entropy in conjunction with value function to ensure a better tradeoff between exploration and exploitation. The Q value function in SAC is defined as follows:
\begin{equation}
\begin{aligned}
\label{equ:SACq}
    Q(s,a)=& \mathbb{E}[\sum_{t=0}^\infty \gamma^t r(s_t,a_t) + \alpha \sum_{t=1}^\infty \gamma^t H(\pi(\cdot|s_t))|s_0=s,a_0=a]
\end{aligned}
\end{equation}
where $H(.)$ denotes the entropy of the action distribution for a given state $s_t$.
SAC also employs a double Q-trick, i.e., two Q value functions ($Q^i(.),~i \in {1,2}$) are trained and we take the minimum of these two Q value functions as the target to avoid overestimation: 
\begin{equation}
\begin{aligned}
\label{equ:SACtar}
    y=&r(s,a)+\gamma\min_{i=1,2} Q^{i}(s',\tilde{a}')-\alpha \log\pi(\tilde{a}'|s')
\end{aligned}
\end{equation}
where $\tilde{a}'\sim \pi(\cdot|s')$.

Our algorithm that handles safety constraints is referred to as Safe SAC. It builds on SAC by having an extended state space and a new action selection strategy that exploits the extended state space. In Safe DQN,  we primarily rely on the violation of constraints, so as to learn about the bad trajectories and avoid them. While such an approach works well for discrete action spaces and in an off-policy setting, it is sample inefficient and can be slow for on-policy (actor-critic) settings. In Safe SAC, apart from the reward penalty, we also focus on learning feasible actions, which are generated through the use of the cost accumulated so far (available as part of the state space) and an estimate of Q value on the future cost. 

Formally, we define the optimization to select safe actions (at each decision epoch) in (\ref{equ:constraint}). Extending on the double Q trick for reward, we also have double Q for future cost, referred to as $\{Q_D^i\}_{i \in {1,2}}$. At each step, the objective is to pick an action that would maximize the reward Q value for the extended state and action minus the weighted entropy of the action. The constraint ensures that we only pick those actions that will not violate the cost constraint.  Specifically, in the left-hand side of the constraint, we calculate the overall expected cost using :
(a) (estimate) of the future cost, from the current state; 
(b) (actual) cost incurred so far; and (c) subtracting the (actual) cost incurred at the current step, as it is part of both (a) and (b);
{\small \begin{equation}
\begin{aligned}
\label{equ:constraint}
    &\arg\max_a \min_{i=1,2} Q^i((s,c),a)-\alpha \log\pi(a|(s,c)) \textbf{ s.t. } \max_{i=1,2} Q^i_D((s,c),a)+c-d((s,c))\leq c_{max},\forall (s,c)
\end{aligned}
\end{equation}}
The detailed pseudocode for Safe SAC is provided in the appendix.

\section{Experimental Results}
In this section, we intend to experimentally demonstrate different facets of our approaches (Safe SAC and Safe DQN) in
\squishlist
    \item Solving RN-CMDP: We show expected reward performance and expected cost constraint enforcement on multiple benchmark problems. In this case, we compare against the BVF ~\cite{satija2020constrained}, Lyapunov ~\cite{chow2019lyapunov} and an unsafe approach (original DQN)~\cite{mnih2015human}. 
    \item Solving CVaR-CMDP: We show expected reward performance and CVaR cost constraint enforcement on multiple benchmark problems. In this case, there is only one benchmark approach, namely WCSAC~\cite{yang2021wcsac}. 
    \item Ablation analysis: We show results that demonstrate the importance of considering state augmentation and reward penalty simultaneously. 
    \item Impact of reward penalty: We show results that highlight the importance of the reward penalty $\lambda$ in ensuring the right tradeoff between expected reward performance and cost constraint enforcement. 
\squishend
The performance values (expected cost and expected reward) along with the standard deviation in each experiment are averaged over 5 runs. We did not provide any results with Var-CMDP because there are no existing approaches to compare agaisnt that have been developed for it. 
\begin{figure*}[htbp]
    \centering
    \begin{minipage}{0.3\linewidth}
        \centering
        \includegraphics[scale=0.35]{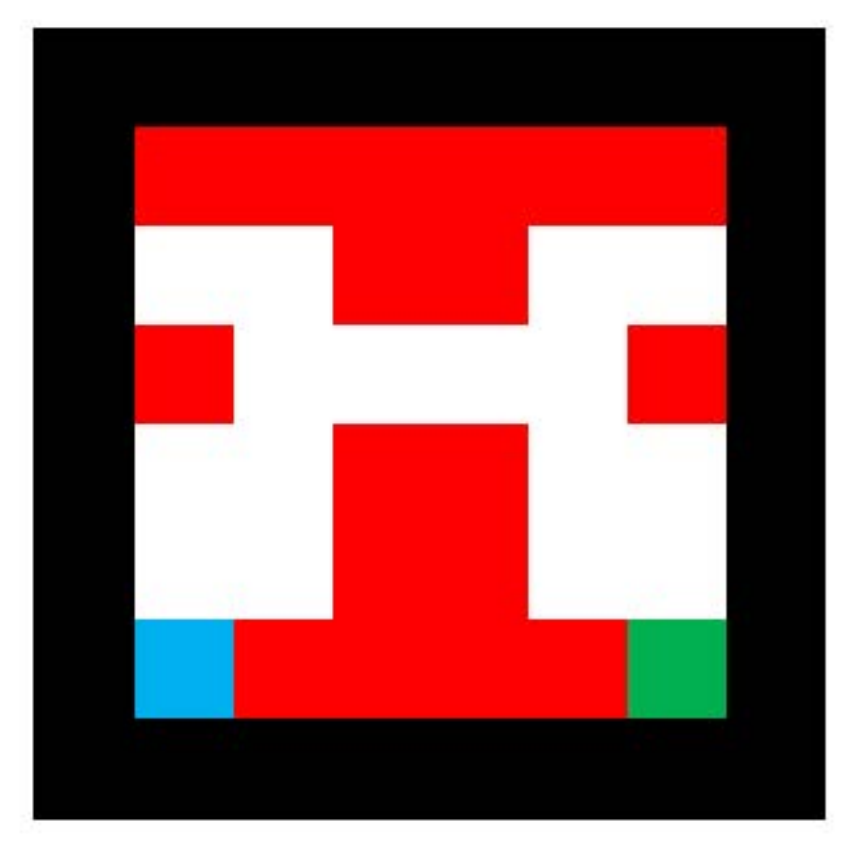}
        \label{grid}
    \end{minipage}\hfill
    \centering
    \begin{minipage}{0.34\linewidth}
        \centering
        \includegraphics[scale=0.33]{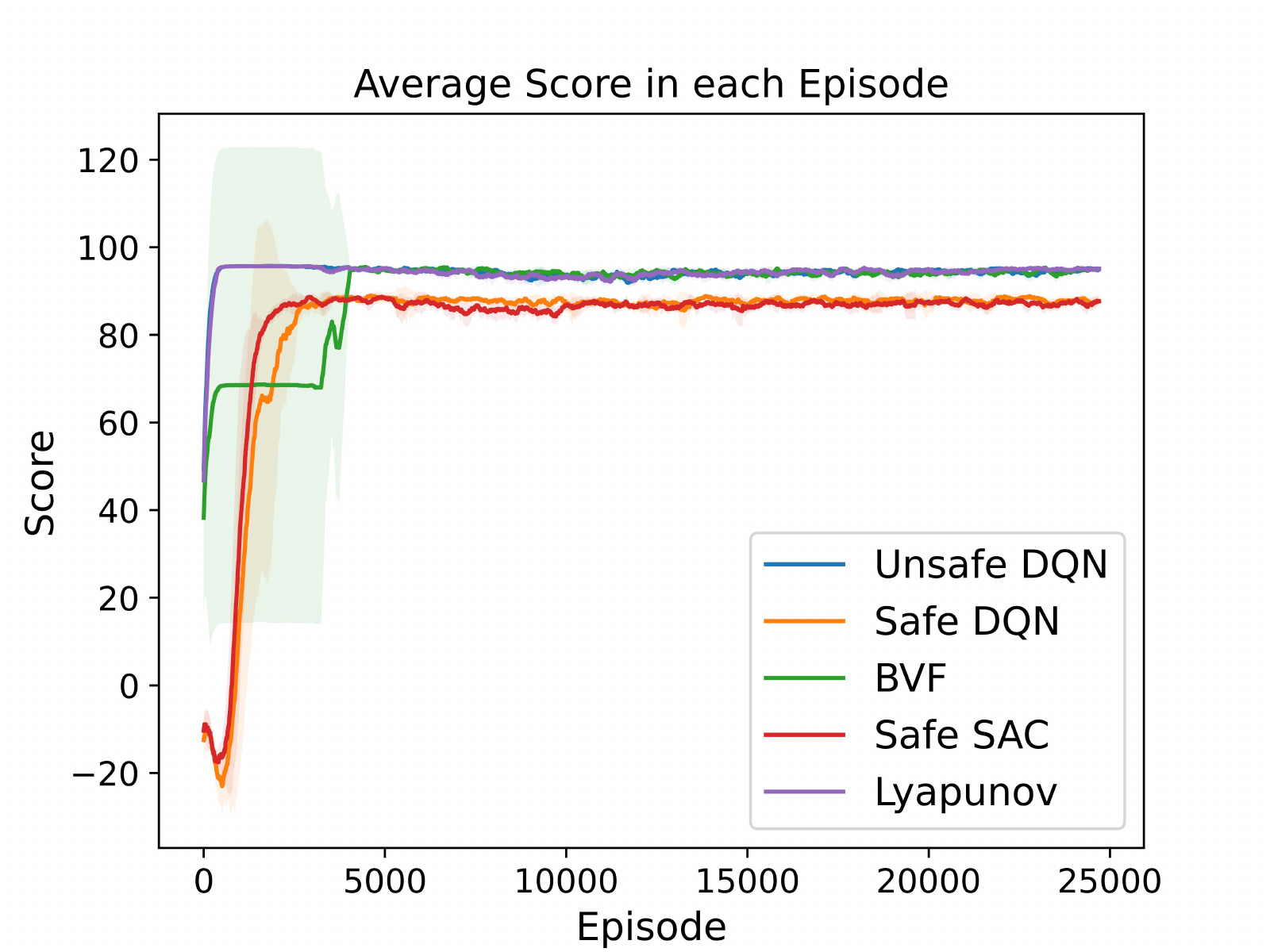}
        \label{gridscore}
    \end{minipage}\hfill
    \begin{minipage}{0.34\linewidth}
        \centering
        \includegraphics[scale=0.33]{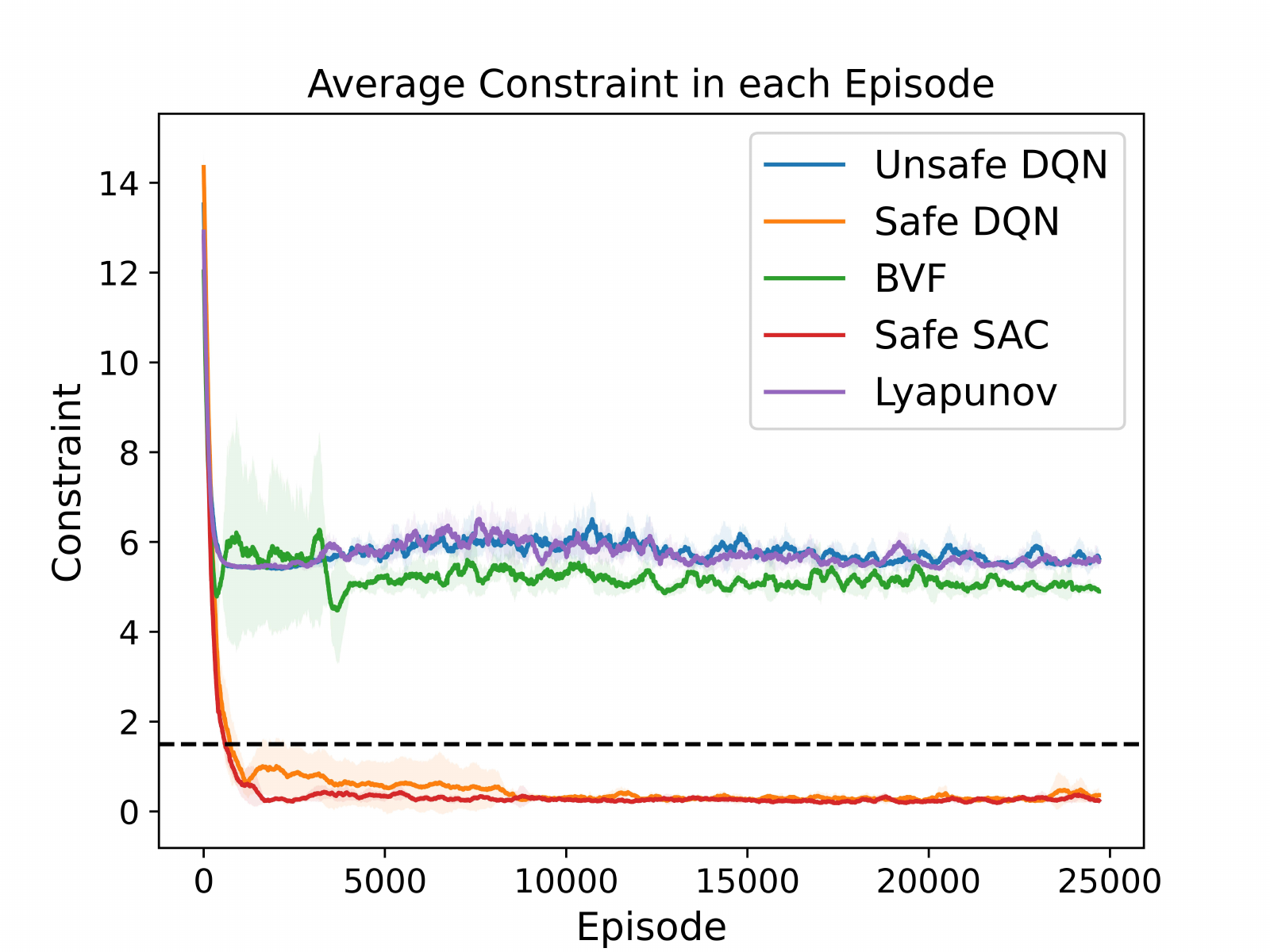}
        \label{gridcons}
    \end{minipage}\hfill
    \caption{Gridworld environment and reward, cost comparison of different approaches.}
    \label{fig:gridperform}
\end{figure*}
\subsection{RN-CMDP}
For a discrete state and discrete action environment, we consider the stochastic 2D grid world problem introduced in previous CMDP works~\cite{leike2017ai,chow2018lyapunov,satija2020constrained,jain2021safe}. The grid on the left of Figure \ref{fig:gridperform} shows the environment. The agent starts at the bottom right corner of the map (green cell) and the objective is to move to the goal at the bottom left corner (blue cell). The agent can only move in the adjoining cells in the cardinal directions. Occasionally agent will execute a random action with probability $p=0.05$ instead of the one selected by the agent. It gets a reward of +100 on reaching the goal, and a penalty of -1 at every time step. There are a number of pits in the map (red cell) and agent gets a random cost ranging from 1 to 1.5 on passing through any pit cell. We consider an 8x8 grid and the maximum time horizon is 200 steps, after which the episode terminates. This modified GridWorld environment is challenging because the agent can travel to destination via a short path with a high cost, but if it wishes to travel safely, it needs to explore enough to find a safe path which is far from the shortest one. We set the expected cost threshold, $c_{max}=2$, meaning agent could pass at most one pit. 


Figure \ref{fig:gridperform} shows the performance of each method with respect to expected reward (score) and expected cost (constraint): (a) With respect to expected reward, among safe approaches, Lyapunov achieves the highest reward. However, it violates the expected cost constraint by more than twice the cost constraint value. (b) Safe SAC and Safe DQN achieve similar expected reward values, though Safe SAC reaches there faster. This high expected reward value is achieved while satisfying the expected cost constraint after ~1000 episodes. (c) The other constrained RL approach, BVF is the last to converge while not being able to satisfy the expected cost constraint. (d) As expected, Unsafe DQN achieved the highest expected reward but was unable to satisfy the expected cost constraint.

\begin{figure*}[htbp]
    \centering
    \begin{minipage}{0.3\linewidth}
        \centering
        \includegraphics[scale=0.35]{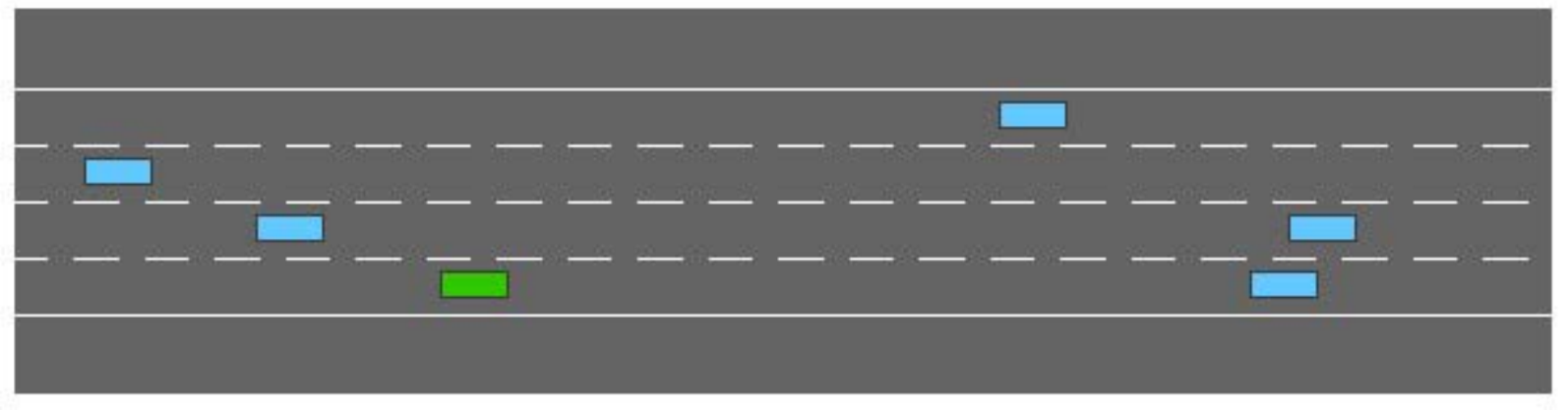}
        \label{highway}
    \end{minipage}\hfill
    \begin{minipage}{0.34\linewidth}
        \centering
        \includegraphics[scale=0.33]{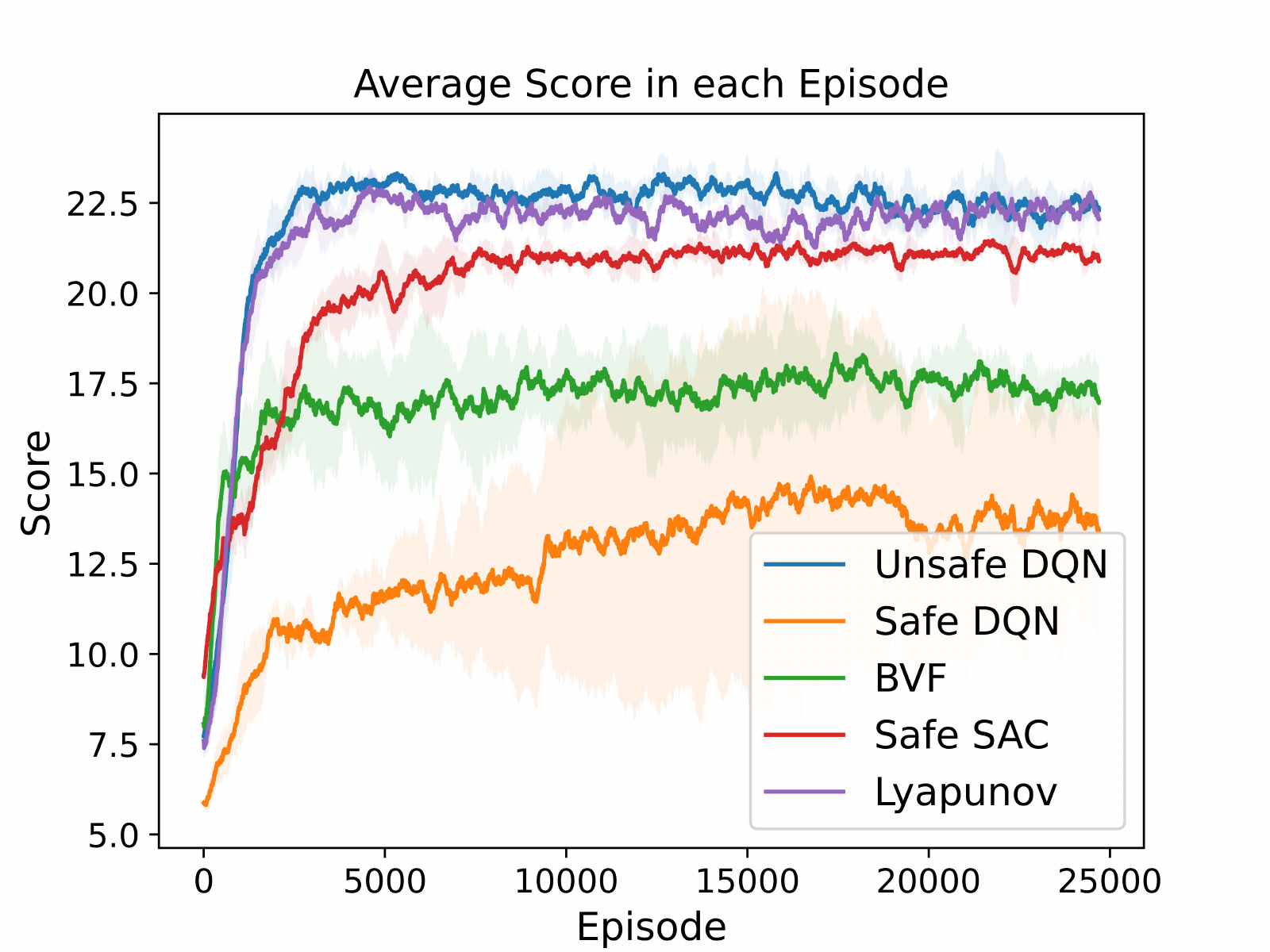}
        \label{highwayscore}
    \end{minipage}\hfill
    \begin{minipage}{0.34\linewidth}
        \centering
        \includegraphics[scale=0.33]{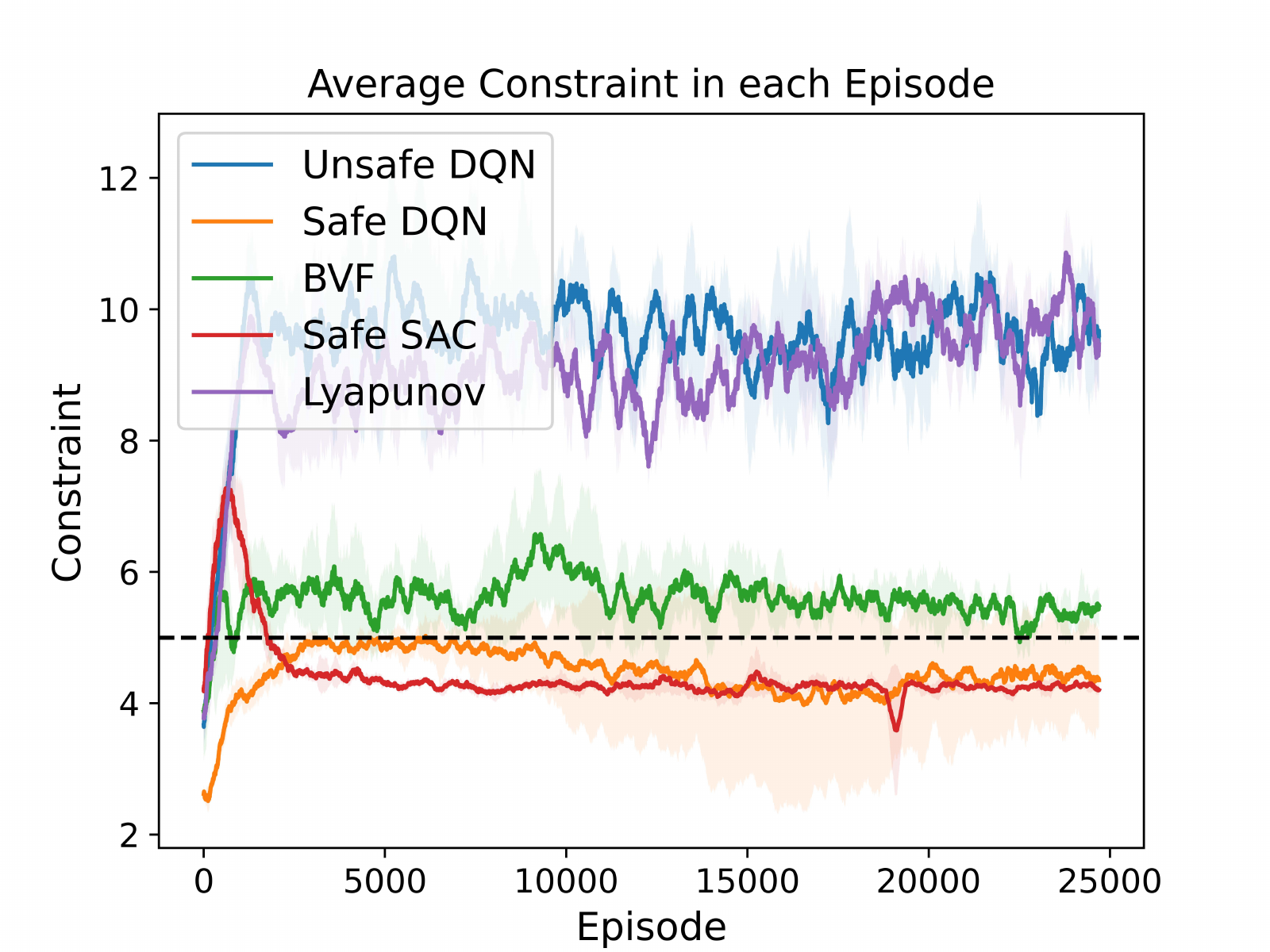}
        \label{highwaycons}
    \end{minipage}\hfill
    \caption{Highway environment and reward, cost comparison of different approaches}
    \label{fig:highwayperform}
\end{figure*}

Next, we consider the highway environment. Inspired by experiment in GPIRL \cite{levine2011nonlinear}, we test our safe methods in the highway environment \cite{highway-env} of Figure~\ref{fig:highwayperform}. The task in highway environment is to navigate a car on a four-lane highway with all other
vehicles acting randomly. The goal for the agent is to maximize its reward while we show the reward settings in the appendix. However, to ensure safety, we set the constraint on the time the agent drives faster than a given speed in the rightmost lane. Figure \ref{fig:highwayperform} shows the expected reward and expected cost performance of our safe methods compared to that of the benchmarks. Safe SAC is able to get high expected rewards while satisfying the expected cost constraint. We also provide more results for RN-CMDP in appendix. 

\begin{figure*}[htbp]
    \centering
    \begin{minipage}{0.3\linewidth}
        \centering
        \includegraphics[scale=0.35]{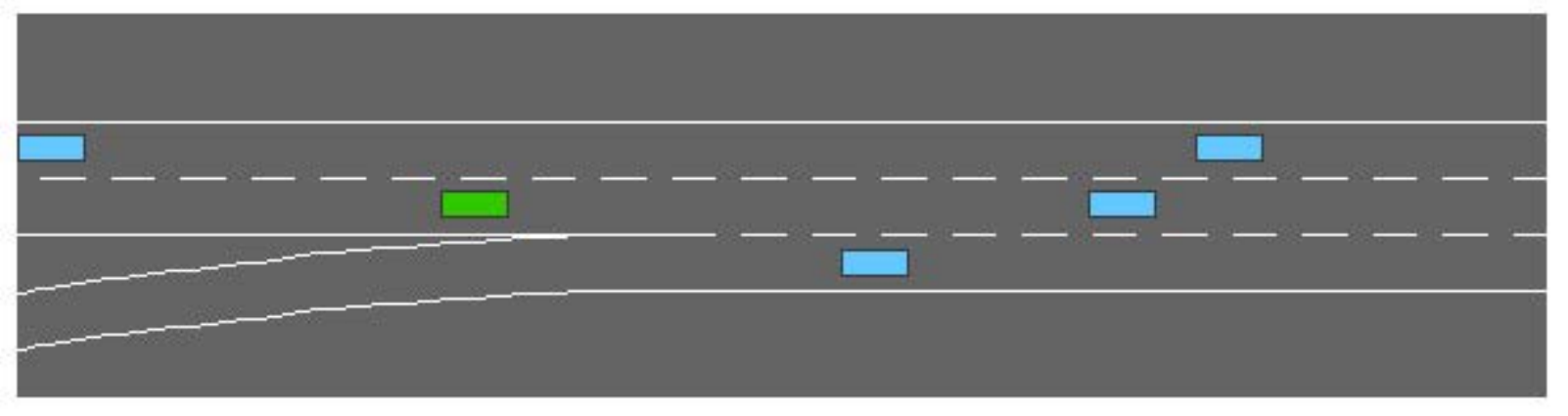}
        \label{mergeenv}
    \end{minipage}\hfill
    \begin{minipage}{0.34\linewidth}
        \centering
        \includegraphics[scale=0.33]{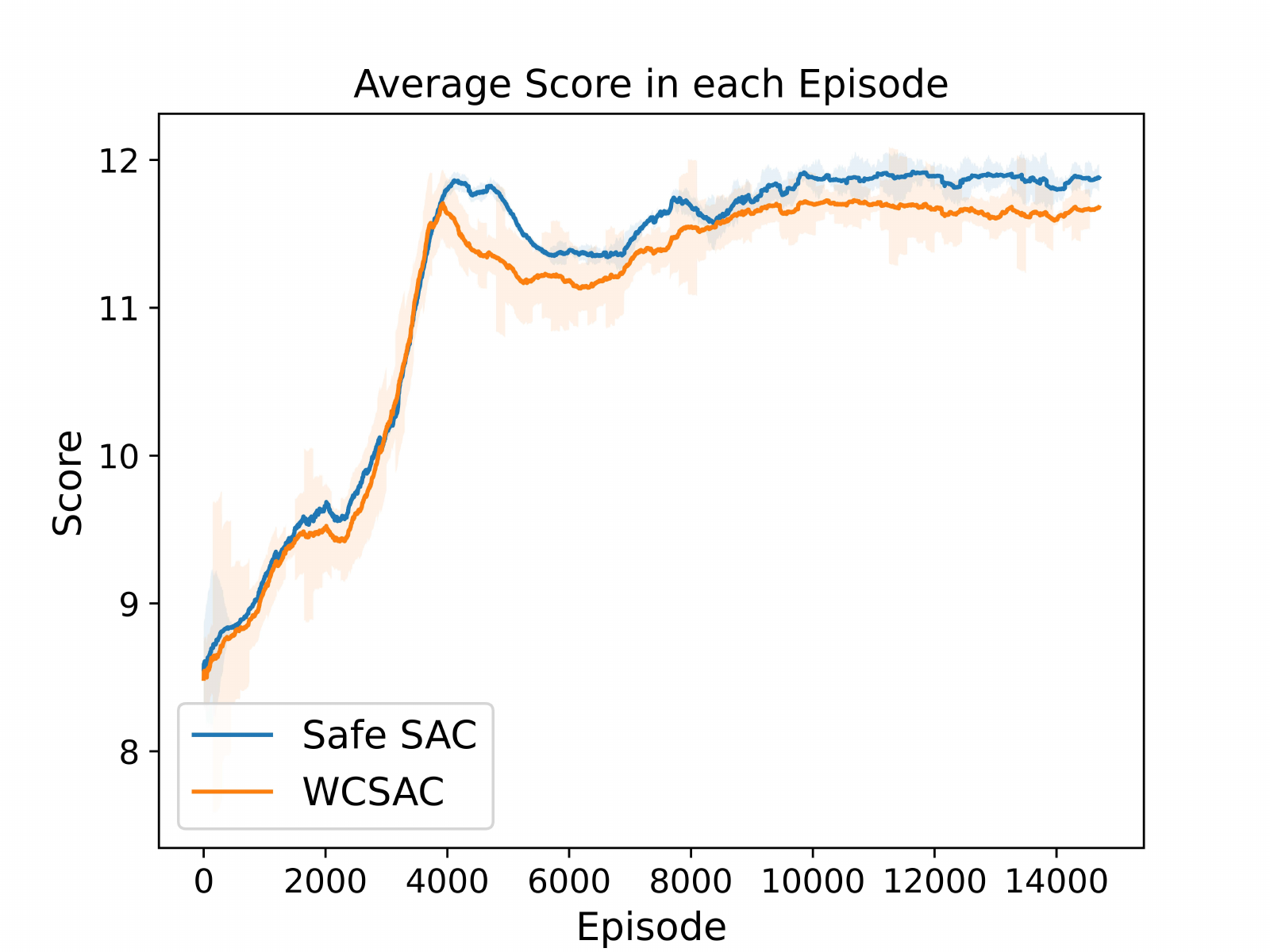}
        \label{mergecvarscore}
    \end{minipage}\hfill
    \begin{minipage}{0.34\linewidth}
        \centering
        \includegraphics[scale=0.33]{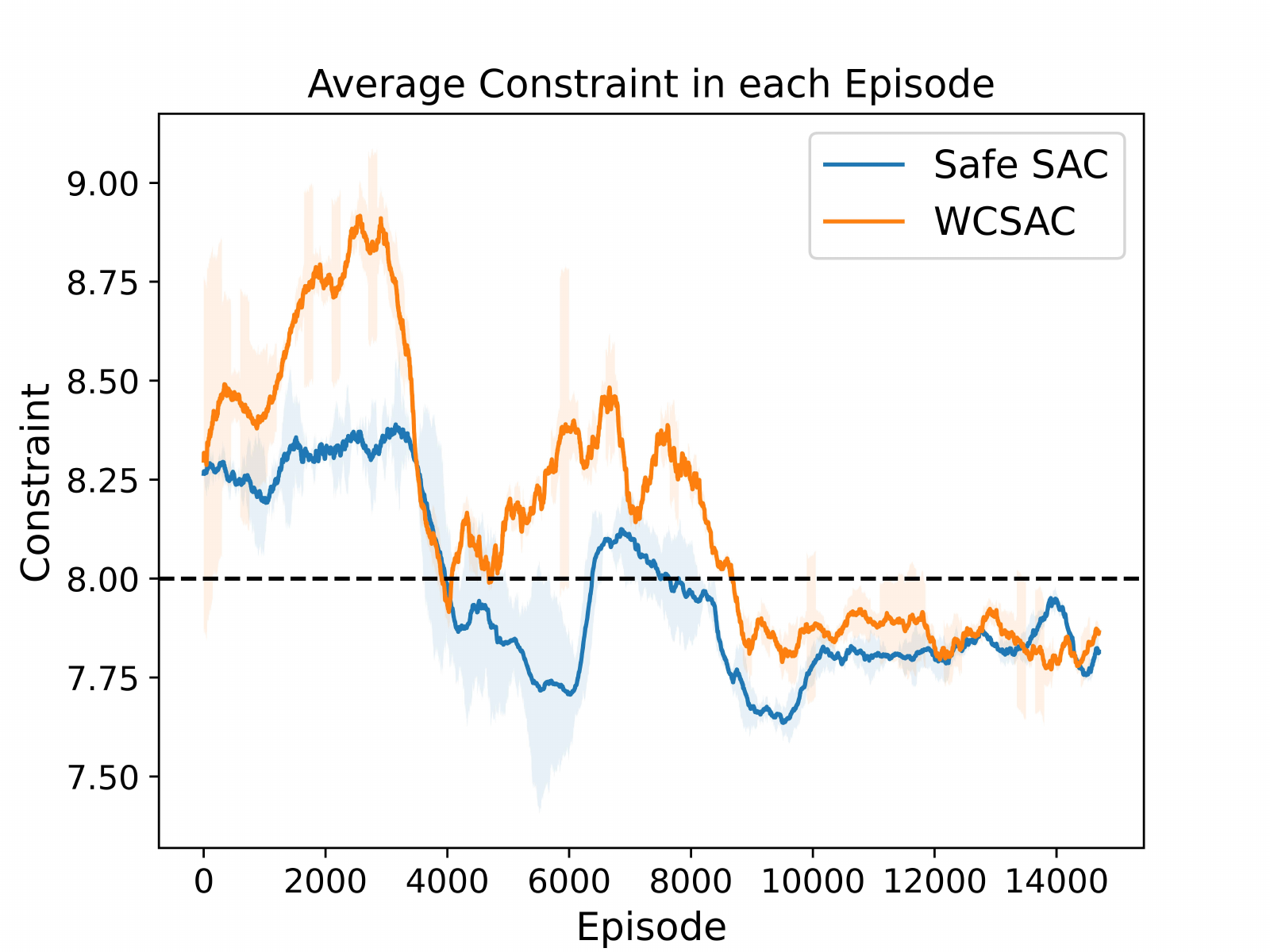}
        \label{mergecvarcons}
    \end{minipage}\hfill
    \caption{Experiment with CVaR Constraint in Merge Environment}
    \label{fig:mergecvar}
\end{figure*}

\subsection{CVaR-CMDP}
We introduce CVaR constraint to our Safe SAC method and compare it against WCSAC \cite{yang2021wcsac}, which is a leading algotrithm for CVaR constraint. As WCSAC is limited to continuous space, we only do the comparison with continuous environments. Figure \ref{fig:mergecvar} shows the comparisons on the merge benchmark problem from GPIRL \cite{levine2011nonlinear}. Safe SAC was able to marginally perform better than WCSAC both with respect to expected reward and expected cost. This performance is noteworthy as we use the same algorithm to handle both RN-CMDP and CVaR-CMDP. We provide more results for CVaR-CMDP in appendix.

\begin{wrapfigure}{r}{0.7\textwidth}
    \centering
    \begin{minipage}{0.33\linewidth}
        \centering
        \includegraphics[scale=0.33]{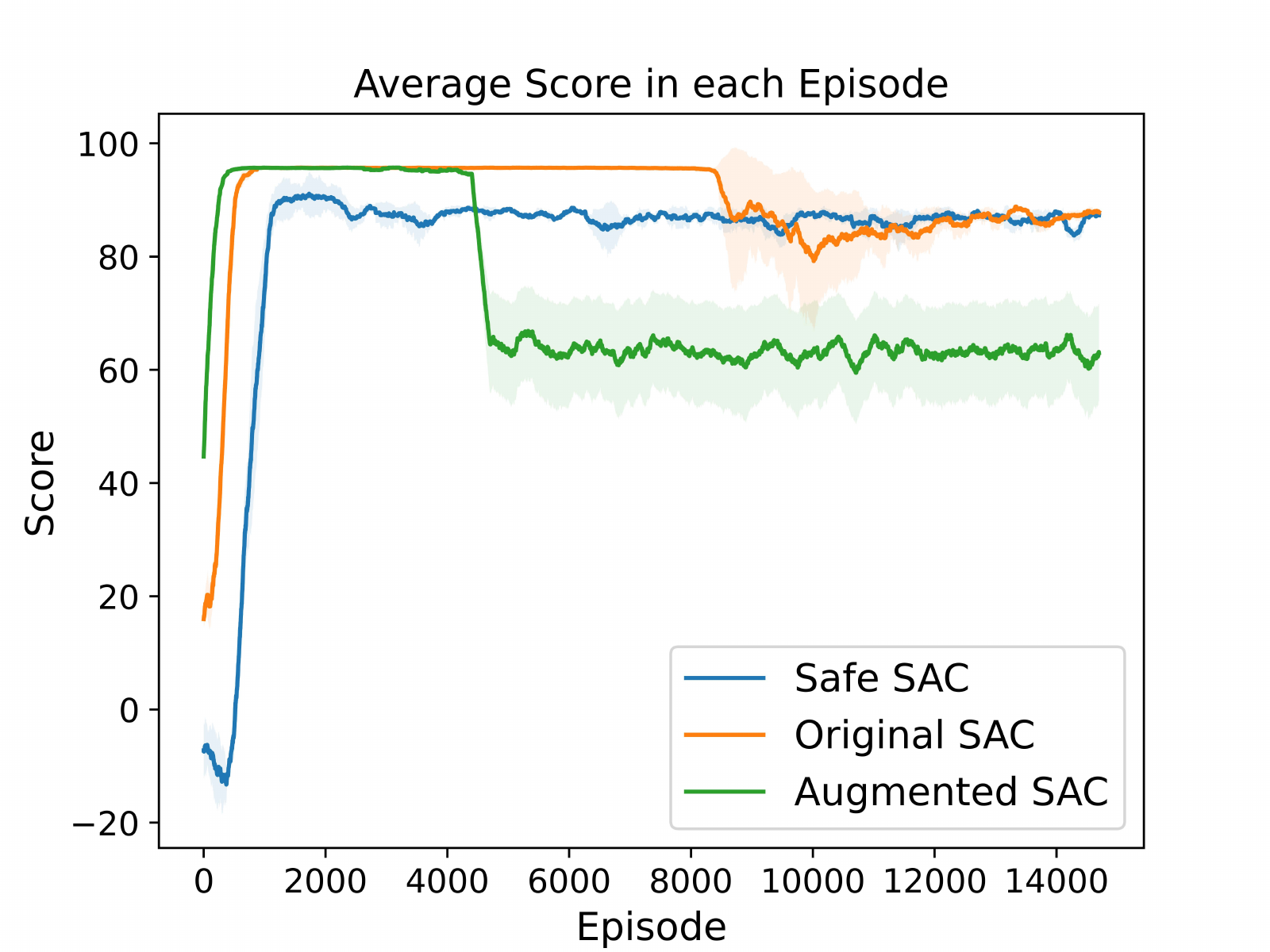}
        \label{comparescore}
    \end{minipage}\hfill
    \begin{minipage}{0.33\linewidth}
        \centering
        \includegraphics[scale=0.33]{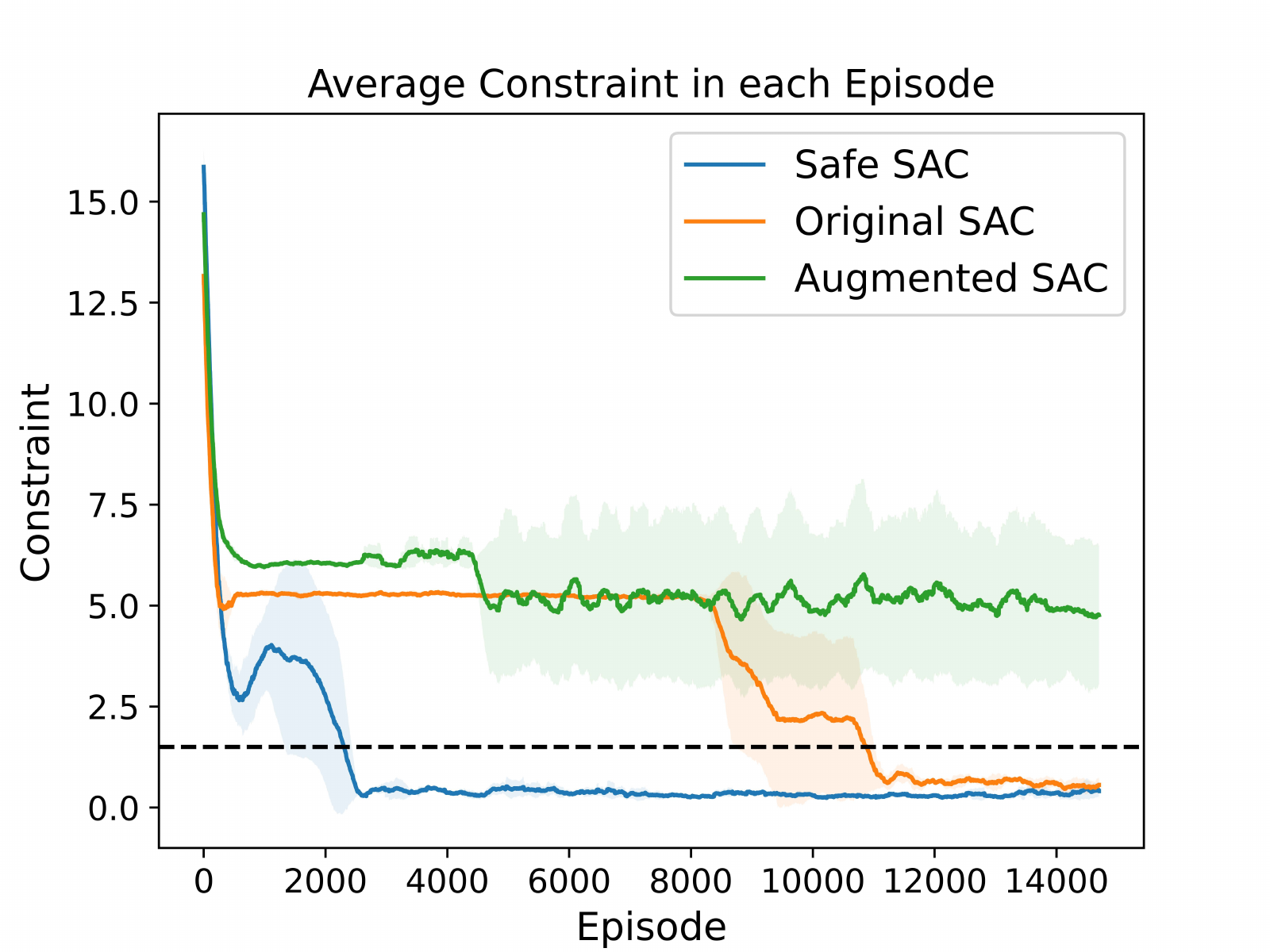}
        \label{comparecons}
    \end{minipage}\hfill
    \caption{Ablation Analysis with GridWorld}
    \label{fig:ablation1}
\end{wrapfigure}

 \subsection{Ablation Analysis}
To investigate the impact of state augmentation and reward penalty, we conduct an ablation analysis using GridWorld and Highway environments. We compare the performance of Original SAC, SAC with only state augmentation (Augmented SAC), and SAC with state augmentation and reward penalty (Safe SAC) on GridWorld in Figure \ref{fig:ablation1}. In Augmented SAC, the agent chooses an action using (\ref{equ:constraint}). If no action could satisfy the constraint, it chooses the action with minimum future cost. More details and results are provided in appendix. 



\begin{wrapfigure}{r}{0.7\textwidth}
    \centering
    \begin{minipage}{0.33\linewidth}
        \centering
        \includegraphics[scale=0.33]{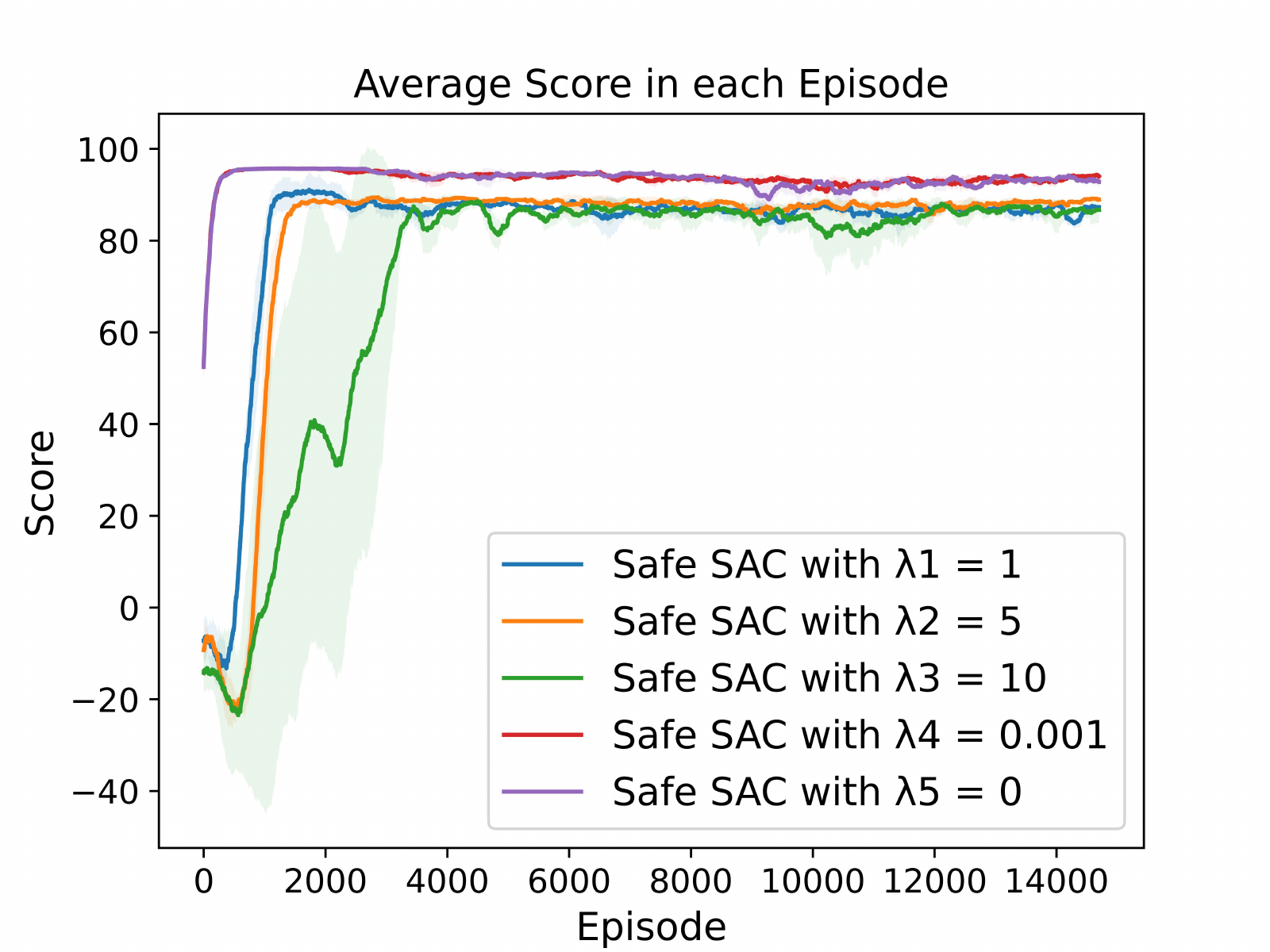}
        \label{consscore}
    \end{minipage}\hfill
    \begin{minipage}{0.33\linewidth}
        \centering
        \includegraphics[scale=0.33]{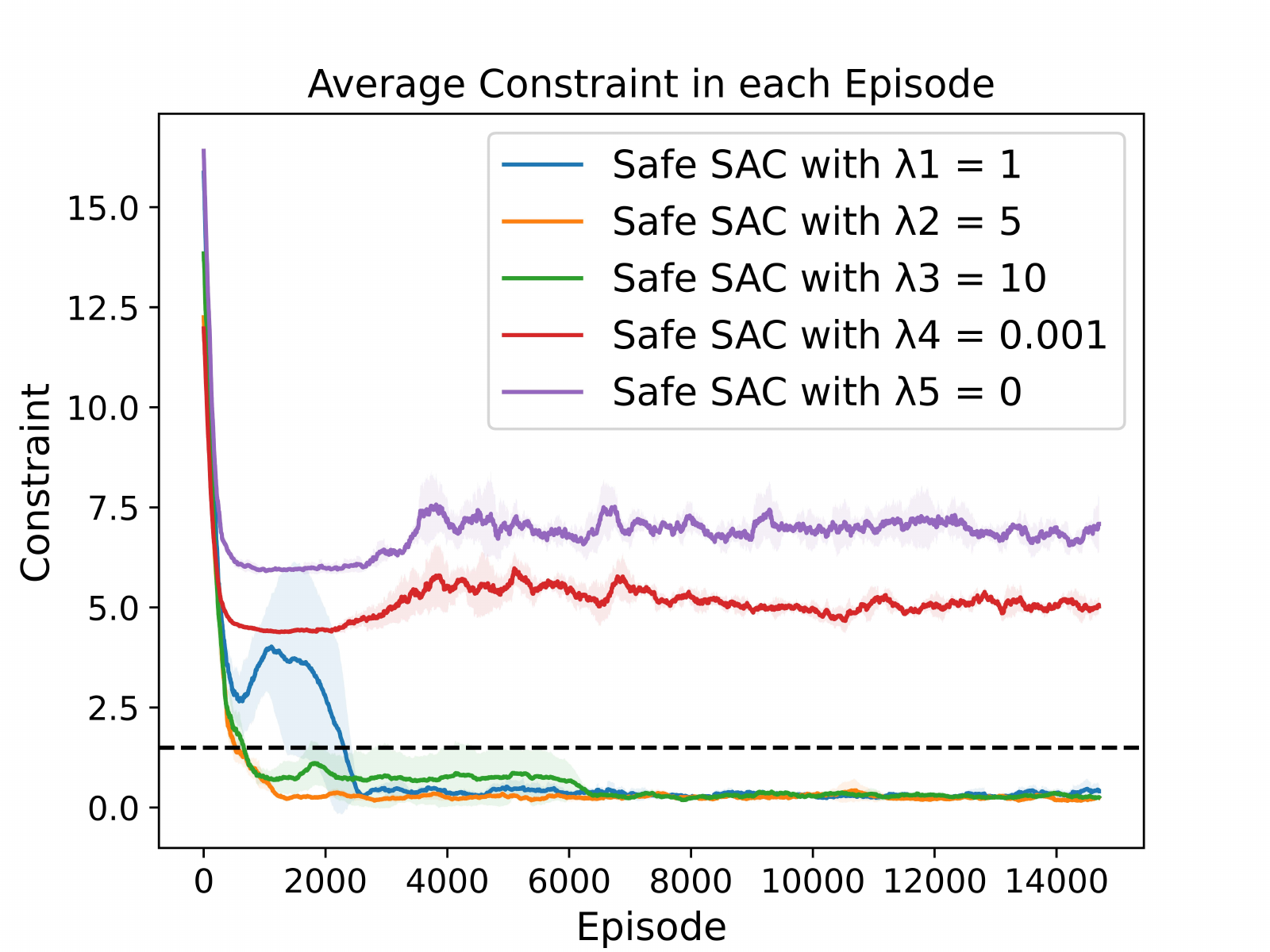}
        \label{conscons}
    \end{minipage}\hfill
    \caption{Experiment in GridWorld with Different Reward Penalties}
    \label{fig:constraintpenalty1}
\end{wrapfigure}

\subsection{Impact of Reward Penalty \texorpdfstring{$\lambda$}{}}
 To investigate the impact of different reward penalty values on the performance, we conduct experiments on GridWorld using Safe SAC, with  $\lambda_1 = 1,\lambda_2=5\lambda_1,\lambda_3=10\lambda_1$, a small $\lambda_4 = 0.001$ and $\lambda_5=0$, we show the results in Figure \ref{fig:constraintpenalty1}. In this experiment,
we fix $\lambda$ so that it would be conservative in some cases. We can see that $\lambda_1,\lambda_2,\lambda_3$ in GridWorld could be good choices to achieve good performance, implying that only with an appropriate $\lambda$ value, the agent can receive high rewards while satisfies the constraint with fast convergence speed. That is consistent with our theory and is why we set a threshold for $\lambda$ and dynamically change it during the safe algorithms.
Moreover, cases with a small $\lambda$ are close to those without $\lambda$. 
We also provide the results on Highway environment in the appendix.

 \section{Conclusion}

In this paper, we have provided a very generic and scalable mechanism for handling a wide variety of cost constraints. 
Lagrangian-based approaches, which penalize with respect to expected cost are unable to assign credit appropriately for a cost constraint violation, as expected cost averages over all trajectories. Instead, we propose to penalize with respect to individual reward while maintaining a cost-augmented state, thereby providing precise credit assignment with regard to cost constraint violations. We theoretically demonstrate that this simple cost-augmented state and reward penalized MDP
can represent all the aforementioned cost constraints. We then provide Safe DQN and Safe SAC which are able to outperform leading expected-cost constrained RL approaches (Lyapunov and BVF)  while, at the same time, providing similar performance to leading approach for CVaR constrained RL (WCSAC). One limitation of our method is that the initial value and threshold of reward penalty might need to be tuned for each environment to achieve good performance, as these values would differ in different settings.


\bibliography{reference}
\bibliographystyle{abbrv}

\newpage
\onecolumn
\appendix
\section{Safe DQN Pseudocode}
Algorithm~\ref{alg:DQN} provides the pseudocode for the Safe DQN algorithm.

\begin{algorithm}[htbp]
 \caption{Safe DQN with Extended State Space}
 \label{alg:DQN}
 \textbf{Initialization:} Relay buffer $D$ with capacity $N$ and $D_c$ with capacity $M$, action-value function $Q$ with weight $\theta$, target action-value function $\hat{Q}$ with weight $\theta^-=\theta$, constraint penalty $\lambda$, lower bound for constraint penalty $\Lambda$.
 \begin{algorithmic}[1] 
 \FOR{each episode $k$}
 \STATE Initialize with sequence $(s_0,c_0=0)$.
 \FOR{each time step $t$}
 \STATE Select a random action $a_t$ with probability $\epsilon$, otherwise select $a_t=arg\max_a Q((s_t,c_t),a;\theta)$.
 \STATE Execute action $a_t$, observe $(s_{t+1},c_{t+1}), r_t$.
 \STATE Store $((s_t,c_t),a_t,r_t,(s_{t+1},c_{t+1}))$ in $D$.
 \STATE Update state-cost pair to $(s_{t+1},c_{t+1})$.
 \STATE \textbf{Sample} $((s_j,c_j),a_j,r_j,(s_{j+1},c_{j+1}))$ from $D$.
  \IF{$c_{j}>c_{max}$}
 \STATE $\tilde{r}_j=r(s_j)-\lambda d(s_j)/\gamma^j$
 \ELSIF{$c_{j+1}>c_{max}$}
 \STATE $\tilde{r}_j=r(s_j)-\lambda (c_t+d(s_j))/\gamma^j$
 \ELSE
 \STATE $\tilde{r}_j=r(s_j)$
 \ENDIF
 \STATE \textbf{\em \{"mask" indicates if the episode terminates\}}
 \STATE $y_j=\tilde{r}_j+\gamma*max_{a'} \hat{Q}((s_{j+1},c_{j+1}),a';\theta^-)*mask_{j+1}$.
 \STATE Update $\theta$ using $l=(y_j-Q((s_j,c_j),a_j;\theta))^2$.
 \STATE Every $C$ steps reset $\hat{Q}=Q$.
 \ENDFOR
 \STATE Store the final cost $c_{final}$ into $D_c$
 \IF{k\%M==0}
 \IF{$\max_i D_c(i)<c_{max}$ and $0.95\lambda>\Lambda$}
  \STATE $\lambda\leftarrow0.95\lambda$
 \ENDIF
 \STATE Empty replay buffer $D_c$
 \ENDIF
 \ENDFOR
 \end{algorithmic}
 \end{algorithm}

 \section{Safe SAC Pseudocode}
Algorithm~\ref{alg:SAC} provides the pseudocode for the Safe SAC algorithm.

\begin{algorithm}[htbp]
 \caption{Safe SAC with Extended State Space}
 \label{alg:SAC}
 \begin{algorithmic}[1] 
 \STATE \textbf{Initialize:} policy network $\pi$ with weight $\theta$, constraint penalty $\lambda$, lower bound for constraint penalty $\Lambda$, replay buffer $D_c$ with capacity $M$.
 \STATE \textbf{Value Function:} $Q^1,Q^2$ with weights $\phi_1,\phi_2$, target Q value functions $Q^{targ,1},Q^{targ,2}$ with weights $\phi^{targ}_1=\phi_1,\phi^{targ}_2=\phi_2$. 
 \STATE \textbf{Cost Function:} $Q^1_D,Q^2_D$ with weights $\theta_{1,D},\theta_{2,D}$, target cost functions $Q^{targ,1}_D,Q^{targ,2}_D$ with weights $\theta^{targ}_{1,D}=\theta_{1,D},\theta^{targ}_{2,D}=\phi_{2,D}$.  
 \FOR{episode $k=1,2,...,N$}
 \STATE Get initial state-cost pair $(s_0, c_0=0)$; $t\leftarrow 1$
 \WHILE{$t\leq T$}
 \STATE $t_{start}\leftarrow t$
 \WHILE{$t\leq t_{start}+n$ or $t==T$}
 \STATE Select action $a_t$ using Equation \ref{equ:constraint}.
 \STATE Execute $a_t$, observe $(s_{t+1}, c_{t+1})$ and  $r_t$.
 \STATE $t\leftarrow t+1$
 \ENDWHILE
 
 \STATE \textbf{\{Calculate \textbf{targets} for each network:\}}
  
 \STATE $\tilde{r}_t\leftarrow$ \textbf{if} $c_{t}>c_{max}$ \textbf{then} $r_t-\lambda d_t/\gamma^t$ \textbf{elif} $c_{t+1}>c_{max}$ \textbf{then}
 $r_t-\lambda (c_t+d_t)/\gamma^t$ \textbf{else} $r_t$
 \STATE $R\leftarrow$ \textbf{if} $t==T$ \textbf{then} $0$ \textbf{else} {$\tilde{r}_t +\gamma\min_{i=1,2} Q^{targ,i}((s_{t+1},c_{t+1}),\tilde{a} ')-\alpha \log\pi_\theta(\tilde{a}')$, $\tilde{a}'\sim \pi_\theta((s_{t+1},c_{t+1}))$}
 \STATE $R_D\leftarrow$ \textbf{if} $t==T$ \textbf{then} $0$ \textbf{else} $\max_{i=1,2} Q^{targ,i}_D((s_{t+1},c_{t+1}),a_{t+1};\theta_D)$

 \STATE \textbf{\{Update networks\}}
 \FOR{$i\in\{t-1,...,t_{start}\}$}
 \STATE $R\leftarrow r_i+\alpha R$, $R_D\leftarrow d_i+\alpha R_D$
\FOR{$j=1,2$}
\STATE $\mathrm{d}\phi^j\leftarrow\mathrm{d}\phi^j+\partial(R-Q^j)^2/\partial\phi^j$
 \STATE $\mathrm{d}\theta_{j,D}\leftarrow\mathrm{d}\theta_{j,D}+\partial(R_D-Q^j_D)^2/\partial\theta_{j,D}$
 \ENDFOR
 \IF{the policy is safe}
 \STATE $\mathrm{d}\theta\leftarrow\mathrm{d}\theta+w{\nabla_\theta \log \pi(a_i)(\min_{j=1,2} Q^{targ,j}-\alpha \log\pi(a_i))}$
 \ELSE
 \STATE $\mathrm{d}\theta\leftarrow\mathrm{d}\theta-\nabla_\theta \log \pi(a_i)R_D$
 \ENDIF
 \ENDFOR

 \STATE \textbf{\{Update target networks\}}
  \STATE $\phi^{targ}_1\leftarrow\rho\phi^{targ}_1+(1-\rho)\phi_1$
  \STATE $\phi^{targ}_2\leftarrow\rho\phi^{targ}_2+(1-\rho)\phi_2$
  \STATE $\theta^{targ}_{1,D}\leftarrow\rho\theta^{targ}_{1,D}+(1-\rho)\theta_{1,D}$
  \STATE $\theta^{targ}_{2,D}\leftarrow\rho\theta^{targ}_{2,D}+(1-\rho)\theta_{2,D}$
 \ENDWHILE
 \STATE \textbf{\{Dynamically modify reward penalty\}}
 \STATE Store the final cost $c_{final}$ into $D_c$
 \IF{k\%M==0}
 \IF{$\max_i D_c(i)<c_{max}$ and $0.95\lambda>\Lambda$}
  \STATE $\lambda\leftarrow0.95\lambda$
 \ENDIF
 \STATE Empty replay buffer $D_c$
 \ENDIF
 \ENDFOR
 \end{algorithmic}
 \end{algorithm}

\section{Missing Proofs}
\subsection{Proof of Theorem \ref{th:UMDP-WCMDP} }
\textbf{Theorem \ref{th:UMDP-WCMDP}. } \textit{If we set $\lambda = \infty$, then if $\pi^*$ solves  \eqref{equ:umdp}, it also solves the following worst-case constrained MDP problem
\begin{equation}
\begin{aligned}
    &\max_\pi \mathbb{E}\left[\sum_{t=0}^T {\gamma^t}r(s_t,a_t)|s_0,\pi\right]\\
    s.t&.\quad \sum_{s_t \in \tau } d(s_t)\leq c_{max},~\forall \tau\sim \pi.
\end{aligned}\nonumber
\end{equation}
As a result, $\pi^*$ is feasible to the risk-neutral CMDP \eqref{equ:cmdp}.  }
\begin{proof}
We first see that there is a unique mapping between a trajectory $\tau = \{s_0,\ldots,s_T\}$ from the original MDP to a trajectory of the extended MDP $\tau' = \{(s_0,c_0), (s_1,c_1)\ldots,(s_T,c_T)\}$ with 
 $c_0=0$ and $c_t = \sum_{i=0}^{t-1} d(s_t)$. Under the reward penalties,  we can write the objective of the extended MDP as 
 \begin{align}
     \mathbb{E}&\left[\sum_{t=0}^T {\gamma^t}r(a_t|s_t,c_t)|s_0,\pi\right] = \sum_{\tau'  = \{(s_t,c_t)\} \sim \pi } P_\pi (\tau') \left(\sum_{t} \gamma^t\widetilde{r}(a_t|s_t,c_t)\right)\nonumber \\
     &=  \sum_{\substack{\tau  = \{s_0,s_1,...\} \sim \pi \\ D(\tau)\leq c_{max}} } P_\pi(\tau) \left(\sum_t \gamma^t r(s_t,a_t)\right) + \sum_{\substack{\tau  = \{s_0,s_1,...\} \sim \pi \\ D(\tau)> c_{max}} } P_\pi(\tau) \left(\sum_t \gamma^t r(s_t,a_t) -\lambda \sum_t d(s_t)\right) \nonumber \\ 
     &=\bbE_{\pi}\left[\sum_{t}\gamma^t r(s_t,a_t)\right] - \lambda\sum_{\substack{\tau \sim \pi \\ D(\tau) > c_{\max}}} P_\pi (\tau) D(\tau)
 \end{align}
As a result, we can rewrite the MDP problem \eqref{equ:umdp} as 
\begin{equation}
 \label{eq:delta-equiv}   
\max_{\pi} \left\{\bbE_{\pi}\left[\sum_{t}\gamma^t r(s_t,a_t)\right] - \lambda\sum_{\substack{\tau \sim \pi \\ D(\tau) > c_{\max}}} P_\pi (\tau) D(\tau)\right\}
\end{equation}
 So, if we set $\lambda = \infty$, to maximize the expected reward, we need to seek a policy that assigns zero probabilities for all the trajectories $\tau$ such that  $D(\tau)>c_{max}$. Let $\Pi$ be the set of policies satisfying that condition (and assume that $\Pi$ is not empty), i.e.,  for any policy $\pi\in \Pi$ and any trajectory $\tau$ such that $D(\tau)>c_{\max}$, $P_{\pi}(\tau) = 0$. This implies that when $\lambda = \infty$, \eqref{eq:delta-equiv} is equivalent to
 \[
 \bbE_{\pi \in \Pi}\left[\sum_{t}\gamma^t r(s_t,a_t)\right]
 \]
 which is also the worst-case CMDP problem. 
\end{proof}



\subsection{Proof of Lemma \ref{lm:lm1}}
\textbf{Lemma \ref{lm:lm1}. }\textit{Let $\phi^* = c_{max} -\max_{\pi}\left\{\tbbE_{\pi}[D(\tau)|~ D(\tau) \leq  c_{max}]\right\}$. 
Given  any policy $\overline{\pi}$,  if $\widetilde{\bbE}_{\overline{\pi}}[D(\tau)|~ D(\tau) > c_{max}]\leq \phi^*$, then
$\bbE_{\overline{\pi}}[D(\tau)]\leq c_{max}$.}
\begin{proof}
For a policy $\overline{\pi}$ satisfying $\widetilde{\bbE}_{\overline{\pi}}[D(\tau)|~ D(\tau) > c_{max}]\leq \phi^*$, we have
\begin{align}
 \sum_{\tau|~ D(\tau) > c_{max}} P_{\overline{\pi}}(\tau) D(\tau) \leq c_{max} - \max_{\pi}\left\{\tbbE_{\pi}[D(\tau)|~ D(\tau) \leq  c_{max}]\right\} \nonumber,
\end{align}
which is equivalent to
\[
 \sum_{\tau|~ D(\tau) > c_{max}} P_{\overline{\pi}}(\tau) D(\tau) + \max_{\pi}\left\{\tbbE_{\pi}[D(\tau)|~ D(\tau) \leq  c_{max}]\right\} \leq c_{max} 
\]
implying
\[
c_{max} \geq  \sum_{\tau|~ D(\tau) > c_{max}} P_{\overline{\pi}}(\tau) D(\tau)  +  \sum_{\tau|~ D(\tau) \leq  c_{max}} P_{\overline{\pi}}(\tau) D(\tau) = \bbE_{\overline{\pi}}[D(\tau)] 
\]
which is  the  desired inequality.
\end{proof}


\subsection{Proof of Lemma \ref{lm:lm2}}
\textbf{Lemma \ref{lm:lm2}. }\textit{Given $\lambda>0$, let $\pi^*$ be an optimal solution to \eqref{equ:umdp}. We have
\[
 \tbbE_{\pi^*}\left[D(\tau)|~ D(\tau)>c_{max}\right] \leq \frac{\Psi^*-\overline{\Psi}}{\lambda}.
\]}
\begin{proof}
We first note that, from \eqref{eq:delta-equiv}, we can write
\[
\pi^* = \text{argmax}_{\pi} \left\{\bbE_{\pi}\left[\sum_{t}\gamma^t r(s_t,a_t)\right] - \lambda\sum_{\substack{\tau \\ D(\tau) \geq c_{\max}}} P_\pi (\tau) D(\tau)\right\}
\]
Let $\overline{\pi}$ be an optimal policy to the worst-case CMDP \eqref{equ:rmdp}. Since $\overline{\pi}$ is also feasible to the extended MDP \eqref{equ:umdp}, we have  
\begin{align}
    \bbE_{\pi^*}\left[\sum_{t}\gamma^t r(s_t,a_t)\right] - \lambda\sum_{\substack{\tau \\ D(\tau) \geq c_{\max}}} P_{\pi^*} (\tau) D(\tau) \geq \bbE_{\overline{\pi}}\left[\sum_{t}\gamma^t r(s_t,a_t)\right] = \overline{\Psi} \label{eq:proof-lm2-eq1}
\end{align}
Moreover, since $\Psi^*$ is the optimal value of the original unconstrained problem $\Psi^*  = \max_\pi \mathbb{E}\left[\sum_{t=0}^T {\gamma^t}r(s_t,a_t)|s_0,\pi\right]$, we should have
\begin{equation}
\label{eq:proof-lm2-eq2}
\Psi^* \geq  \bbE_{\pi^*}\left[\sum_{t}\gamma^t r(s_t,a_t)\right]     
\end{equation}
Combining \eqref{eq:proof-lm2-eq1} and \eqref{eq:proof-lm2-eq2} gives
\[
\Psi^* -  \lambda\sum_{\substack{\tau \\ D(\tau) \geq c_{\max}}} P_{\pi^*} (\tau) D(\tau) \geq \overline{\Psi},
\]
implying
\[
\sum_{\substack{\tau|~D(\tau) \geq c_{\max}}} P_{\pi^*} (\tau) D(\tau) \leq \frac{\Psi^* - \overline{\Psi}}{\lambda}, 
\]
which is the desired inequality. 
\end{proof}
\subsection{Proof of Theorem \ref{th:RN-CMDP}}
\textbf{Theorem \ref{th:RN-CMDP}. }
\textit{For any $\lambda \geq \frac{\Psi^*-\overline{\Psi}}{\phi^*}$
a solution to  \eqref{equ:umdp} is always feasible to the risk-neutral CMDP \eqref{equ:cmdp}.}
\begin{proof}
    The theorem is a direct result from Lemmas \ref{lm:lm1} and \ref{lm:lm2}. That is, by selecting $\lambda \geq \frac{\Psi^*-\overline{\Psi}}{\phi^*}$, from Lemm \ref{lm:lm2} we can guarantee that  
   \begin{align}
       \sum_{\substack{\tau|~D(\tau) \geq c_{\max}}} P_{\pi^*} (\tau) D(\tau) \leq \frac{\Psi^* - \overline{\Psi}}{\lambda} \leq \phi^*, \label{eq:th2-eq1} 
   \end{align} 
   where $\pi^*$ is an optimal policy to \eqref{equ:umdp}. From Lemma \ref{lm:lm2}, \eqref{eq:th2-eq1} also implies that $\pi^*$ is also feasible to the risk-neutral CMDP \eqref{equ:cmdp}, as desired.  
\end{proof}

\subsection{Proof of Theorem \ref{th:VAR-CMDP}}
\textbf{Theorem \ref{th:VAR-CMDP}. }
\textit{For any $\lambda \geq {(\Psi^*-\overline{\Psi})/}{(\alpha c_{max})
}$,
a solution to  \eqref{equ:umdp} is always feasible to  \eqref{equ:var-mdp}.}
\begin{proof}
We use Lemma \ref{lm:lm2} to see that if $\pi^*$ is a solution to \eqref{equ:umdp}, then it satisfies
\begin{equation}
\label{th4-proof-eq1}
\tbbE_{\tau\sim {\pi^*}}\Big[ D(\tau)|~D(\tau)> c_{\max} \Big] \leq \frac{\Psi^*-\overline{\Psi}}{\lambda}.
\end{equation}
On the other hand, we have
\begin{equation}
\label{th4-proof-eq2}
\begin{aligned}
    \tbbE_{\tau\sim {\pi^*}}\Big[ D(\tau)|~D(\tau)> c_{\max} \Big] &= \sum_{\tau| D(\tau)>c_{max}} P_{\pi^*}(\tau) D(\tau) \\
    &> c_{max} \sum_{\tau| D(\tau)>c_{max}} P_{\pi^*}(\tau)\\ 
    &= c_{\max} P_{\pi^*}(D(\tau)>c_{max}))
\end{aligned}
\end{equation}
Thus, if we select $\lambda \geq (\Psi^*-\overline{\Psi})/(\alpha c_{max})$, we will have the following  chain of inequalities.
\begin{align}   
\alpha &\geq \frac{\Psi^* - \overline{\Psi}}{\lambda c_{max}}\nonumber \\
&\stackrel{(a)}{\geq} \frac{1}{c_{max}} \tbbE_{\tau\sim {\pi^*}}\Big[ D(\tau)|~D(\tau)> c_{\max} \Big]\nonumber \\
&\stackrel{(b)}{\geq} P_{\pi^*}(D(\tau)>c_{max}). \nonumber
\end{align}
where $(a)$ is due to \eqref{th4-proof-eq1} and $(b)$ is due to \eqref{th4-proof-eq2}. This implies that $\pi^*$ is feasible to the chance-constrained MDP \eqref{equ:var-mdp}. We complete the proof. 
\end{proof}

\subsection{Proof of Theorem \ref{prop:var}}
\textbf{Theorem \ref{prop:var}.} \textit{
If we define the reward penalties as 
\begin{equation*}\label{eq:new-penalty1}   
\begin{cases}
\widetilde{r}(a_t|(s_t,c_t)) = r(a_t|s_t) ~\text{ if }c_t+d(s_t)\leq c_{max} \\
\widetilde{r}(a_t|(s_t,c_t)) = r(a_t|s_t) - {\lambda (t+1)/\gamma^t}\\
~~~~~~~{\text{ if } c_t \leq c_{max} \text{ and }c_t+d(s_t)> c_{max}} \\
\widetilde{r}(a_t|(s_t,c_t)) = {r(a_t|s_t) - \lambda/\gamma^t }~\text{ if } c_t > c_{max} 
\end{cases}
\end{equation*}
then if $\pi^*$ is an optimal solution to \eqref{equ:umdp}, then there is $\alpha^\lambda \in [0;\frac{\Psi^*-\overline{\Psi}}{\lambda T}]$ ($\alpha$ is dependent of $\lambda$) such that $\pi^*$ is also optimal to \eqref{equ:var-mdp}. Moreover $\lim_{\lambda\rightarrow \infty}\alpha^\lambda = 0$.}

\begin{proof}
    Under the reward setting, we can write the objective  of \eqref{equ:umdp} as 
 \begin{align}
     \mathbb{E}_{\pi}&\left[\sum_{t=0}^T {\gamma^t}r(a_t|s_t,c_t)|s_0\right] = \sum_{\tau'  = \{(s_t,c_t)\} \sim \pi } P_\pi (\tau') \left(\sum_{t} \gamma^t\widetilde{r}(a_t|s_t,c_t)\right)\nonumber \\
     &=  \sum_{\substack{\tau  = \{s_0,s_1,...\} \sim \pi \\ D(\tau)\leq c_{max}} } P_\pi(\tau) \left(\sum_t \gamma^t r(s_t,a_t)\right) + \sum_{\substack{\tau  = \{s_0,s_1,...\} \sim \pi \\ D(\tau)> c_{max}} } P_\pi(\tau) \left(\sum_t \gamma^t r(s_t,a_t) -\lambda T \right) \nonumber \\ 
     &=\bbE_{\pi}\left[\sum_{t}\gamma^t r(s_t,a_t)\right] - \lambda T P_\pi (D(\tau)>c_{max}). \label{eq:th38-eq1}  
 \end{align}
We now show that if $\pi^*$ is an optimal 
policy to \eqref{equ:umdp}, then it is also optimal for \eqref{equ:var-mdp} with 
where $\alpha^\lambda = P_{\pi^*} (D(\tau)>c_{max})$.  
By contradiction, let us assume that it is not the case. Let $\overline{\pi}$ be optimal for \eqref{equ:var-mdp}. We first see that $\pi^*$ is feasible to \eqref{equ:var-mdp}, thus 
\begin{equation}
\label{eq:th38-eq3}        \mathbb{E}_{\pi^*}\left[\sum_{t=0}^T {\gamma^t}r(s_t,a_t)\right] < \mathbb{E}_{\overline{\pi}}\left[\sum_{t=0}^T {\gamma^t}r(s_t,a_t)\right]. 
\end{equation}
Moreover, since $\overline{\pi}$ is feasible to \eqref{equ:var-mdp}, we have:
\begin{align}
    P_{\overline{\pi}}(D(\tau)>c_{max}) \leq P_{{\pi^*}}(D(\tau)>c_{max}).\label{eq:th38-eq4}
\end{align}
Combine \eqref{eq:th38-eq3} and \eqref{eq:th38-eq4} and  \eqref{eq:th38-eq1}, 
it can be seen that $\pi^*$ is not an optimal policy to \eqref{equ:umdp}, which is contrary to our initial assumption.   So, $\pi^*$ is an optimal policy for the \eqref{equ:var-mdp}.
 We now prove that $\lim_{\lambda\rightarrow \infty}\alpha^\lambda = 0$.
 To this end, we first see that if $\widehat{\pi}$ is an optimal solution to the worst-case CMDP \eqref{equ:rmdp}, then $P_{\widehat{\pi}}(D(\tau)>c_{max}) = 0$. Thus, we have the following chain of inequalities
 \begin{align}
 \Psi^* -  \lambda T \alpha^\lambda&\geq \bbE_{\pi^*}\left[\sum_{t}\gamma^t r(s_t,a_t)\right] - \lambda T P_{\pi^*}(D(\tau)>c_{max}) \nonumber\\   
 &\geq \bbE_{\widehat{\pi}}\left[\sum_{t}\gamma^t r(s_t,a_t)\right] - \lambda T P_{ \widehat{\pi}}(D(\tau)>c_{max}) \nonumber \\
 &= \bbE_{\widehat{\pi}}\left[\sum_{t}\gamma^t r(s_t,a_t)\right] = \overline{\Psi}\nonumber
 \end{align}
Thus 
\[
\alpha^\lambda\leq  \frac{\Psi^* - \overline{\Psi}}{\lambda T}.
\]
implying $\lim_{\lambda \rightarrow \infty } \alpha^\lambda = 0$.     
\end{proof}
\subsection{Proof of Theorem \ref{th:CVAR}}
\textbf{Theorem \ref{th:CVAR}. }\textit{
If we define the reward penalties as \[
\begin{cases}
\widetilde{r}(a_t|(s_t,c_t)) = r(a_t|s_t) ~\text{ if } c_t+d(s_t)\leq c_{max} \\
\widetilde{r}(a_t|(s_t,c_t)) = r(a_t|s_t) - {\lambda (c_t+d(s_t)-c_{max})/\gamma^t}\\
~~~~~~~{\text{ if } c_t \leq c_{max} \text{ and }c_t+d(s_t)> c_{max}} \\
\widetilde{r}(a_t|(s_t,c_t)) = {r(a_t|s_t) - \lambda d(s_t)/\gamma^t}~\text{ if } c_t > c_{max} 
\end{cases}
\]
then for any $\lambda>0$, there is $\beta^\lambda\in [0;\frac{\Psi^*-\overline{\Psi}}{\lambda}]$ ($\beta^\lambda$ is dependent of $\lambda$) such that 
any optimal solution to the extended CMDP \eqref{equ:umdp} is also optimal to the following risk-averse CMDP
\begin{equation}\tag{\sf\small CVaR-CMDP}
\begin{aligned}
   &\max_\pi \mathbb{E}\left[\sum_{t=0}^T {\gamma^t}r(s_t,a_t)|s_0,\pi\right]\\
    s.t&.\quad  \bbE_{\tau\sim \pi}\Big[ (D(\tau)-c_{max})^+ \Big] \leq \beta^\lambda.
\end{aligned}
\end{equation}
 Moreover, $\lim_{\lambda\rightarrow \infty} \beta^\lambda = 0$. }

\begin{proof}
We first see that, under the reward penalties defined above, the objective of \eqref{equ:umdp} becomes
 \begin{align}
     \mathbb{E}_{\pi}&\left[\sum_{t=0}^T {\gamma^t}r(a_t|s_t,c_t)|s_0\right] = \sum_{\tau'  = \{(s_t,c_t)\} \sim \pi } P_\pi (\tau') \left(\sum_{t} \gamma^t\widetilde{r}(a_t|s_t,c_t)\right)\nonumber \\
     &=  \sum_{\substack{\tau  = \{s_0,s_1,...\} \sim \pi \\ D(\tau)\leq c_{max}} } P_\pi(\tau) \left(\sum_t \gamma^t r(s_t,a_t)\right)\nonumber\\&\qquad + \sum_{\substack{\tau  = \{s_0,s_1,...\} \sim \pi \\ D(\tau)> c_{max}} } P_\pi(\tau) \left(\sum_t \gamma^t r(s_t,a_t) -\lambda \left(\sum_t d(s_t) - c_{max}\right)\right) \nonumber \\ 
     &=\bbE_{\pi}\left[\sum_{t}\gamma^t r(s_t,a_t)\right] - \lambda\sum_{\substack{\tau \sim \pi \\ D(\tau) > c_{\max}}} P_\pi (\tau) (D(\tau)-c_{max})\nonumber \\ &= \bbE_{\pi}\left[\sum_{t}\gamma^t r(s_t,a_t)\right] - \lambda\bbE_{\tau\sim \pi}\left[(D(\tau)-c_{max})^+\right] \label{eq:th39-eq1}  
 \end{align}
We now show that if $\pi^*$ is an optimal 
policy to \eqref{equ:umdp}, then it is also optimal for \eqref{equ:cvar-mdp} with 
where $\beta^\lambda = \bbE_{\tau\sim \pi^*}\Big[ (D(\tau) - c_{\max})^+ \Big]$.  
By contradiction, let us assume that $\pi^*$ is not optimal for \eqref{equ:cvar-mdp}. We then let $\overline{\pi}$ be optimal for \eqref{equ:cvar-mdp}. We first see that $\pi^*$ is feasible to \eqref{equ:cvar-mdp}, thus 
\begin{equation}
\label{eq:th3-eq3}        \mathbb{E}_{\pi^*}\left[\sum_{t=0}^T {\gamma^t}r(s_t,a_t)\right] < \mathbb{E}_{\overline{\pi}}\left[\sum_{t=0}^T {\gamma^t}r(s_t,a_t)\right] 
\end{equation}
Moreover, since $\overline{\pi}$ is feasible to \eqref{equ:cvar-mdp}, we have:
\begin{align}
    \bbE_{\tau\sim \overline{\pi}}\Big[ (D(\tau) - c_{\max})^+ \Big] \leq \beta^\lambda = \bbE_{\tau\sim {\pi^*}}\Big[ (D(\tau) - c_{\max})^+ \Big] \label{eq:th3-eq4}
\end{align}
Combine \eqref{eq:th3-eq3} and \eqref{eq:th3-eq4} we get 
\begin{equation}
 \label{eq:th39-proof-eq1}   \mathbb{E}_{\pi^*}\left[\sum_{t=0}^T {\gamma^t}r(s_t,a_t)\right] -\lambda \bbE_{\tau\sim \pi^*}\Big[ (D(\tau) - c_{\max})^+ \Big] < \mathbb{E}_{\overline{\pi}}\left[\sum_{t=0}^T {\gamma^t}r(s_t,a_t)\right] -\lambda \tbbE_{\tau\sim \overline{\pi}}\Big[ (D(\tau) - c_{\max})^+ \Big] 
\end{equation}
 Using \eqref{eq:th39-eq1}, \eqref{eq:th39-proof-eq1} implies that $\widehat{\pi}$ yields a strictly better objective value to the extended MDP, as compared to $\pi^*$, which is contrary to the assumption that $\pi^*$ is optimal for \eqref{equ:umdp}.  So, $\pi^*$ should be an optimal policy for the \eqref{equ:cvar-mdp}.
 We now prove that $\lim_{\lambda\rightarrow \infty}\beta^\lambda = 0$.
 To this end, we first see that if $\widehat{\pi}$ is an optimal solution to the worst-case CMDP \eqref{equ:rmdp}, then $\tbbE_{\tau\sim \widehat{\pi}}\Big[ (D(\tau) - c_{\max})^+ \Big] = 0$. Thus, we have the following chain of inequalities:
 \begin{align}
 \Psi^* -  \lambda\beta^\lambda&\geq \bbE_{\pi^*}\left[\sum_{t}\gamma^t r(s_t,a_t)\right] - \lambda\bbE_{\tau\sim \pi^*}\left[(D(\tau)-c_{max})^+\right] \nonumber\\   
 &\geq \bbE_{\widehat{\pi}}\left[\sum_{t}\gamma^t r(s_t,a_t)\right] - \lambda\bbE_{\tau\sim \widehat{\pi}}\left[(D(\tau)-c_{max})^+\right] \nonumber \\
 &= \bbE_{\widehat{\pi}}\left[\sum_{t}\gamma^t r(s_t,a_t)\right]  
 \end{align}
 We recall that $\bbE_{\widehat{\pi}}\left[\sum_{t}\gamma^t r(s_t,a_t)\right]  = \overline{\Psi}$ (i.e., objective 
 value of the worst-case CMDP), thus,  
\[
\beta^\lambda\leq  \frac{\Psi^* - \overline{\Psi}}{\lambda},
\]
implying $\lim_{\lambda \rightarrow \infty } \beta^\lambda = 0$ as desired.
\end{proof}

\section{Multi-constrained MDP}
We now discuss extension to CMDP with multiple cost constraints (e.g., limited fuel and bounded risk) and show how the above theoretical results can be extended to the multi-constrained variants. A multi-constrained risk-neural CMDP can be formulated as 
\begin{equation}\tag{\sf\small MRN-CMDP}
\begin{aligned}
\label{equ:mcmdp}
    &\max_\pi \mathbb{E}\left[\sum_{t=0}^T {\gamma^t}r(s_t,a_t)|s_0,\pi\right]\\
    s.t&.\quad \mathbb{E}\left[\sum_{t=0}^T  d^k(s_t)|s_0,\pi\right]\leq c^k_{max}, \forall k \in [K]
\end{aligned}
\end{equation}
where $[K]$ denotes the set $\{1,\ldots,K\}$. Similar to the single constraint case, to include cost functions in the rewards, we extend the state space to keep track of the accumulated costs as $\widetilde{\cS} = \{(s,c_1,\ldots, c_K)|~ s\in  \cS, c_k\in \bbR, ~\forall k\in [K]\}$ and define new transitions probabilities as 
\[
\begin{cases}
\widetilde{p}(s_{t+1},\bc^K_{t+1}|(s_t,\bc^K_{t}), a_t) = p(s_{t+1}|s_t,a_t)\\
\qquad\qquad\text{ if }c^k_{t+1} = c^k_t + d^k(s_t) \\
\widetilde{p}(s_{t+1},\bc^K_{t+1}|(s_t,\bc^K_t), a_t) = 0\text{ otherwise}
\end{cases}
\]
where $\bc^K_t = (c^1_t,\ldots,c^K)$ for notational simplicity.  The new rewards are also updated in such a way that every trajectory violating the constraints will be penalized. 
\[
\widetilde{r}(a_t|(s_t,\bc^K_t)) = r(a_t|s_t) - \sum_{k\in [K]} \lambda_k\delta^k(c_t),
\]
where $\delta^k(c_t)$, $\forall k\in [K]$,  are defined as follows.
\[
\delta^k(c_t) = 
\begin{cases}
    0 \text{ if },~ c^k_t +d^k(s_t)\leq c^k_{max}\\\
    (c^k_t+d^k(s_t))/\gamma^t\text{ if } c^k_t\leq c^k_{\max},\\
    \qquad\qquad c^k_t+d^k(s_t)\geq c^k_{max}\\
    d^k(s_t)/\gamma^t\text{ if } c^k_t> c^k_{\max}.
\end{cases}
\]
Here, we allow  
penalty parameters $\lambda_k$ to be different over constraints. We formulate the extended unconstrained MDP as:
\begin{equation}
\begin{aligned}
\label{equ:umdp-multi}
    &\max_\pi \left\{\mathbb{E}\left[\sum_{t=0}^T {\gamma^t}\widetilde{r}(a_t|(s_t,\bc^K_t))\Big|(s_0,\bc^K_0),\pi\right]\right\}.
\end{aligned}
\end{equation}
Similar to the single-constrained case, the reward penalties allow us to write the objective function of the extended MDP as
\begin{equation}
 \label{eq:new-obj-multiple}
 \bbE_{\pi}\left[\sum_{t}\gamma^t r(s_t,a_t)\right] - \sum_{k\in [K]} \lambda_k\widetilde{\bbE}_{\pi}\left[D^k(\tau)|~ D^k(\tau)> c^k_{\max}\right]
\end{equation}
where $D^k(\tau)$ is the accumulated cost $d^k(s_t)$ on trajectory $\tau$, i.e., $D^k(\tau) = \sum_{s_t\in\tau}d^k(s_t)$. As a result,  when $\lambda_k$ grows, the extended MDP will discount the second term of  \eqref{eq:new-obj-multiple}, thus yielding policies that satisfy or even solve risk-neural or risk-averse CMDP problems. Specifically, the following results can be proved:
\begin{itemize}
    \item When $\lambda_k = \infty$, $\forall k\in [K]$, then \eqref{equ:umdp-multi} is equivalent to worst-case CMDP (i.e., all the trajectories generated by the policy will satisfy all the cost constraints).
    \item There are lower bounds for $\lambda_k$ from which any solution to \eqref{equ:umdp-multi} will be feasible to risk-neural and VaR CMDP with multiple
    constraints.
    \item For any $\lambda_k>0$, under different reward penalty settings, \eqref{equ:umdp-multi} is equivalent to a multi-constrained CVaR CMDP or equivalent to a multi-constrained VaR CMDP.  
\end{itemize}
We present, in the following, a series of theoretical results for the above claims. 
Since the proofs are quite similar to those in the single-constrained case,  we keep them brief. 
\begin{proposition}\label{prop:multi-WC}
    If we set $\lambda_k = \infty$ for all $k\in [K]$, then the extended MDP is equivalent to the following worst-case CMDP
\begin{equation}
\begin{aligned}
\label{equ:rmdp-multi}
    &\max_\pi \mathbb{E}\left[\sum_{t=0}^T {\gamma^t}r(s_t,a_t)|s_0,\pi\right]\\
    s.t&.\quad \sum_{s_t \in \tau } d^k(s_t)\leq c^k_{max},~\forall \tau\sim \pi,~\forall k\in [K]
\end{aligned}
\end{equation}
\end{proposition}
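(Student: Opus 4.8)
The plan is to mirror the argument used in the proof of Theorem~\ref{th:UMDP-WCMDP}, but now accounting for all $K$ constraints simultaneously. First I would rewrite the objective of the extended MDP \eqref{equ:umdp-multi} in trajectory form, exactly as in \eqref{eq:new-obj-multiple}. Using the bijection between trajectories $\tau = \{s_0,\ldots,s_T\}$ of the original MDP and trajectories $\tau' = \{(s_0,\bc^K_0),\ldots,(s_T,\bc^K_T)\}$ of the extended MDP (with $c^k_0 = 0$ and $c^k_t = \sum_{i=0}^{t-1} d^k(s_i)$), the penalty structure $\delta^k$ telescopes so that the total penalty applied to a trajectory $\tau$ is exactly $\sum_{k\in[K]} \lambda_k \big(D^k(\tau)\big)\,\mathbb{I}[D^k(\tau) > c^k_{\max}]$. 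Hence the objective becomes
\begin{equation}
\label{eq:multi-wc-obj}
\max_\pi \left\{\bbE_{\pi}\left[\sum_{t}\gamma^t r(s_t,a_t)\right] - \sum_{k\in[K]} \lambda_k \sum_{\substack{\tau\sim\pi \\ D^k(\tau) > c^k_{\max}}} P_\pi(\tau) D^k(\tau)\right\}.
\end{equation}

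Next I would take the limit $\lambda_k \to \infty$ for every $k$. Since each penalty term is nonnegative (here the assumption $d^k(s)\geq 0$, guaranteeing $D^k(\tau)\geq 0$, is what makes the penalty a genuine deterrent rather than a reward), any policy placing positive probability on a trajectory that violates even a single constraint $k$ incurs an infinite penalty. Therefore, to achieve a finite (hence maximal) objective, an optimal policy must assign zero probability to every trajectory $\tau$ for which $D^k(\tau) > c^k_{\max}$ for some $k\in[K]$. Let $\Pi$ denote the set of policies whose support consists only of trajectories satisfying $D^k(\tau)\leq c^k_{\max}$ for all $k\in[K]$; assuming $\Pi$ is nonempty, the penalty term in \eqref{eq:multi-wc-obj} vanishes on $\Pi$ and equals $+\infty$ off $\Pi$, so the maximization collapses to $\max_{\pi\in\Pi} \bbE_{\pi}[\sum_t \gamma^t r(s_t,a_t)]$, which is precisely the worst-case CMDP \eqref{equ:rmdp-multi}.

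The main obstacle, and the only real difference from the single-constrained case, is handling the \emph{conjunction} of constraints correctly in the infinite-penalty limit. One must verify that it is the union of violation events (a trajectory is penalized if it violates \emph{any} one constraint), not the intersection, that gets driven to zero probability; this follows because the penalties are summed and each is individually nonnegative, so a violation of any single $k$ already forces an infinite contribution. A minor technical point to state cleanly is the order of limits: since the $\lambda_k$ may be sent to infinity independently, one should note that the conclusion holds as each $\lambda_k\to\infty$ regardless of the rate, because the feasibility set $\Pi$ is determined by the support condition alone and does not depend on the relative magnitudes of the $\lambda_k$. Given the near-identical structure to Theorem~\ref{th:UMDP-WCMDP}, I expect the write-up to be short, citing the telescoping computation from \eqref{eq:new-obj-multiple} and then invoking the same limiting argument componentwise.
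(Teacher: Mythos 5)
Your proposal is correct and follows essentially the same route as the paper's proof: rewrite the extended objective in trajectory form so the penalty telescopes to $\sum_k \lambda_k D^k(\tau)$ on each violating trajectory, then argue that $\lambda_k=\infty$ forces zero probability on any trajectory violating any constraint, reducing the problem to the worst-case CMDP. Your added remarks on the union of violation events and the independence of the limits are sound clarifications but do not change the argument.
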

\begin{proof}
    Similar to the proof of Theorem \ref{th:UMDP-WCMDP}, we write the objective of the extended MDP as  
    \begin{align}
     \mathbb{E}&\left[\sum_{t=0}^T {\gamma^t}r(a_t|s_t,\bc^K_t)|s_0,\pi\right] = \sum_{\tau'  = \{(s_t,\bc^K_t)\} \sim \pi } P_\pi (\tau') \left(\sum_{t} \gamma^t\widetilde{r}(a_t|s_t,\bc^K_t)\right)\nonumber \\
     &=  \sum_{\substack{\tau  = \{s_0,s_1,...\} \sim \pi \\ D(\tau)\leq c_{max}} } P_\pi(\tau) \left(\sum_t \gamma^t r(s_t,a_t)\right) \nonumber \\&\qquad + \sum_{\substack{\tau  = \{s_0,s_1,...\} \sim \pi \\ D(\tau)> c_{max}} } P_\pi(\tau) \left(\sum_t \gamma^t r(s_t,a_t) -\sum_{k\in [K]}\lambda_k \sum_t d^k(s_t)\right) \nonumber \\ 
     &=\bbE_{\pi}\left[\sum_{t}\gamma^t r(s_t,a_t)\right] - \sum_{k\in [K]}\lambda_k\sum_{\substack{\tau \sim \pi \\ D^k(\tau) \geq c^k_{\max}}} P_\pi (\tau) D^k(\tau).
 \end{align}
 So, if $\lambda_k = \infty$, then one needs to seek a policy that assigns zero probabilities to all the trajectories that violate the constraints, implying that the extended MDP would yield the same optimal policies as the worst-case CMDP \eqref{equ:rmdp-multi}. 
\end{proof}

\begin{proposition}
\label{prop:multi-RN}
Let $\pi^*$ and $\overline{\pi}$ be optimal policies to the extended MDP \eqref{equ:umdp-multi} and the worst-case MDP, and $\phi^k = c^k_{max} - \max_{\pi}\widetilde{\bbE}_{\pi}[D^k(\tau)|D^k(\tau)\leq c^k_{max}]$, $\forall k\in [K]$.
    If we choose $\lambda_k$ such that $\lambda_k>(\Psi^*-\overline{\Psi})/\phi^k$, then any optimal policy of \eqref{equ:umdp-multi} is feasible to the risk-neutral CMDP with multiple constraints.
\end{proposition}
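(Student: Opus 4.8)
The plan is to follow the single-constraint proof of Theorem~\ref{th:RN-CMDP} almost verbatim, but applied coordinate-wise over the $K$ constraints, with one extra observation needed to decouple the penalty terms. Concretely, I would establish a multi-constraint analogue of Lemma~\ref{lm:lm2} giving, for each $k\in[K]$, the bound $\widetilde{\bbE}_{\pi^*}[D^k(\tau)\mid D^k(\tau)>c^k_{max}]\le(\Psi^*-\overline{\Psi})/\lambda_k$, and then invoke Lemma~\ref{lm:lm1} separately for each cost $d^k$ with threshold $c^k_{max}$ to conclude $\bbE_{\pi^*}[D^k(\tau)]\le c^k_{max}$ for all $k$, which is exactly feasibility to \eqref{equ:mcmdp}.

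The central step is the per-coordinate bound. Starting from the rewritten objective \eqref{eq:new-obj-multiple}, the optimizer $\pi^*$ of \eqref{equ:umdp-multi} can be compared to $\overline{\pi}$: since $\overline{\pi}$ is optimal for the worst-case CMDP \eqref{equ:rmdp-multi} it generates no violating trajectories, so $P_{\overline{\pi}}(D^k(\tau)>c^k_{max})=0$ for every $k$ and its extended objective equals $\overline{\Psi}$. Combining optimality of $\pi^*$ with $\Psi^*\ge \bbE_{\pi^*}[\sum_t\gamma^t r(s_t,a_t)]$ yields
\[
\sum_{k\in[K]}\lambda_k\, \widetilde{\bbE}_{\pi^*}\!\left[D^k(\tau)\mid D^k(\tau)>c^k_{max}\right] \;\le\; \Psi^*-\overline{\Psi}.
\]
The key observation---the only genuinely new ingredient beyond the single-constraint case---is that every summand is nonnegative, because $d^k(s)\ge 0$ and $\lambda_k>0$. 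Hence each individual term is dominated by the whole sum, giving $\lambda_k\widetilde{\bbE}_{\pi^*}[D^k(\tau)\mid D^k(\tau)>c^k_{max}]\le \Psi^*-\overline{\Psi}$ and therefore the desired coordinate-wise bound.

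Finally I would close the argument exactly as in Theorem~\ref{th:RN-CMDP}: choosing $\lambda_k>(\Psi^*-\overline{\Psi})/\phi^k$ forces $(\Psi^*-\overline{\Psi})/\lambda_k<\phi^k$, so the coordinate-wise bound gives $\widetilde{\bbE}_{\pi^*}[D^k(\tau)\mid D^k(\tau)>c^k_{max}]\le\phi^k$, and Lemma~\ref{lm:lm1} applied to $d^k$ then yields $\bbE_{\pi^*}[D^k(\tau)]\le c^k_{max}$. As this holds simultaneously for all $k\in[K]$, $\pi^*$ is feasible to \eqref{equ:mcmdp}. I expect the main (and essentially the only) obstacle to be justifying the decoupling step cleanly: one must verify that the penalty contributions are genuinely additive and nonnegative per constraint, so that bounding their sum bounds each term, and that the worst-case feasibility of $\overline{\pi}$ really kills all $K$ penalty contributions at once; the remainder is a direct per-coordinate replay of the single-constraint lemmas.
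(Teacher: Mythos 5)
Your proposal is correct and follows essentially the same route as the paper: both derive the aggregate bound $\sum_{k}\lambda_k\widetilde{\bbE}_{\pi^*}[D^k(\tau)\mid D^k(\tau)>c^k_{max}]\leq \Psi^*-\overline{\Psi}$ by comparing $\pi^*$ against the worst-case-optimal $\overline{\pi}$ (feasible for the extended MDP with zero penalty) and then apply Lemma~\ref{lm:lm1} coordinate-wise. The only difference is that you make explicit the nonnegativity of each summand needed to drop from the sum to an individual term, a step the paper leaves implicit.
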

\begin{proof}
    Since $\overline{\pi}$ is also feasible to \eqref{equ:umdp-multi}, we have:
    \begin{align}
    \bbE_{\pi^*}\left[\sum_{t}\gamma^t r(s_t,a_t)\right] - \sum_{k\in [K]}\lambda_k\sum_{\substack{\tau \\ D^k(\tau) > c^k_{\max}}} P_{\pi^*} (\tau) D^k(\tau) \geq \bbE_{\overline{\pi}}\left[\sum_{t}\gamma^t r(s_t,a_t)\right] = \overline{\Psi}. \label{eq:proof-lm2-eq3}
\end{align}
Moreover, since $\Psi^*$ is an optimal value of the original unconstrained MDP, we have $\Psi^* \geq \bbE_{\pi^*}\left[\sum_{t}\gamma^t r(s_t,a_t)\right]$, leading to
\begin{equation}
\label{eq:multi-prop1}    
\sum_{k\in [K]}\lambda_k\sum_{\substack{\tau \\ D^k(\tau) \geq c^k_{\max}}} P_{\pi^*} (\tau) D^k(\tau) \leq \Psi^*-\overline{\Psi}. 
\end{equation}
Moreover, from Lemma \ref{lm:lm1}, we know that if $\widetilde{ \bbE}_{\pi}[D^k(\tau)|~D^k(\tau)>c^k_{max}] \leq \phi^k$, then $\bbE_{\pi}[D^k(\tau)]<c^k_{max}$, where $\phi^k = c^k_{max} - \max_{\pi}\widetilde{\bbE}_{\pi}[D^k(\tau)|D^k(\tau)\leq c_{max}]$. Therefore, if we select $\lambda_k\geq (\Psi^*-\overline{\Psi})/\phi^k$, then from \eqref{eq:multi-prop1} we see that $\widetilde{ \bbE}_{\pi^*}[D^k(\tau)|~D^k(\tau)>c^k_{max}] \leq \phi^k$ for all $k\in [K]$, implying that $\pi^*$ satisfies all the constraints, as desired.
\end{proof}

\begin{proposition} \label{prop:multi-VaR}
Given any $\alpha_k\in (0,1)$, $k\in [K]$, if we choose $\lambda_k \geq {(\Psi^*-\overline{\Psi})/}{(\alpha_k c^k_{max})
}$, $\forall k\in [K]$, then
a solution $\pi^*$ to  \eqref{equ:umdp-multi} is always feasible to the following VaR (or chance-constrained) MDP.   
\begin{equation}
\begin{aligned}
\label{equ:var-mdp-multi0}
      &\max_\pi \mathbb{E}\left[\sum_{t=0}^T {\gamma^t}r(s_t,a_t)|s_0,\pi\right]\\
    s.t&.\quad  P_{\pi}\Big[ (D^k(\tau)>c^k_{\max}\Big] \leq \alpha_k,~\forall k\in [K]
\end{aligned}
\end{equation}
\end{proposition}
\begin{proof}
  From the proof of Proposition \ref{prop:multi-RN} above, we have the following inequalities
\begin{align}
\Psi^*-\overline{\Psi} &\geq \sum_{k\in [K]}\lambda_k\sum_{\substack{\tau \\ D^k(\tau) > c^k_{\max}}} P_{\pi^*} (\tau) D^k(\tau) \nonumber\\
&\geq \sum_{k\in [K]} \lambda_k c^k_{max} P_{\pi^*}(D^k(\tau)> c^k_{max}) \nonumber
\end{align}
So if we choose $\lambda_k \geq {(\Psi^*-\overline{\Psi})/}{(\alpha_k c^k_{max})
}$, $\forall k\in [K]$, we will have
\begin{align}
    \Psi^*-\overline{\Psi} \geq \frac{(\Psi^*-\overline{\Psi})}{(\alpha_k c^k_{max})
} c^k_{max} P_{\pi^*}(D^k(\tau)> c^k_{max}),~\forall k\in [K],\nonumber
\end{align}
implying $P_{\pi^*}(D^k(\tau)> c^k_{max})\leq \alpha_k$, as desired. 
\end{proof}

\begin{proposition}\label{prop:var-multi}
If we define the reward penalties as 
\[
\widetilde{r}(a_t|(s_t,\bc^K_t)) = r(a_t|s_t) - \sum_{k\in [K]} \lambda_k\delta^k(c_t),~\forall s_t,a_t,\bc^K_t,
\]
where $\delta^k(c_t)$, $\forall k\in [K]$,  are defined as follows:
\[
\delta^k(c_t) = 
\begin{cases}
    0 \text{ if }  c^k_t +d^k(s_t)\leq c^k_{max}\\\
    (T + 1)/\gamma^t\text{ if } c^k_t\leq c^k_{\max}, ~ c^k_t+d^k(s_t)> c^k_{max}\\
    1/\gamma^t\text{ if } c^k_t> c^k_{\max},
\end{cases}
\]
then if $\pi^*$ is an optimal solution to \eqref{equ:umdp}, there is $\alpha_k^{\bDelta}\in [0;\frac{\Psi^*-\overline{\Psi}}{T\lambda_k}]$ ($\alpha_k$ is dependent of $\bDelta$)\footnote{$\bDelta$ denotes the vector $(\lambda_1,\ldots,\lambda_K)$} such that $\pi^*$ is also optimal to the following VaR CMDP.
\begin{equation}
\begin{aligned}
\label{equ:var-mdp-multi}
      &\max_\pi \mathbb{E}\left[\sum_{t=0}^T {\gamma^t}r(s_t,a_t)|s_0,\pi\right]\\
    s.t&.\quad  P_{\pi}\Big[ (D(\tau)>c^k_{\max}\Big] \leq \alpha_k^{\bDelta}, ~ \forall k\in [K]. 
\end{aligned}
\end{equation}
Moreover $\lim_{\lambda_k\rightarrow \infty}\alpha_k^{\bDelta} = 0,~ \forall k\in [K]$.  
\end{proposition}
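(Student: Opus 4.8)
The plan is to mirror the single-constrained argument of Theorem~\ref{prop:var}, the only genuinely new ingredients being separability across the $K$ constraints and bookkeeping with $K$ independent weights. The first and decisive step is to reduce the augmented objective to a clean form. Fix a policy $\pi$ and a trajectory $\tau$. Because each $d^k(s)\geq 0$, the accumulated cost $c^k_t$ is nondecreasing in $t$, so along $\tau$ there is a unique ``crossing'' step $t^*_k$ at which $c^k_{t^*_k}\leq c^k_{max}$ but $c^k_{t^*_k}+d^k(s_{t^*_k})>c^k_{max}$, and thereafter $c^k_t>c^k_{max}$ for all $t>t^*_k$. Evaluating $\gamma^t\lambda_k\delta^k(c_t)$ at the crossing step and at every subsequent step, the $\gamma^t$ factors cancel, and the crossing-step weight is chosen exactly so that the contributions telescope, for every trajectory with $D^k(\tau)>c^k_{max}$, to the same constant $\lambda_k T$, independent of $t^*_k$; trajectories with $D^k(\tau)\leq c^k_{max}$ incur no constraint-$k$ penalty. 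Since the reward penalty is the additive sum $\sum_{k}\lambda_k\delta^k$, these contributions separate across $k$, yielding
\[
\bbE_{\pi}\!\left[\sum_{t=0}^T\gamma^t\widetilde r(a_t|(s_t,\bc^K_t))\right]=\bbE_{\pi}\!\left[\sum_{t=0}^T\gamma^t r(s_t,a_t)\right]-\sum_{k\in[K]}\lambda_k T\,P_\pi\big(D^k(\tau)>c^k_{max}\big).
\]
This is the multi-constraint analogue of Eq.~\eqref{eq:th38-eq1}, and establishing it rigorously (both the telescoping and the separability) is the main obstacle; everything afterward is routine.

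Next I would set $\alpha_k^{\bDelta}=P_{\pi^*}\big(D^k(\tau)>c^k_{max}\big)$ for an optimal $\pi^*$ of \eqref{equ:umdp-multi}; by construction $\pi^*$ is feasible for \eqref{equ:var-mdp-multi} at these levels. Optimality follows by contradiction: suppose $\overline{\pi}$ is optimal for \eqref{equ:var-mdp-multi} with strictly larger reward $\bbE_{\overline{\pi}}[\sum_t\gamma^t r]>\bbE_{\pi^*}[\sum_t\gamma^t r]$. Feasibility of $\overline{\pi}$ gives $P_{\overline{\pi}}(D^k(\tau)>c^k_{max})\leq\alpha_k^{\bDelta}=P_{\pi^*}(D^k(\tau)>c^k_{max})$ for every $k$. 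Substituting both facts into the reduced objective above shows $\overline{\pi}$ attains a strictly larger augmented objective than $\pi^*$, contradicting optimality of $\pi^*$ for \eqref{equ:umdp-multi}. Hence $\pi^*$ solves \eqref{equ:var-mdp-multi}.

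For the bound and limit, let $\widehat{\pi}$ be optimal for the worst-case CMDP \eqref{equ:rmdp-multi}; by Proposition~\ref{prop:multi-WC} all its trajectories satisfy every constraint, so $P_{\widehat{\pi}}(D^k(\tau)>c^k_{max})=0$ for all $k$. Using that $\Psi^*$ dominates any reward expectation and that $\widehat{\pi}$ is feasible for \eqref{equ:umdp-multi}, the chain
\[
\Psi^*-\sum_{k\in[K]}\lambda_k T\,\alpha_k^{\bDelta}\;\geq\;\bbE_{\pi^*}\!\left[\sum_t\gamma^t r(s_t,a_t)\right]-\sum_{k\in[K]}\lambda_k T\,\alpha_k^{\bDelta}\;\geq\;\bbE_{\widehat{\pi}}\!\left[\sum_t\gamma^t r(s_t,a_t)\right]=\overline{\Psi}
\]
yields $\sum_{k}\lambda_k T\,\alpha_k^{\bDelta}\leq\Psi^*-\overline{\Psi}$. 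Since every summand is nonnegative, each is bounded by the whole sum, giving $\alpha_k^{\bDelta}\leq(\Psi^*-\overline{\Psi})/(\lambda_k T)$; as $\Psi^*$ and $\overline{\Psi}$ do not depend on $\bDelta$, letting $\lambda_k\to\infty$ forces $\alpha_k^{\bDelta}\to 0$. The one point requiring care beyond the single-constrained proof is precisely this splitting of a nonnegative sum into its terms, which is what delivers a per-constraint bound (and per-constraint limit) holding uniformly in the remaining weights.
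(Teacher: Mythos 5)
Your proposal follows the same route as the paper's proof: rewrite the augmented objective as $\bbE_{\pi}[\sum_t\gamma^t r]-\sum_k\lambda_k T\,P_\pi(D^k(\tau)>c^k_{max})$ via the telescoping of the per-step penalties, take $\alpha_k^{\bDelta}$ to be the violation probability of $\pi^*$, argue optimality by contradiction against a putative better feasible policy, and bound $\sum_k\lambda_k T\alpha_k^{\bDelta}\leq\Psi^*-\overline{\Psi}$ using the worst-case-optimal policy $\widehat{\pi}$, then drop to per-constraint bounds by nonnegativity. Your write-up is, if anything, more careful than the paper's (which defines $\alpha_k^{\bDelta}$ with a spurious extra factor of $T$ and leaves the telescoping and the sum-splitting implicit), so the proof is correct and essentially identical in approach.
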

\begin{proof}
    Similar to the  proof of Theorem \ref{th:VAR-CMDP}, we can write the objective of the extended MDP as
\[
\bbE_{\pi}\left[\sum_{t}\gamma^t r(a_t|s_t,\bc^K_t)\right] 
 = \bbE_{\pi}\left[\sum_{t}\gamma^t r(s_t,a_t)\right] - \sum_{k\in [K]}\lambda_k T P_\pi (D^k(\tau)>c^k_{max})
\]  
Then, in a similar way, if we let $\alpha^{\bDelta}_k = T P_{\pi^*} (D^k(\tau)>c^k_{max})$, then $\pi^*$ should be an optimal policy to \eqref{equ:var-mdp-multi}. In addition, we can bound $\alpha_k$ by deriving the following inequalities. 
 \begin{align}
 \Psi^* -  \sum_{k\in [K]}\lambda_k T\alpha_k^{\bDelta}&\geq \bbE_{\pi^*}\left[\sum_{t}\gamma^t r(s_t,a_t)\right] - \sum_{k\in [K]}\lambda_k T P_{ \pi^*}(D^k(\tau)>c^k_{max}) \nonumber\\   
 &\geq \bbE_{\widehat{\pi}}\left[\sum_{t}\gamma^t r(s_t,a_t)\right] - \sum_{k\in [K]}\lambda_k T P_{\widehat{\pi}}(D^k(\tau)>c^k_{max})  \nonumber \\
 &= \bbE_{\widehat{\pi}}\left[\sum_{t}\gamma^t r(s_t,a_t)\right] = \overline{\Psi},
 \end{align}
 where $\widehat{\pi}$ and $\overline{\Psi}$ are optimal policy and optimal value of the worst-case CMDP \eqref{equ:rmdp-multi}. This implies
 \[
  \sum_{k\in [K]}\lambda_k T\alpha_k^{\bDelta} \leq \Psi^* -\overline{\Psi},
 \]
 which tells us that $\alpha_k^{\bDelta} \leq \frac{\Psi^*-\overline{\Psi}}{T\lambda_k}$, implying that $\lim_{\lambda_k \rightarrow \infty} \alpha_k^{\bDelta} = 0$. 
\end{proof}

\begin{proposition}
For any $\lambda_k>0$, $k\in [K]$,
if we define the reward penalties as 
\[
\widetilde{r}(a_t|(s_t,\bc^K_t)) = r(a_t|s_t) - \sum_{k\in [K]} \lambda_k\delta^k(c_t),~\forall s_t,a_t,\bc^K_t,
\]
where $\delta^k(c_t)$, $\forall k\in [K]$,  are defined as follows:
\[
\delta^k(c_t) = 
\begin{cases}
    0 \text{ if }  c^k_t +d^k(s_t)\leq c^k_{max}\\\
    (c^k_t + d^k_t- c^k_{max})/\gamma^t\text{ if } c^k_t\leq c^k_{\max}, ~ c^k_t+d^k(s_t)> c^k_{max}\\
    d^k_t/\gamma^t\text{ if } c^k_t> c^k_{\max},
\end{cases}
\]
then there are $\beta_k^{\bDelta}\in [0;\frac{\Psi^*-\overline{\Psi}}{\lambda_k}]$ ($\beta^{\bDelta}_k$ is dependent of $\bDelta$) such that 
any optimal solution $\pi^*$ to the extended CMDP \eqref{equ:umdp-multi} is also optimal to the following multi-constrained CVaR CMDP
\begin{equation}
\begin{aligned}
\label{equ:cvar-mdp-multi}
   &\max_\pi \mathbb{E}\left[\sum_{t=0}^T {\gamma^t}r(s_t,a_t)|s_0,\pi\right]\\
    s.t&.\quad  \bbE_{\tau\sim \pi}\Big[ (D(\tau)-c_{max})^+ \Big] \leq \beta_k^{\bDelta},~\forall k\in [K]
\end{aligned}
\end{equation}
 Moreover, $\lim_{\lambda_k\rightarrow \infty} \beta_k^{\bDelta} = 0$.     
\end{proposition}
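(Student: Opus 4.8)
The plan is to follow the template of Theorem~\ref{th:CVAR}, adapting the objective-rewriting identity to the multi-constrained CVaR penalties and then running the same contradiction argument constraint-by-constraint. First I would rewrite the objective of the extended MDP \eqref{equ:umdp-multi} as a penalized reward. Fixing a trajectory $\tau$ and its unique extended counterpart, I would sum $\gamma^t\delta^k(c_t)$ along $\tau$ for each $k$: the $1/\gamma^t$ factors cancel the discounting, and the telescoping structure (the stage at which the $k$-th accumulated cost crosses $c^k_{max}$ contributes $c^k_t+d^k(s_t)-c^k_{max}$, while every later stage contributes $d^k(s_t)$) collapses to exactly $(D^k(\tau)-c^k_{max})^+$. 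Summing over trajectories then yields
\begin{equation*}
\bbE_{\pi}\left[\sum_{t}\gamma^t \widetilde{r}(a_t|(s_t,\bc^K_t))\right] = \bbE_{\pi}\left[\sum_{t}\gamma^t r(s_t,a_t)\right] - \sum_{k\in[K]}\lambda_k\,\bbE_{\tau\sim\pi}\left[(D^k(\tau)-c^k_{max})^+\right],
\end{equation*}
which is the direct multi-constraint analogue of \eqref{eq:th39-eq1}.

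Next I would set $\beta_k^{\bDelta}=\bbE_{\tau\sim\pi^*}[(D^k(\tau)-c^k_{max})^+]$ for an optimal $\pi^*$ of \eqref{equ:umdp-multi}, so that $\pi^*$ is feasible for \eqref{equ:cvar-mdp-multi} by construction, and then argue optimality by contradiction. If some $\overline{\pi}$ were strictly better for \eqref{equ:cvar-mdp-multi}, it would attain strictly larger expected reward while satisfying $\bbE_{\overline{\pi}}[(D^k(\tau)-c^k_{max})^+]\le\beta_k^{\bDelta}$ for every $k$. Multiplying each of these $K$ inequalities by $\lambda_k\ge0$ and summing shows that the total penalty of $\overline{\pi}$ is no larger than that of $\pi^*$; combined with the strictly higher reward, the penalized objective displayed above would be strictly larger at $\overline{\pi}$, contradicting the optimality of $\pi^*$ for the extended MDP.

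Finally, to show $\lim_{\lambda_k\to\infty}\beta_k^{\bDelta}=0$, I would compare $\pi^*$ against an optimal policy $\widehat{\pi}$ of the worst-case CMDP \eqref{equ:rmdp-multi}, which satisfies $\bbE_{\widehat{\pi}}[(D^k(\tau)-c^k_{max})^+]=0$ for all $k$ and attains reward $\overline{\Psi}$. Evaluating the penalized objective at $\pi^*$ and $\widehat{\pi}$, and using $\Psi^*\ge\bbE_{\pi^*}[\sum_t\gamma^t r(s_t,a_t)]$, gives the chain $\Psi^*-\sum_k\lambda_k\beta_k^{\bDelta}\ge\bbE_{\pi^*}[\cdots]-\sum_k\lambda_k\beta_k^{\bDelta}\ge\bbE_{\widehat{\pi}}[\cdots]=\overline{\Psi}$, hence $\sum_{k}\lambda_k\beta_k^{\bDelta}\le\Psi^*-\overline{\Psi}$; since each summand is nonnegative, $\beta_k^{\bDelta}\le(\Psi^*-\overline{\Psi})/\lambda_k$, which both places $\beta_k^{\bDelta}$ in the claimed interval and forces the limit to vanish.

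All the steps are routine given the single-constrained proof, so I expect no genuine difficulty. The one place to be careful is the telescoping identity establishing that the discounted penalty accumulates exactly to $(D^k(\tau)-c^k_{max})^+$ --- in particular checking the crossing stage, where $\delta^k$ switches from the $(c^k_t+d^k(s_t)-c^k_{max})$ form to the $d^k(s_t)$ form --- and in the final bound noting that decoupling $\sum_k\lambda_k\beta_k^{\bDelta}\le\Psi^*-\overline{\Psi}$ into per-$k$ bounds relies only on the nonnegativity of every $\beta_k^{\bDelta}$, not on any independence among the constraints.
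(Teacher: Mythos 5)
Your proposal is correct and follows essentially the same route as the paper's proof: the same rewriting of the extended-MDP objective as $\bbE_{\pi}[\sum_t\gamma^t r(s_t,a_t)]-\sum_k\lambda_k\bbE_\pi[(D^k(\tau)-c^k_{max})^+]$, the same choice $\beta_k^{\bDelta}=\bbE_{\pi^*}[(D^k(\tau)-c^k_{max})^+]$ with a contradiction argument for optimality, and the same comparison against the worst-case policy to obtain $\sum_k\lambda_k\beta_k^{\bDelta}\le\Psi^*-\overline{\Psi}$ and hence the per-$k$ bounds and the limit. Your write-up is in fact somewhat more explicit than the paper's (which defers the telescoping and the contradiction step to the single-constraint proofs), but there is no substantive difference in approach.
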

\begin{proof}
   Under the reward setting, we first write the objective function of the extended MDP as
    \[    \bbE_{\pi}\left[\sum_{t}\gamma^t r(a_t|s_t,\bc^K_t)\right] 
 = \bbE_{\pi}\left[\sum_{t}\gamma^t r(s_t,a_t)\right] - \sum_{k\in [K]}\lambda_k \bbE_\pi \left[(D^k(\tau)-c^k_{max})^+\right]
    \]
 Following the same derivations as in the proof of Theorem \ref{th:CVAR},    
   we  can further show that, by contradiction,  $\pi^*$ is also optimal for the CVaR CMDP \eqref{equ:cvar-mdp-multi} with $\beta_k^{\bDelta} =  \bbE_{\pi^*}\Big[ (D^k(\tau)- c^k_{\max})^+ \Big]$. To prove  $\lim_{\lambda_k\rightarrow \infty} \beta_k^{\bDelta} = 0$, we derive similar inequalities as in the proof of Proposition \ref{prop:var-multi}, as follows:
   \begin{align}
 \Psi^* -  \sum_{k\in [K]}\lambda_k\beta_k^{\bDelta}&\geq \bbE_{\pi^*}\left[\sum_{t}\gamma^t r(s_t,a_t)\right] - \sum_{k\in [K]}\lambda_k \bbE_{\pi^*}\left[(D^k(\tau)-c^k_{max})^+\right] \nonumber\\   
 &\geq \bbE_{\widehat{\pi}}\left[\sum_{t}\gamma^t r(s_t,a_t)\right] - \sum_{k\in [K]}\lambda_k \bbE_{\widehat{\pi}}\left[(D^k(\tau)-c^k_{max})^+\right]  \nonumber \\
 &= \bbE_{\widehat{\pi}}\left[\sum_{t}\gamma^t r(s_t,a_t)\right] = \overline{\Psi},\nonumber
 \end{align}  
implying that $\beta_k^{\bDelta}\leq \frac{\Psi^*-\overline{\Psi}}{\lambda_k}$, thus $\lim_{\lambda_k\rightarrow \infty} \beta_k^{\bDelta} = 0$ as desired. 
\end{proof}

\section{Experimental Results on Puddle Environment: RN-CMDP}
 Inspired by \cite{jain2021safe}, we test all the methods on the continuous puddle environment. The environment is shown in Figure \ref{fig:puddleperform}. It is a continuous two-dimensional state-space environment in [0, 1]. The agent starts at the bottom left corner of the map (0, 0) and the objective is to move to the goal at the upper right corner (1, 1). The agent can move in four directions and occasionally agent will execute a random action with probability $p=0.05$ instead of the one selected by the agent. When the agent is within 0.1 L1 distance from the goal state, the agent can be seen as reaching the goal and receive a reward of 100 while agent gets a time penalty as -0.1 at each time step. There is a square puddle region centering at (0.5, 0.5) with 0.4 height. In each time step, if agent is located in the puddle area, it gets a cost of 1. Due to the existence of noise, we cannot set the threshold $c_{max}$ too small as it would be hard for agent to reach the goal, so we set $c_{max}=8$, meaning agent could stay in puddle area for at most 8 time steps.


 We show the results in Figure \ref{fig:puddleperform}. As can be seen from the figure, Safe SAC could outperform other methods in both reward and cost. Although Safe DQN can always satisfy the constraint, it always fail to reach the goal to get the maximum reward. For BVF, when the backward value function succeeds to estimate the cost, the reward starts to decrease and worse than Safe SAC.

\begin{figure*}[htbp]
    \centering
    \begin{minipage}{.3\columnwidth}
        \centering
        \includegraphics[scale=0.3]{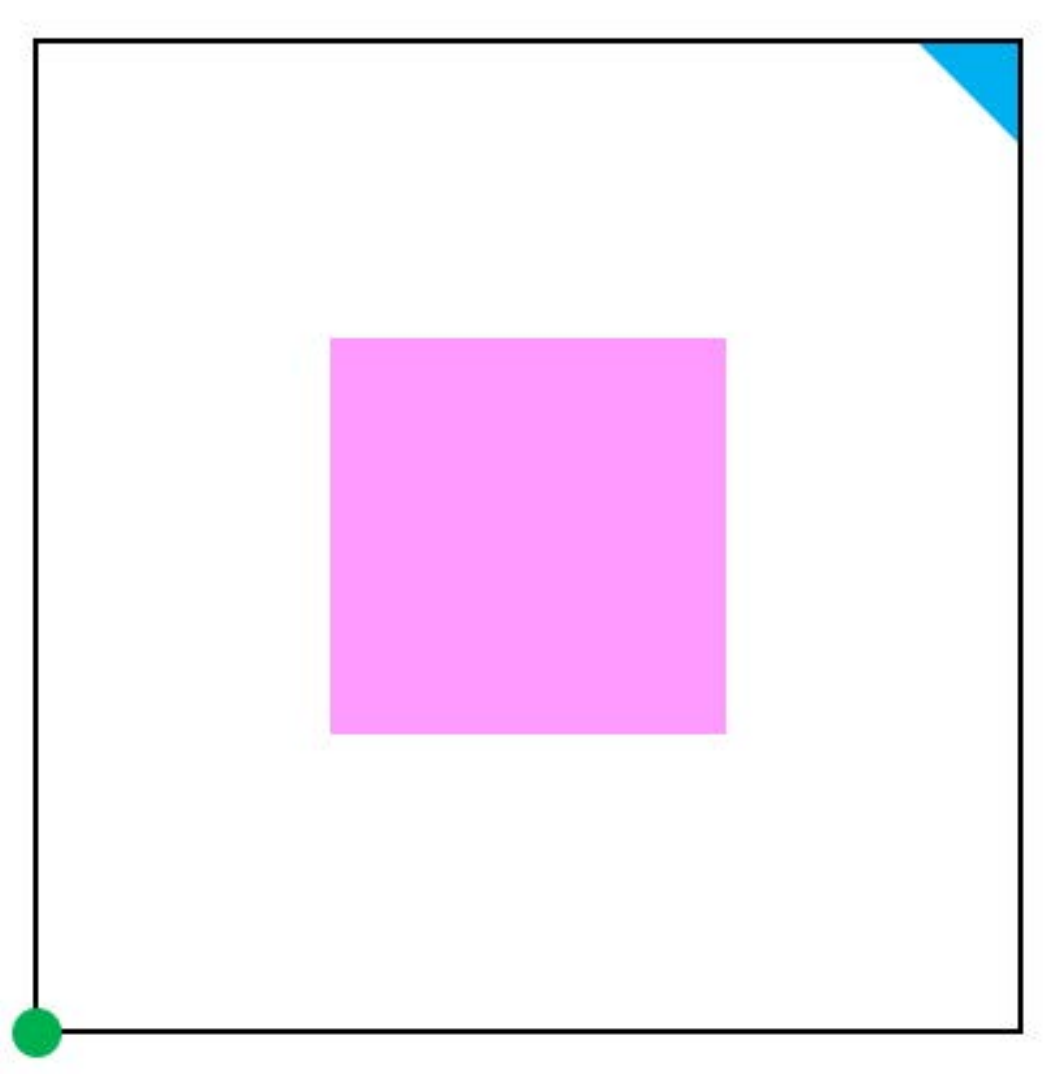}
        \label{puddleenv}
    \end{minipage}\hfill
    \begin{minipage}{.34\columnwidth}
        \centering
        \includegraphics[scale=0.33]{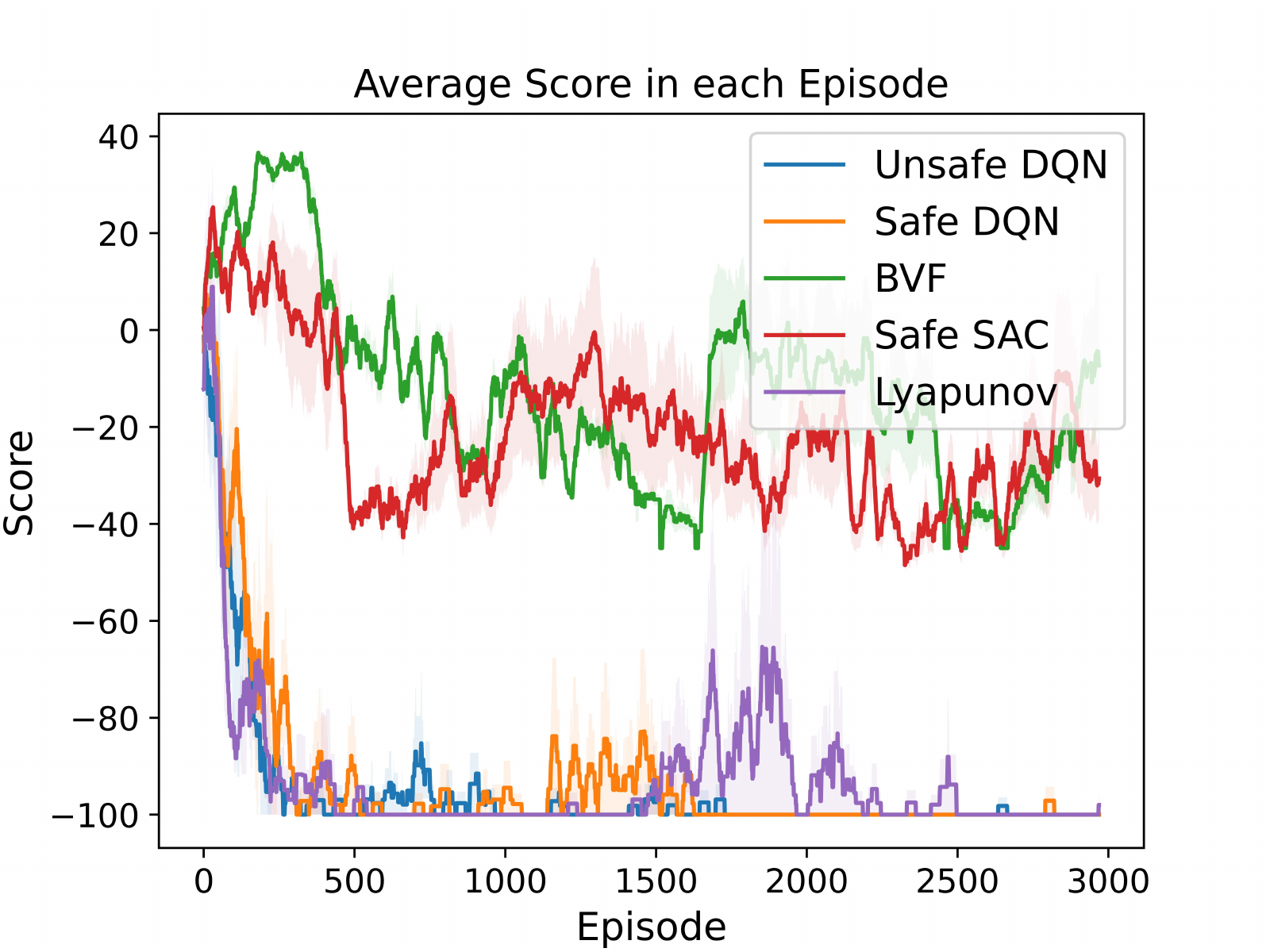}
        \label{puddlescore}
    \end{minipage}\hfill
    \begin{minipage}{.34\columnwidth}
        \centering
        \includegraphics[scale=0.33]{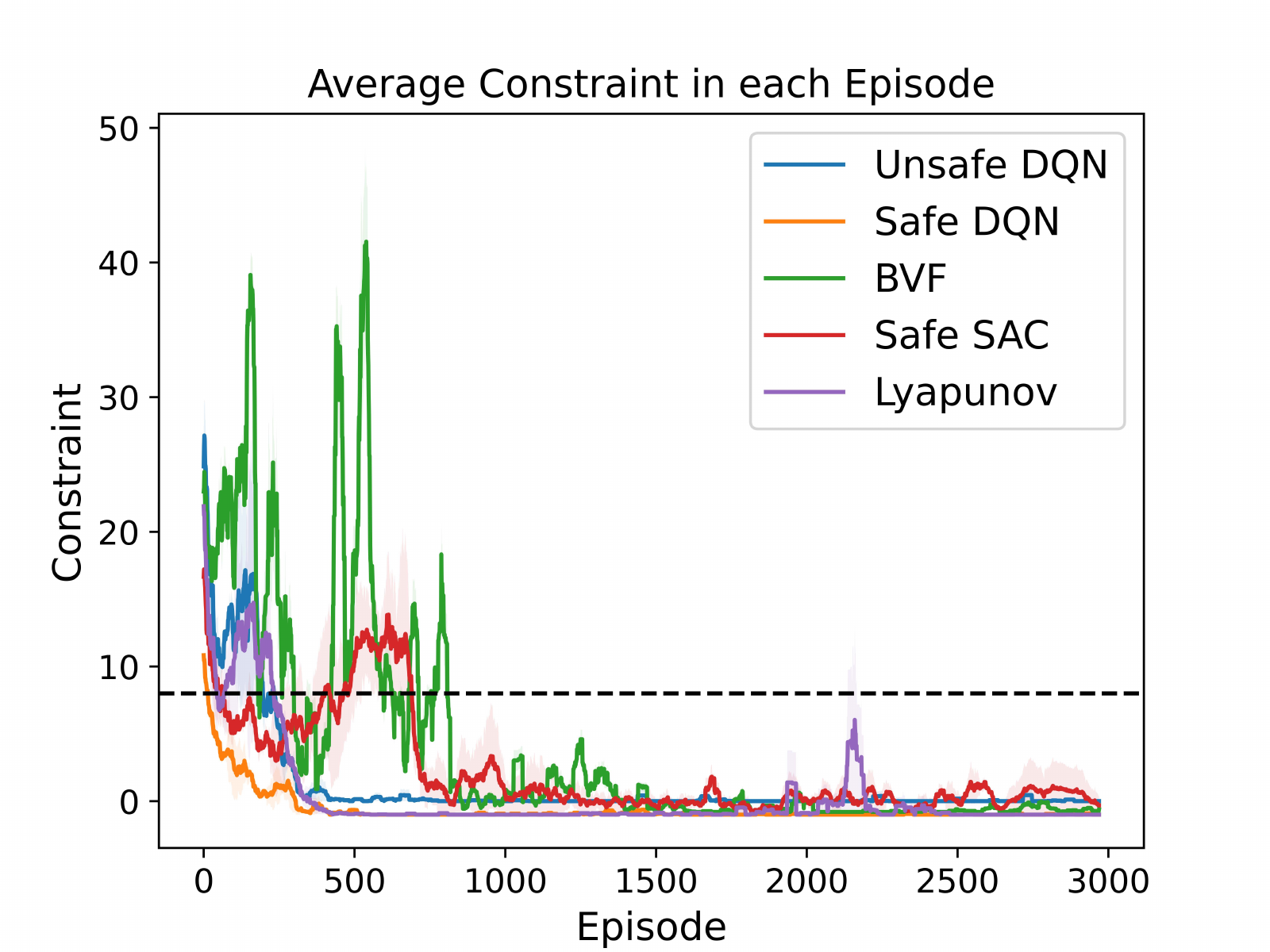}
        \label{puddlecons}
    \end{minipage}\hfill
    \caption{Puddle environment and reward, cost comparison of different approaches}
    \label{fig:puddleperform}
\end{figure*}

\section{Experimental Results on Highway Merge Environment: RN-CMDP}

\begin{figure*}[tb]
    \centering
    \begin{minipage}{0.3\linewidth}
        \centering
        \includegraphics[scale=0.4]{Merge_Environment.pdf}
        \label{merge}
    \end{minipage}\hfill
    \begin{minipage}{0.34\linewidth}
        \centering
        \includegraphics[scale=0.33]{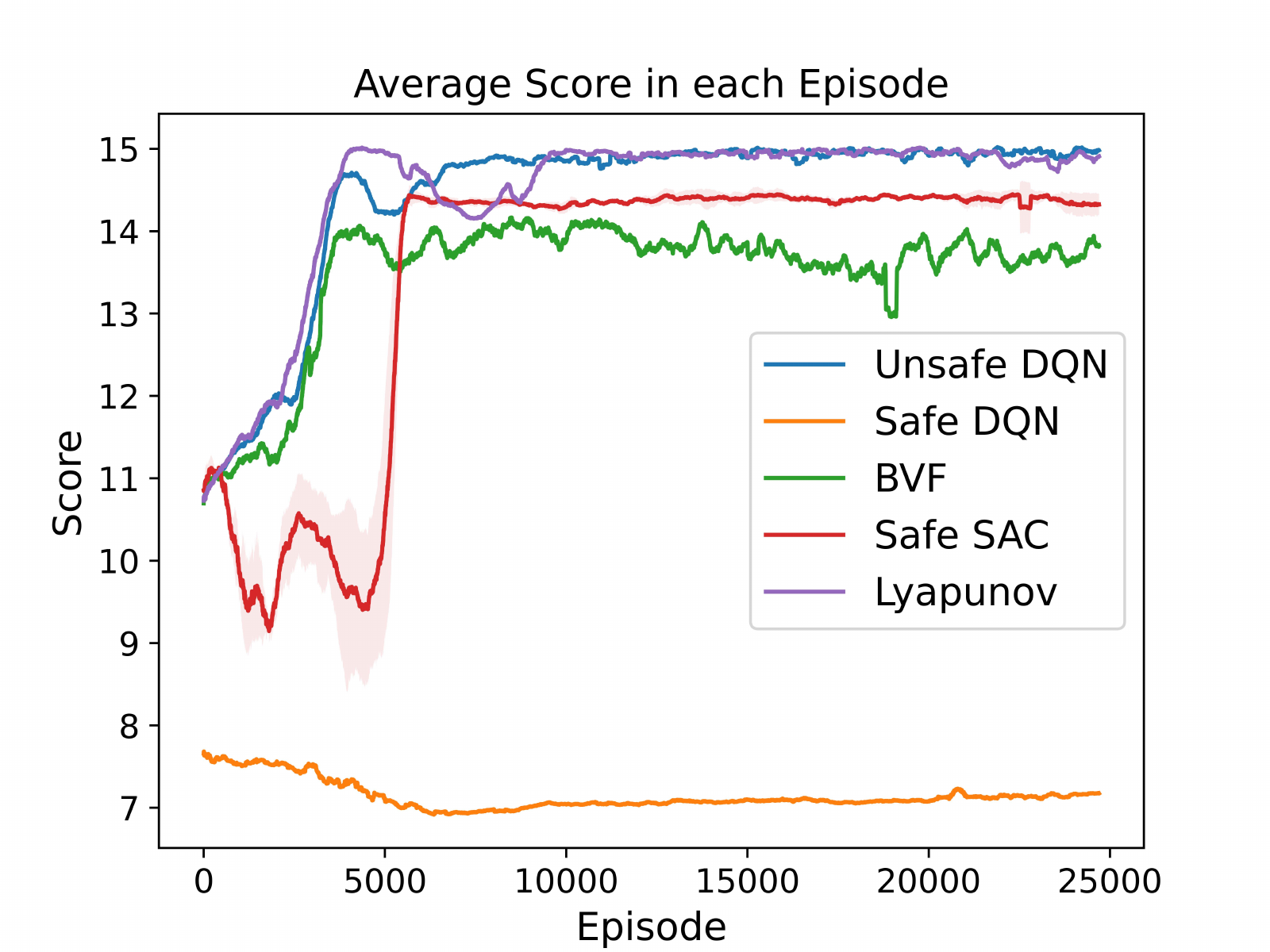}
        \label{mergescore}
    \end{minipage}\hfill
    \begin{minipage}{0.34\linewidth}
        \centering
        \includegraphics[scale=0.33]{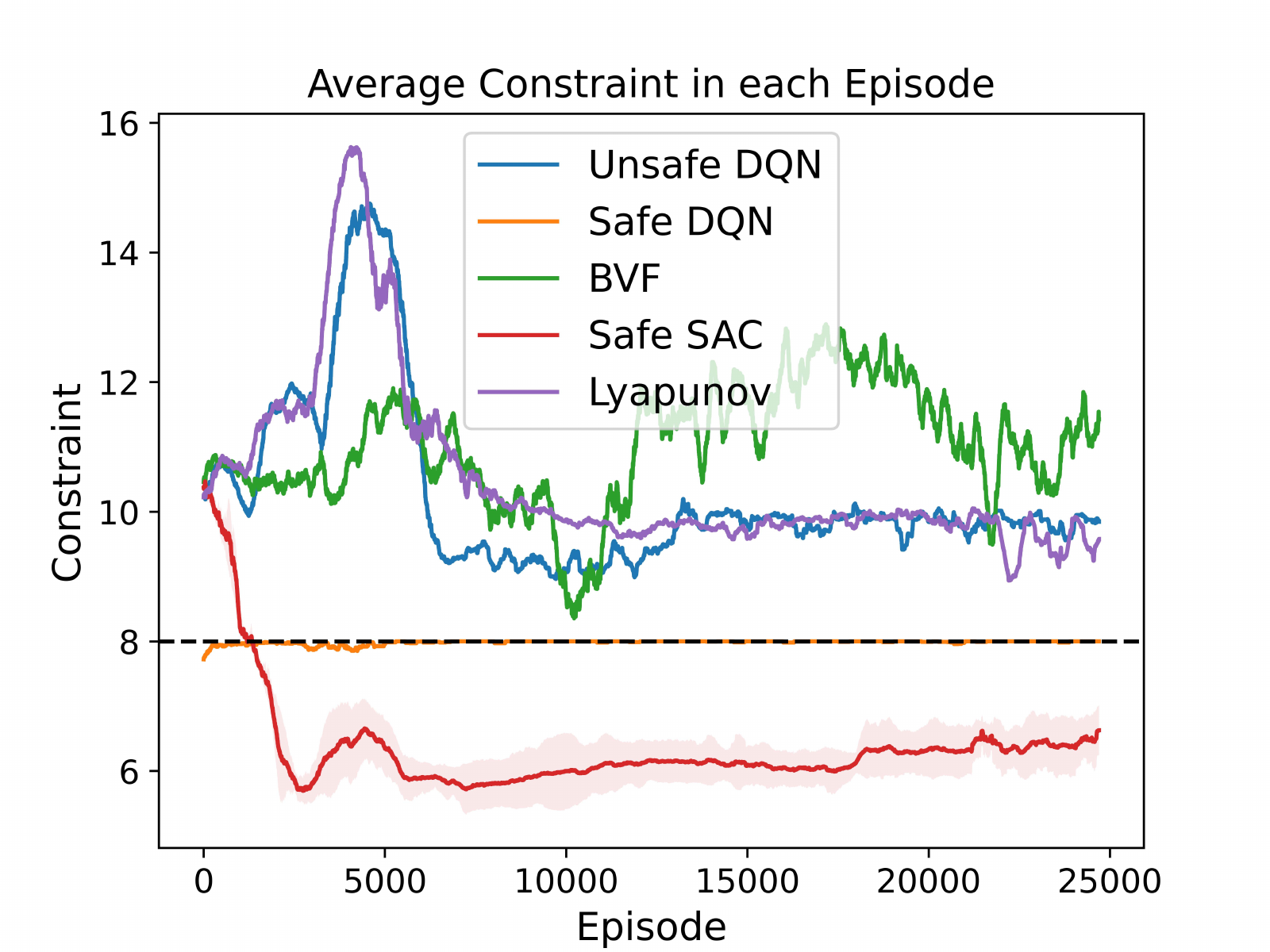}
        \label{mergecons}
    \end{minipage}\hfill
    \caption{Merge environment and reward, cost comparison of different approaches}
    \label{fig:mergeperform}
\end{figure*}

We also evaluate our safe methods on another highway environment - merge. The environment is shown in Figure \ref{fig:mergeperform} where agent needs to take actions to complete merging with other vehicles. The rewards are similar to those in highway environment. Figure \ref{fig:mergeperform} shows a comparison of our safe methods with other benchmarks. Although Safe DQN fails to complete the task in merge environment, Safe SAC still outperforms BVF and Unsafe DQN with better score and lower cost. The reason that Safe DQN fails is that the combinations of extended space is too large in merge environment for Safe DQN to figure it out. That is also why Safe DQN converges quite slowly in highway environment. As Safe DQN is unable to deal with large size of state space, Safe SAC outperforms Safe DQN in continuous environments.

\section{Hyperparameters}
In case of discrete environment - GridWorld, the size of state space is $8\times 8$ with 18 pits. In Highway environment (including merge), related parameters and their values are listed below. There is an additional reward in merge environment named $merging\_speed\_reward$ with value of -0.5. It penalties the agent if it drives with speed less than 30 while merging.
 \begin{itemize}
     \item $lanes\_count$: Number of lanes, setting as 4 in both environments.
     \item $vehicles\_count$: Number of vehicles on lanes, setting as 50 in both environments.
     \item $controlled\_vehicles$: Number of agents, setting as 1 in both environments.
     \item $duration$: Duration of the game, setting as 40 in both environments.
     \item $ego\_spacing$: The space of vehicles, setting as 2 in both environments.
     \item $vehicles\_density$: The density of vehicles on lanes, setting as 1 in both environments.
     \item $reward\_speed\_range$: The range where agent can receive $high\_speed\_reward$, setting as [20, 30] in both environments.
     \item $high\_speed\_reward$: Reward received when driving with speed in $reward\_speed\_range$, setting as 0.4 in highway while 0.2 in merge.
     \item $collision\_reward$: Reward received when colliding with a vehicle, setting as -1 in both environments.
     \item $right\_lane\_reward$: Reward received when driving on the right-most lane, setting as 0.1 in both environments.
     \item $lane\_change\_reward$: Reward received when taking lane change action, setting as 0 in highway while -0.05 in merge.
 \end{itemize}
 In all the methods, we use networks with a hidden layer size of 64,64,64 along with the ReLu activation and use Adam optimizer to optimize the networks. We test our methods on GridWorld, Highway, Safety Gym, Puddle, Highway for 25000, 25000, 1000, 3000, 25000 episodes respectively and update the network every 4 steps.

 \section{Ablation Analysis}
To investigate the effect of state augmentation and reward penalty, we conduct the ablation analysis in GridWorld and Highway environment. We compare the performance of Original SAC/ SAC with state augmentation (Augmented SAC)/ SAC with state augmentation and reward penalty (Safe SAC) and show the results of GridWorld in Figure \ref{fig:ablation1} and Highway in Figure \ref{fig:ablation2}. In Augmented SAC, agent chooses action using Equation \ref{equ:constraintaa}, if no action could satisfy the constraint, it chooses the action with minimum future cost. 

In concrete environment (GridWorld), agent can reach good performance and satisfy the constraint only with state augmentation and reward penalty together. In continuous environment (Highway), Augmented SAC is able to learn the safe policy but suffers conservative in costs while dynamic reward penalty in Safe SAC manages to solve it.

\begin{equation}
\begin{aligned}
\label{equ:constraintaa}
    &\arg\max_a \min_{i=1,2} Q^i((s,c),a)-\alpha \log\pi(a|(s,c))\\
    s.t.&\max_{i=1,2} Q^i_D((s,c),a)+c-d((s,c))\leq c_{max},\forall (s,c)
\end{aligned}
\end{equation}


\begin{figure}[htbp]
    \centering
    \begin{minipage}{0.5\linewidth}
        \centering
        \includegraphics[scale=0.45]{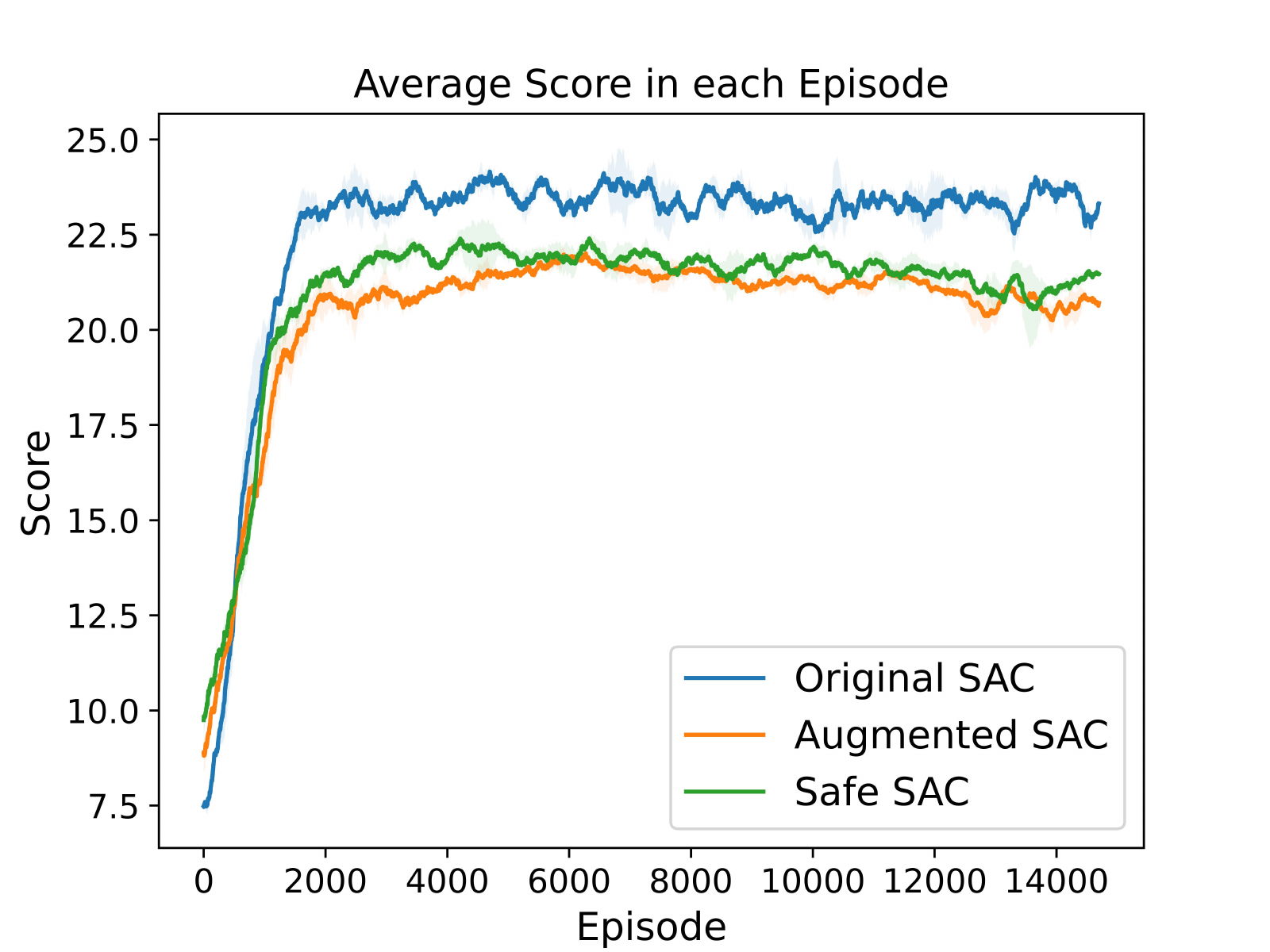}
        \label{comparescore1}
    \end{minipage}\hfill
    \begin{minipage}{0.5\linewidth}
        \centering
        \includegraphics[scale=0.45]{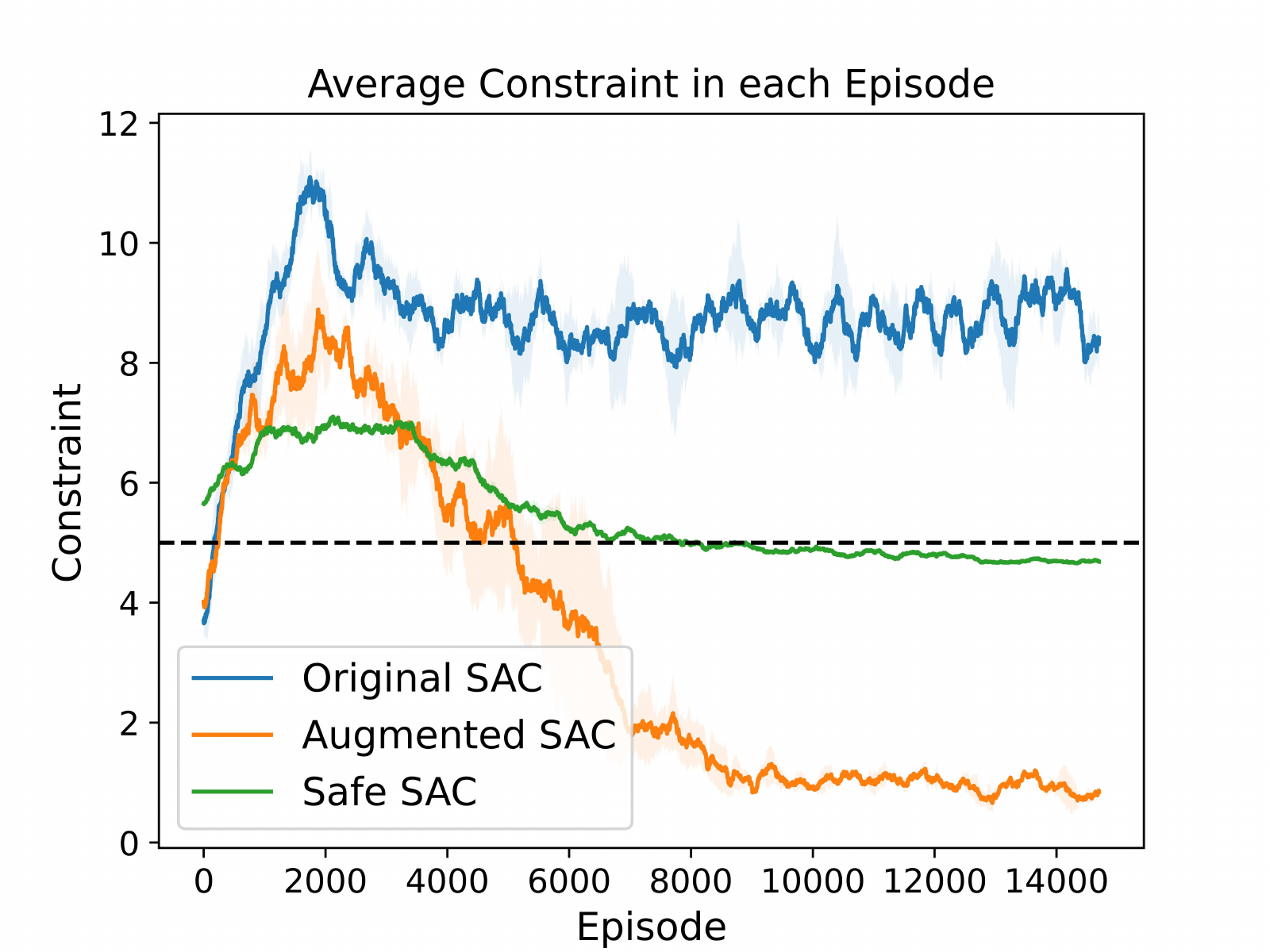}
        \label{comparecons1}
    \end{minipage}\hfill
    \caption{Ablation Analysis in Highway Environment}
    \label{fig:ablation2}
\end{figure}

\section{Impact of Value of Reward Penalty \texorpdfstring{$\lambda$}{}}
 To investigate the effect of different reward penalty values, we conduct experiments in GridWorld and Highway with Safe SAC by , with  $\lambda_1 = 1,\lambda_2=5\lambda_1,\lambda_3=10\lambda_1$, a small $\lambda_4 = 0.001$ and $\lambda_5=0$, we show the Highway results in Figure \ref{fig:constraintpenalty2}. In this experiment, we fix $\lambda$ so that it would be conservative in some cases. Here are some observations: 
 We can see that $\lambda_1$ in Highway could be good choices to achieve good performance, implying that only with an appropriate $\lambda$ value, the agent can receive high rewards while satisfies the constraint with fast convergence speed. That is consistent with our theory and is why we set a threshold for $\lambda$ and dynamically change it during the safe algorithms.
Moreover, cases with a small $\lambda$ are close to those without $\lambda$. Cases with large $\lambda$ are close to \eqref{equ:rmdp}, which does not allow constraint violation.


 \begin{figure}[htbp]
    \centering
    \begin{minipage}{0.5\linewidth}
        \centering
        \includegraphics[scale=0.45]{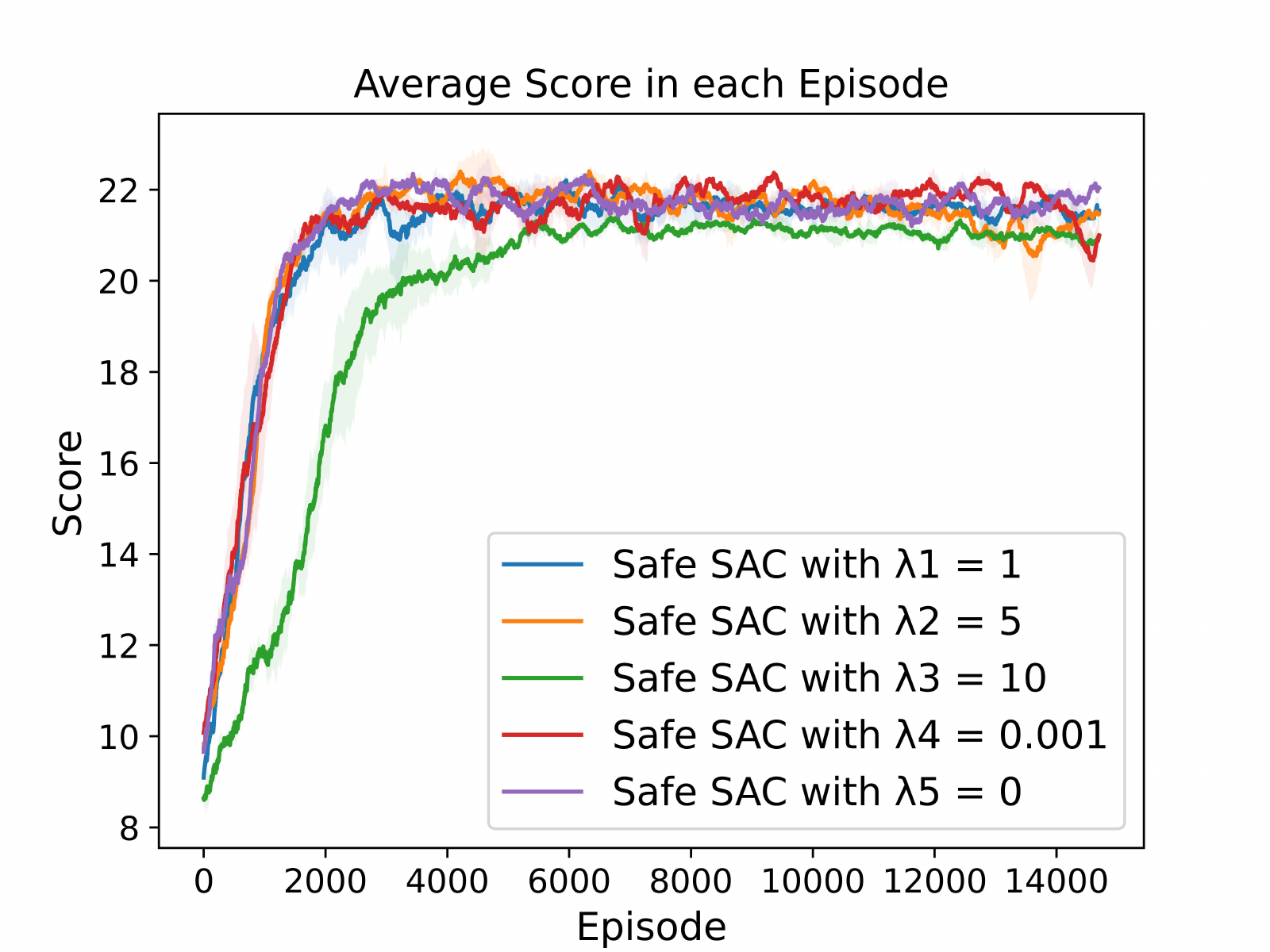}
        \label{consscore2}
    \end{minipage}\hfill
    \begin{minipage}{0.5\linewidth}
        \centering
        \includegraphics[scale=0.45]{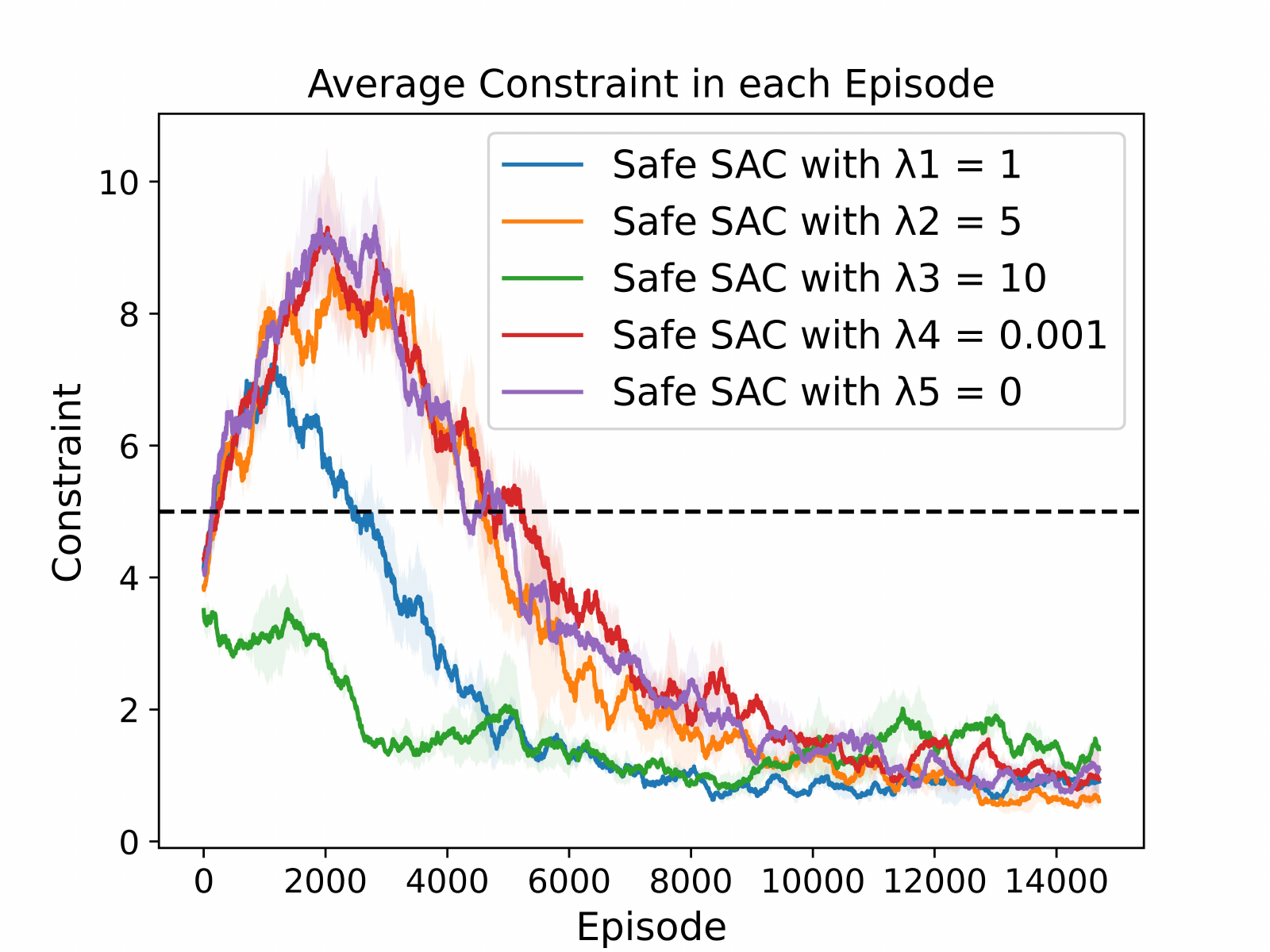}
        \label{conscons2}
    \end{minipage}\hfill
    \caption{Experiment in Highway with Different Reward Penalties}
    \label{fig:constraintpenalty2}
\end{figure}

We also show how $\lambda$ changes in Safe SAC. Figure \ref{fig:lambda} shows the average value of $\lambda$ in recent 300 episodes in GridWorld and Highway environment. We set the initial value of $\lambda$ as 2 and the lower bound for constraint penalty $\lambda$ as 0.1 in both environments.

 \begin{figure}[htbp]
    \centering
    \begin{minipage}{0.5\linewidth}
        \centering
        \includegraphics[scale=0.45]{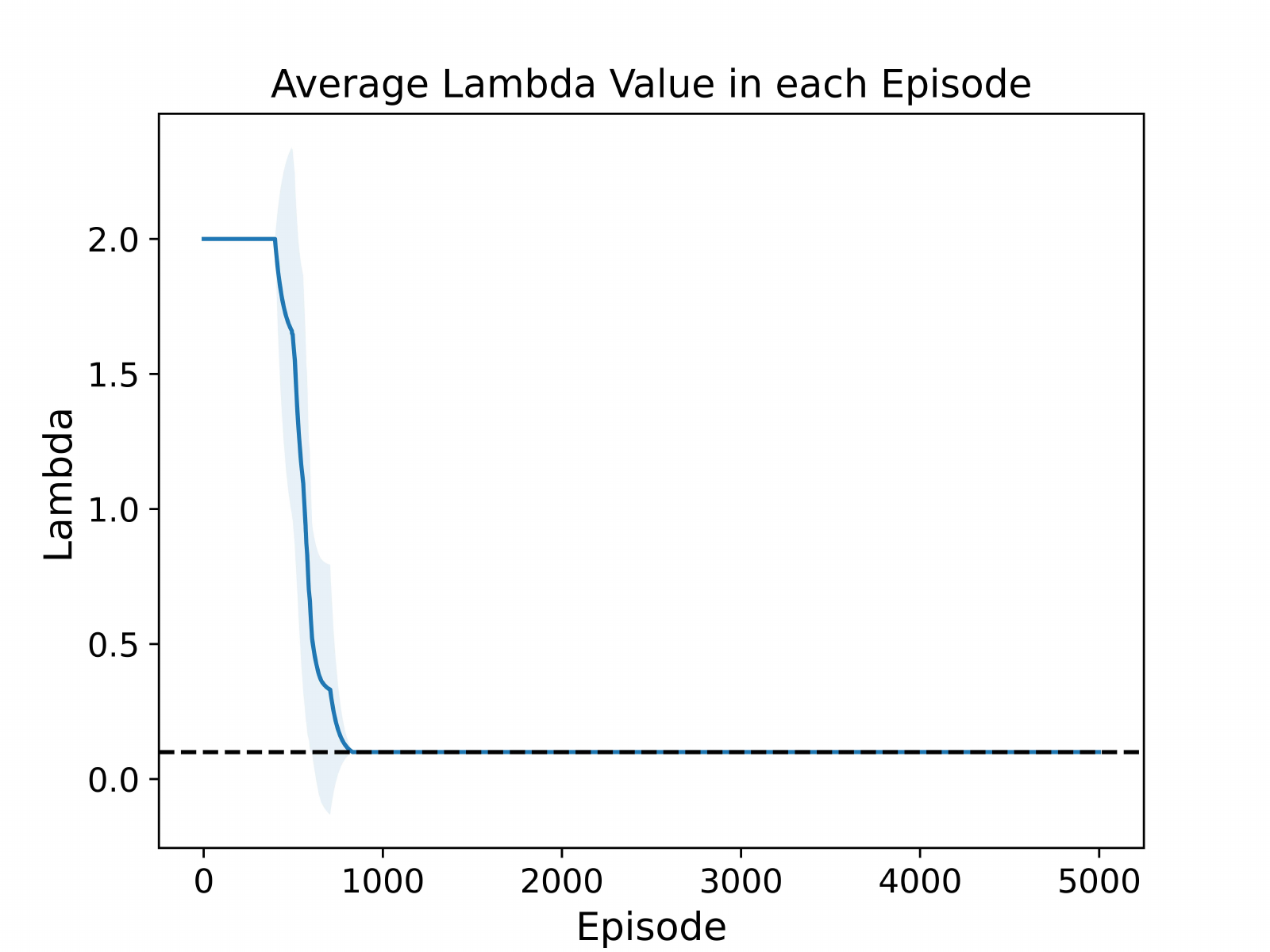}
        \label{gridlambda}
    \end{minipage}\hfill
    \begin{minipage}{0.5\linewidth}
        \centering
        \includegraphics[scale=0.45]{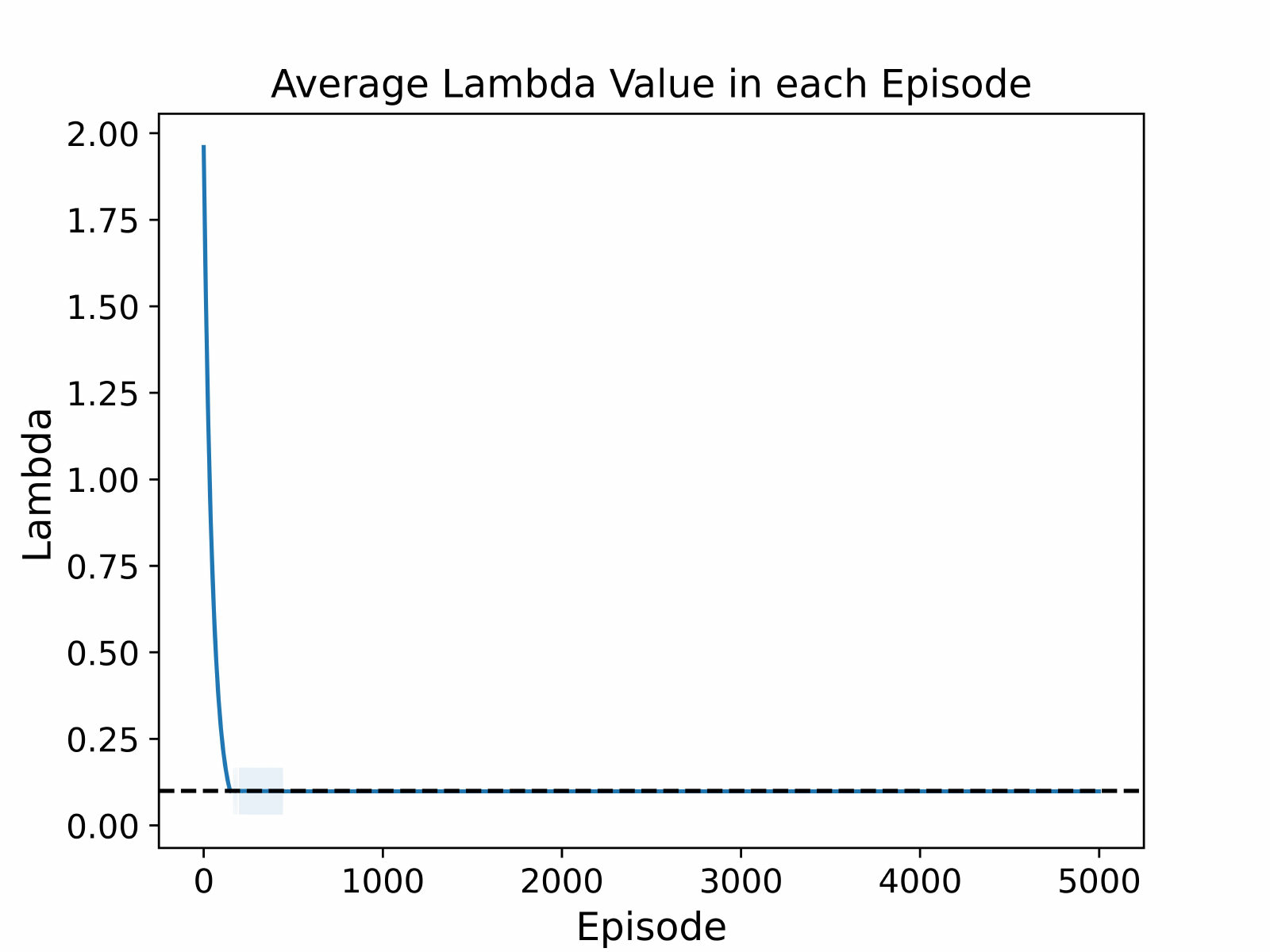}
        \label{highwaylambda}
    \end{minipage}\hfill
    \caption{$\lambda$ Value in GridWorld and Highway Environment}
    \label{fig:lambda}
\end{figure}

\section{Experimental Results with CVaR constraint: CVaR-CMDP}
In the previous experiments, we think a policy is safe when its expected constraint does not exceed $c_{max}$. However, using expected constraint does not consider some potential risk brought by randomness in the system, so in this part we consider the worst case with CVaR constraint. 

We introduce CVaR constraint to our Safe SAC method and compare it with WCSAC \cite{yang2021wcsac}, which is a leading method using CVaR constraint. As WCSAC is limited to continuous space, we only do the comparison in the continuous environment. We firstly compare the performance in Safety Gym environment, which is used in WCSAC research, additionally we provide the results in other continuous environments.

\subsection{Safety Gym Environment}
For safety gym environment, we use the same environment from \cite{yang2021wcsac} - StaticEnv in Safety Gym \cite{ray2019benchmarking}. The environment is shown in Figure \ref{fig:safetygymcvar}. The point agent has two types of actions: one is for turning and another is for moving forward/backward. The objective is to reach the goal position while trying to avoid hazardous areas. The agent gets a reward of $r-0.2$ in each time step, where $r$ is an original reward signal of Safety Gym (distance towards goal plus a constant for being within range of goal) while -0.2 functions as a time penalty. In each step, if the agent is located in the hazardous area, it gets a cost of 1. We set $c_{max}=8$, meaning agent could stay in hazardous area for at most 8 time steps. For risk level $\alpha$ in WCSAC, we set $\alpha=0.9$ and use the almost risk-neutral WCSAC, which is proven to reach the best performance in both reward and cost in experiment.

 We show the results in Figure \ref{fig:safetygymcvar}. As can be seen from the figure, Safe SAC is able to achieve similar performance to that of WCSAC.
 
\begin{figure*}[htbp]
    \centering
    \begin{minipage}{.3\columnwidth}
        \centering
        \includegraphics[scale=0.24]{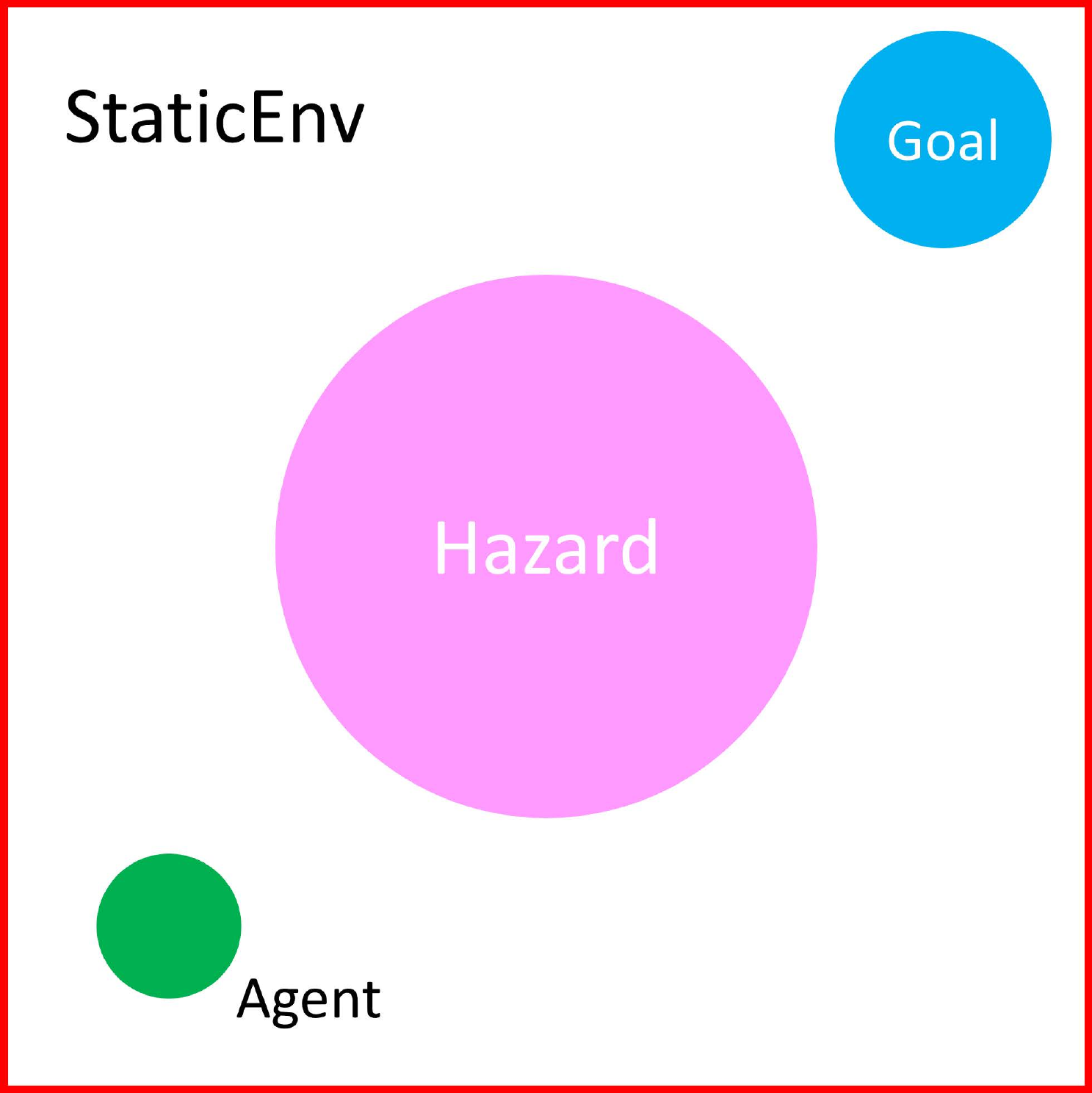}
        \label{safetygymenv}
    \end{minipage}\hfill
    \begin{minipage}{.34\columnwidth}
        \centering
        \includegraphics[scale=0.33]{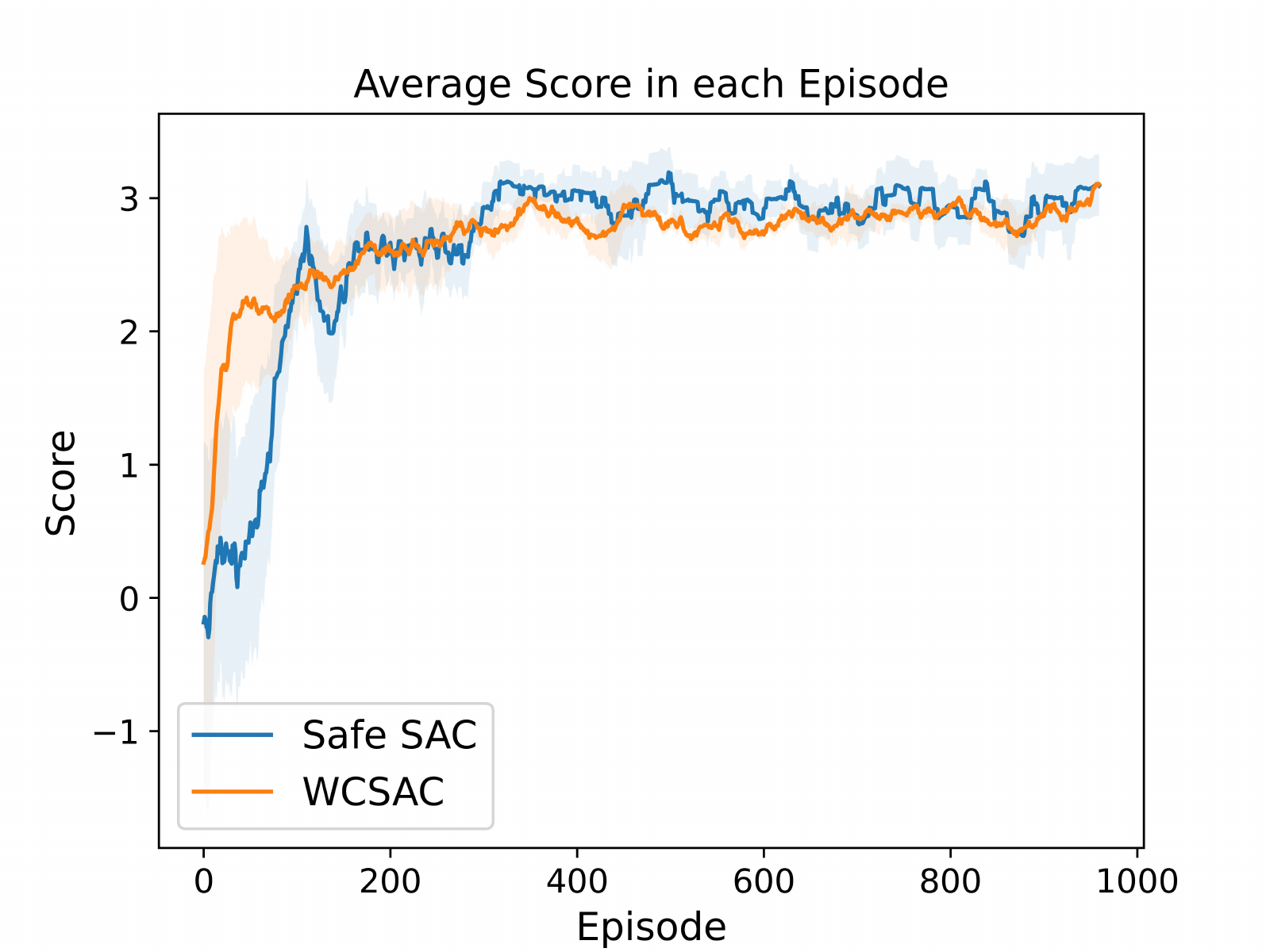}
        \label{safetygymscore}
    \end{minipage}\hfill
    \begin{minipage}{.34\columnwidth}
        \centering
        \includegraphics[scale=0.33]{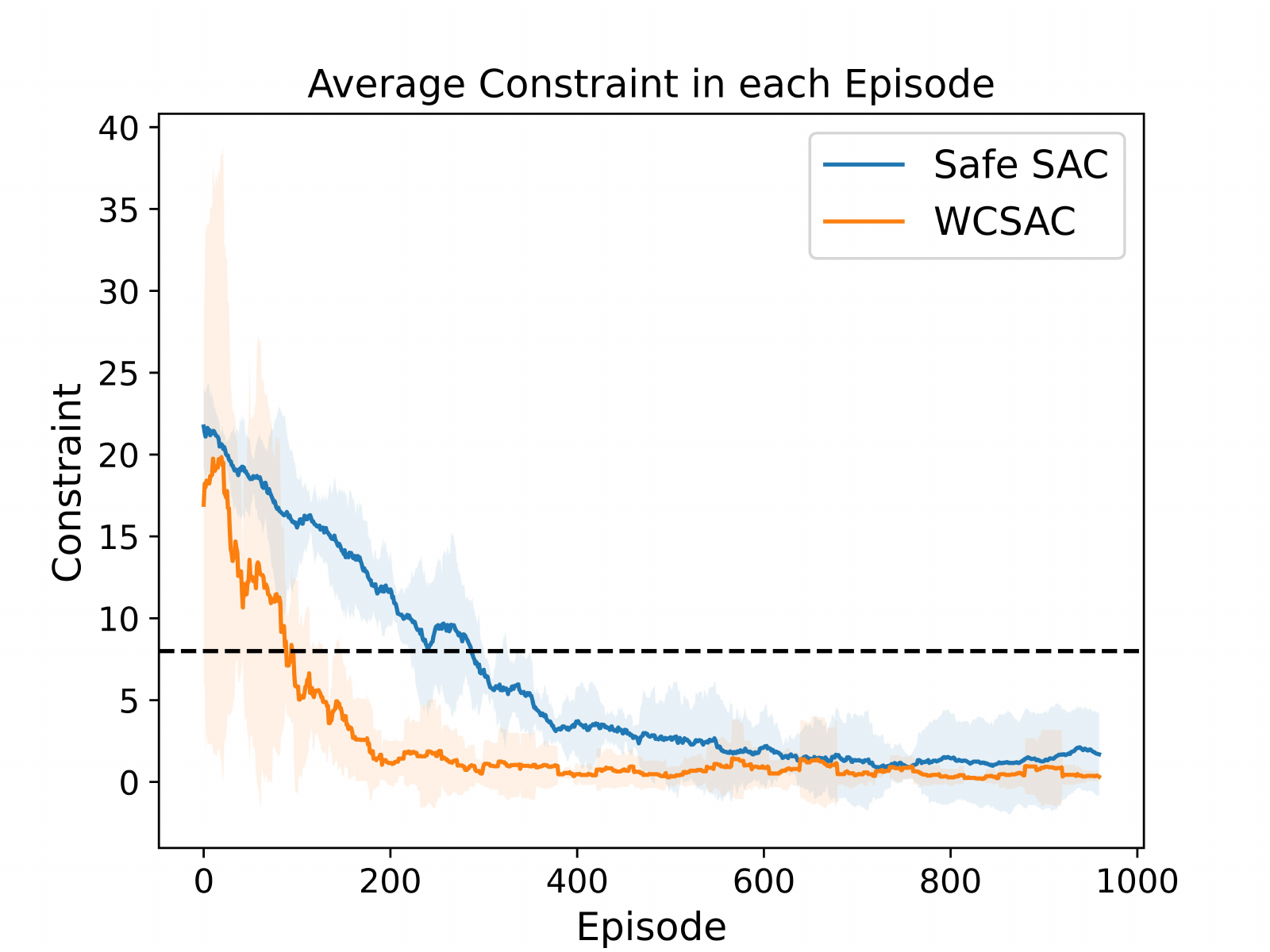}
        \label{safetygymcons}
    \end{minipage}\hfill
    \caption{Experiment with CVaR Constraint in Safety Gym Environment}
    \label{fig:safetygymcvar}
\end{figure*}

\subsection{Experiment Results on Other Environments}
Figure \ref{fig:highwaycvar}, \ref{fig:puddlecvar} show the comparisons in Highway, Puddle with risk level $\alpha = 0.9$. In most environments, Safe SAC shares similar costs with WCSAC while we outperforms WCSAC in scores as we introduce some methods in reward improvement (double Q trick and entropy).

 \begin{figure}[htbp]
    \centering
    \begin{minipage}{0.5\linewidth}
        \centering
        \includegraphics[scale=0.45]{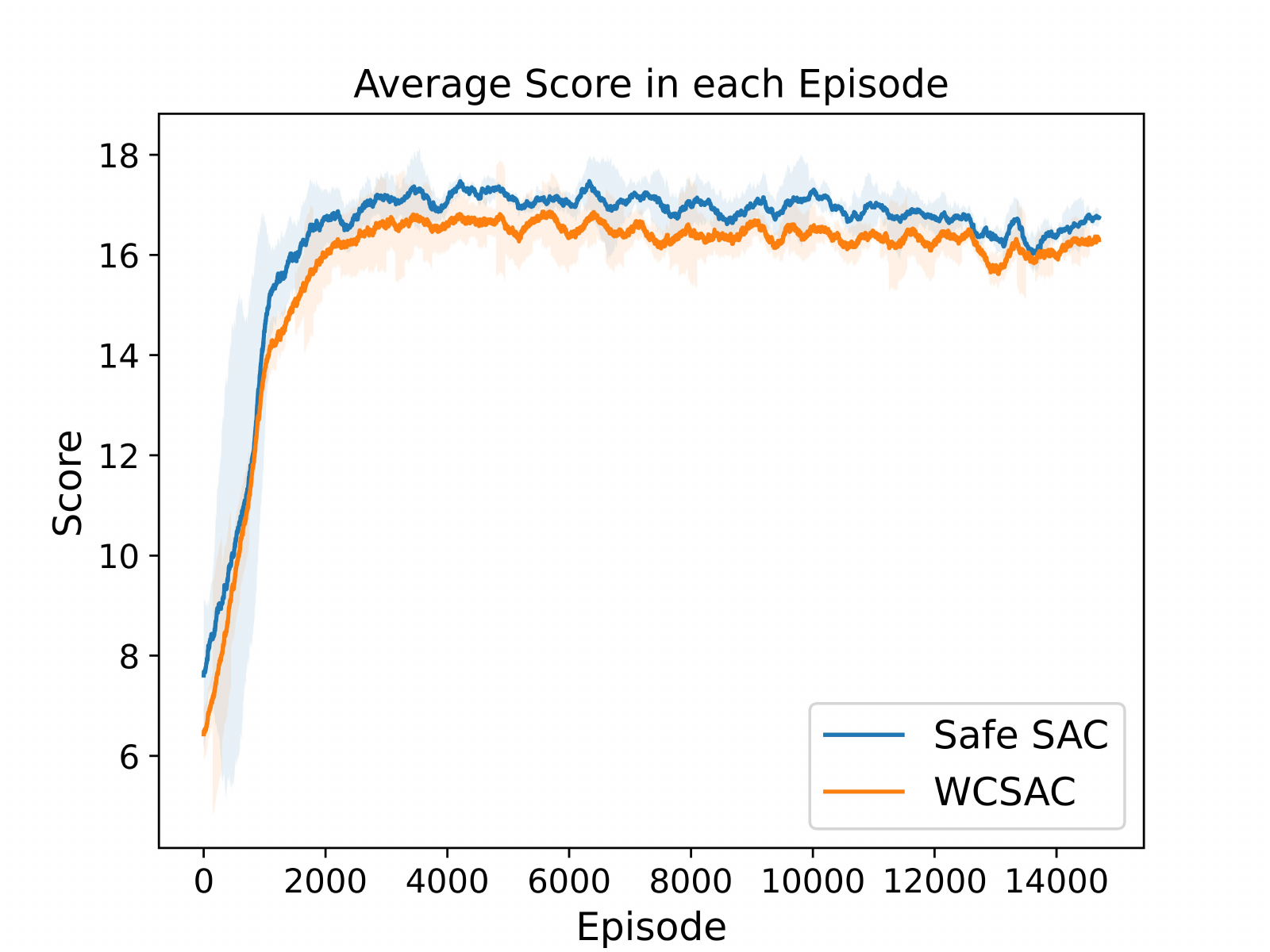}
        \label{highwaycvarscore}
    \end{minipage}\hfill
    \begin{minipage}{0.5\linewidth}
        \centering
        \includegraphics[scale=0.45]{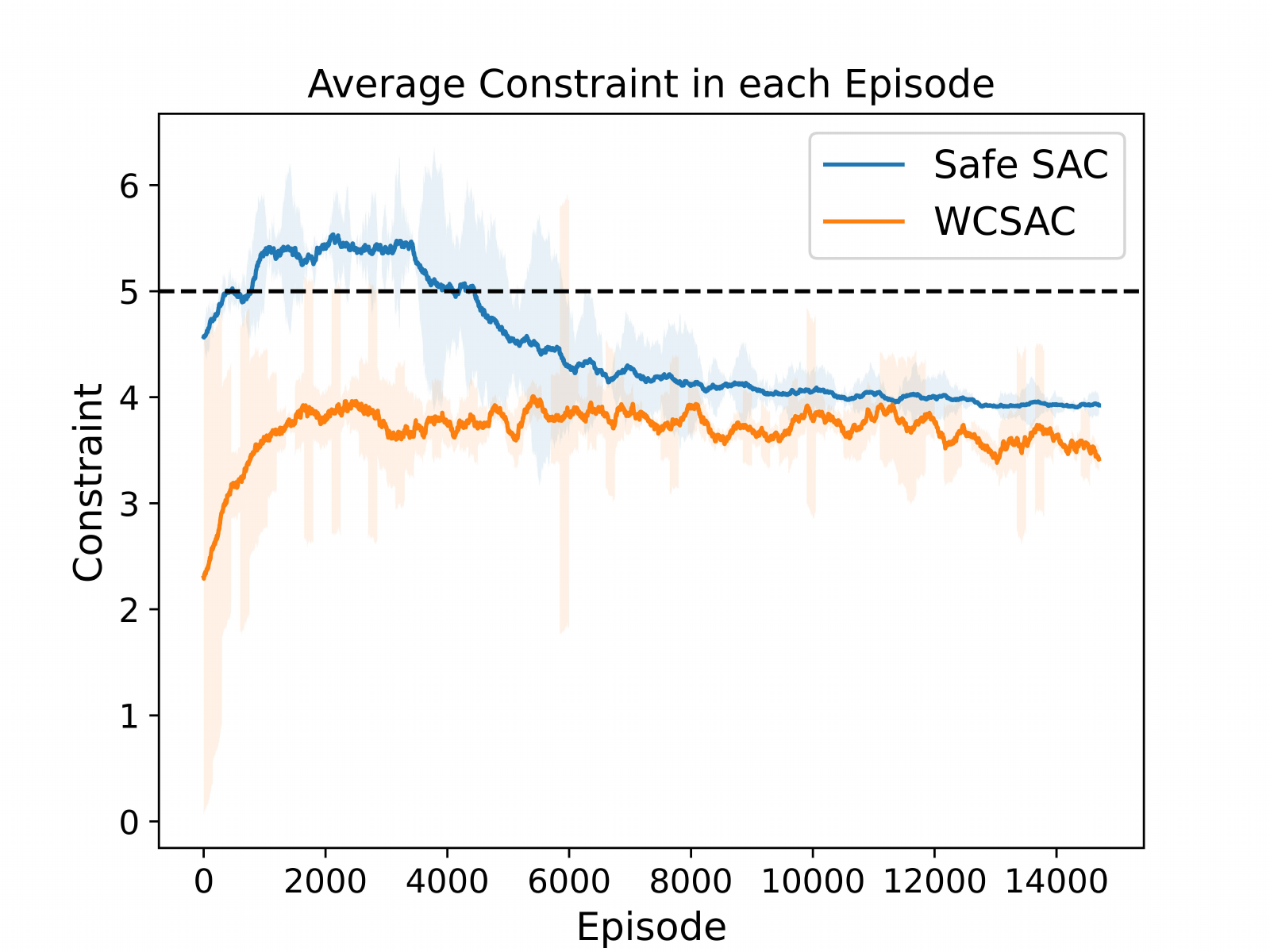}
        \label{highwaycvarcons}
    \end{minipage}\hfill
    \caption{Experiment with CVaR Constraint in Highway Environment}
    \label{fig:highwaycvar}
\end{figure}


 \begin{figure}[htbp]
    \centering
    \begin{minipage}{0.5\linewidth}
        \centering
        \includegraphics[scale=0.45]{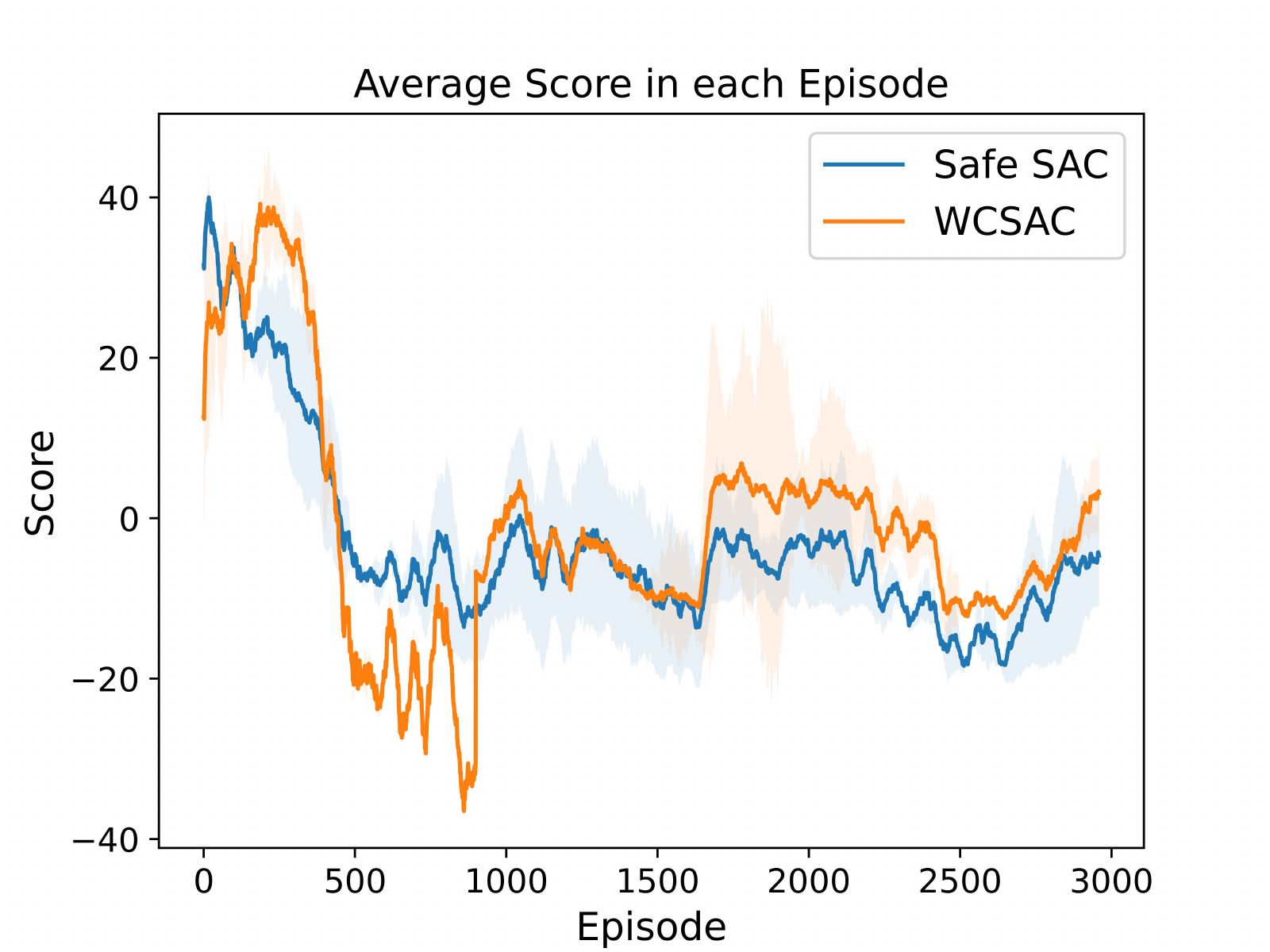}
        \label{puddlecvarscore}
    \end{minipage}\hfill
    \begin{minipage}{0.5\linewidth}
        \centering
        \includegraphics[scale=0.45]{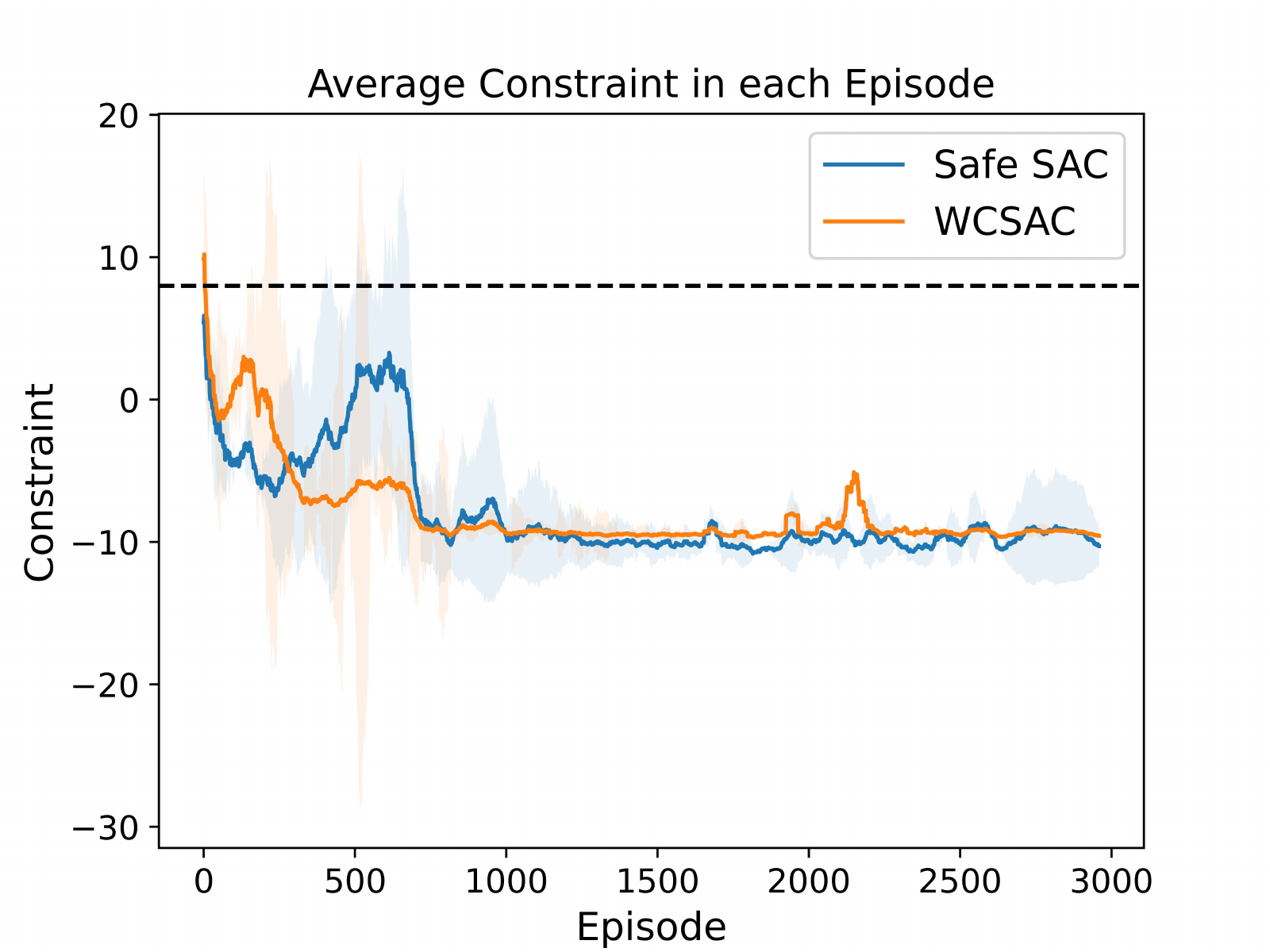}
        \label{puddlecvarcons}
    \end{minipage}\hfill
    \caption{Experiment with CVaR Constraint in Puddle Environment}
    \label{fig:puddlecvar}
\end{figure}

\end{document}